\newtheoremstyle{icmlprop}
{3pt}{3pt}{\itshape}{}{\bfseries}{}{.5em}{}
\theoremstyle{icmlprop}
\newtheorem{theorem}{Theorem}[section]
\newtheorem{proposition}[theorem]{Proposition}
\newtheorem{definition}[theorem]{Definition}
\newtheorem{property}{Property}
\icmltitlerunning{Reward-Preserving Attacks For Robust Reinforcement Learning}
\begin{document}

\twocolumn[
\icmltitle{Reward-Preserving Attacks For Robust Reinforcement Learning}




\begin{icmlauthorlist}
\icmlauthor{Lucas Schott}{isx,mlia}
\icmlauthor{Elies Gherbi}{isx}
\icmlauthor{Hatem Hajri}{safran}
\icmlauthor{Sylvain Lamprier}{leria,mlia}
\end{icmlauthorlist}

\icmlaffiliation{isx}{IRT SystemX, Palaiseau, France}
\icmlaffiliation{mlia}{MLIA, ISIR, Sorbonne Université, Paris, France}
\icmlaffiliation{safran}{Safran Electronics and Defense, Palaiseau, France}
\icmlaffiliation{leria}{LERIA,  Université d'Angers, France}

\icmlcorrespondingauthor{Lucas Schott}{lucas.schott@irt-systemx.fr}
\icmlcorrespondingauthor{Sylvain Lamprier}{sylvain.lamprier@universite-angers.fr}

\icmlkeywords{Robust Reinforcement Learning, Adversarial Training}

\vskip 0.3in
]



\printAffiliationsAndNotice{}  

\begin{abstract}
    Adversarial training in reinforcement learning (RL) is challenging because perturbations cascade through trajectories and compound over time, making fixed-strength attacks either overly destructive or too conservative. We propose reward-preserving attacks, which adapt adversarial strength so that an $\alpha$ fraction of the nominal-to-worst-case return gap remains achievable at each state. In deep RL, perturbation magnitudes $\eta$ are selected dynamically, using a learned critic $Q((s,a),\eta)$ that estimates the expected return of $\alpha$-reward-preserving rollouts. For intermediate values of $\alpha$, this adaptive training yields policies that are robust across a wide range of perturbation magnitudes while preserving nominal performance, outperforming fixed-radius and uniformly sampled-radius adversarial training.
\end{abstract}

\section{Introduction}

    Adversarial attacks in machine learning refer to deliberately crafted perturbations, that are designed to cause learned models to fail in unexpected or worst-case ways \citep{goodfellow2014explaining}. These attacks reveal fundamental limitations in the generalization and stability of neural networks, motivating a wide range of defenses and training procedures collectively known as \emph{adversarial robustness} \citep{madry2017towards}. While the phenomenon first received attention in supervised image classification, its implications extend far beyond static prediction problems, particularly to sequential decision-making systems such as reinforcement learning (RL) \cite{schott2024robust}.

    When attacking a classifier, one is typically interested in inducing a \emph{pointwise divergence} in its prediction, while remaining undetectable. This makes it possible to specify attack strengths in a relatively simple way, using classical perturbation norms (e.g., $L_0$, $L_1$, $L_2$, or $L_\infty$), which produce imperceptible modifications, at least to a human observer, and do not alter the semantic content of the input \citep{goodfellow2014explaining,carlini2017towards}. In adversarial training for classifiers, such attack strength can moreover be globally tuned, taking into account the model’s current ability to defend itself.

    In the context of adversarial training for \emph{robust RL}, the situation becomes substantially more complex. Beyond a pointwise divergence, the objective is long-term: the agent seeks to \emph{maximize cumulative reward} despite perturbations occurring along entire trajectories. If perturbations become too strong, the agent may be unable to solve the task at all, thereby losing the feedback signal required to improve. More subtly, the agent may remain trapped in a suboptimal region of the state space without a means of escaping it.  
    Beyond these learning difficulties, certain attacks may even make the task unsolvable \emph{even for an optimal policy adapted to the attack}. Conversely, if perturbations are too weak, they fail to induce meaningful distribution shifts or information loss, and adversarial training no longer promotes meaningful robustness \citep{morimoto2005robust,huang2017adversarial,pinto2017robust}.

    \begin{figure*}[ht]
        \centering
        \includegraphics[width=0.9\textwidth]{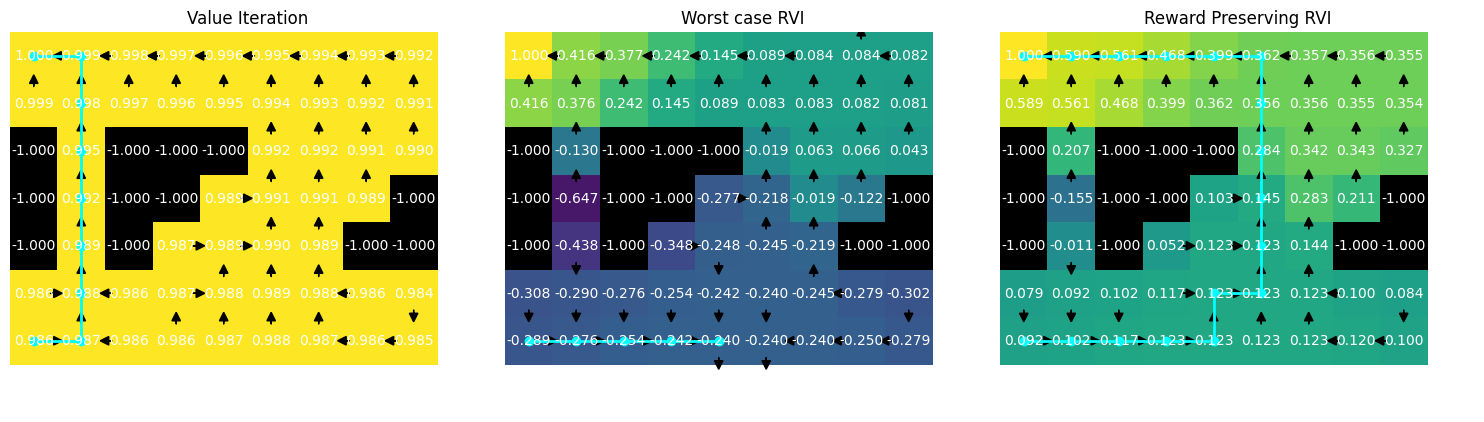}
        \caption{
            Comparison of value functions and induced optimal trajectories on a deterministic GridWorld environment for: 
            \emph{(left)} classical Value Iteration, 
            \emph{(middle)} Robust Value Iteration (RVI), and 
            \emph{(right)} our $\alpha$-reward-preserving extension of RVI ($\alpha=0.3$). 
            The environment contains a single positively rewarded goal state located in the top-left corner, while black cells correspond to terminal states with reward $-1$. A discount factor $\gamma = 0.999$ is used.  
            Robustness is enforced through an uncertainty set $\mathcal{B}$ over transition kernels, where the ambiguity radius $\eta_{\cal B}$ is computed via a Sinkhorn-regularized $W_2$ transportation cost between next-state distributions, with ground costs defined as Euclidean distances between successor states. Implementation details are provided in Appendix~\ref{sec:gridworld}.
            }
        \label{fig:preserve_RVI}
    \end{figure*}

Consider a simple illustrative example involving a narrow bridge, as depicted in the toy gridworld environment from figure \ref{fig:preserve_RVI}. In regions of the environment where many nearby states share similar 
    futures, one may remove nearly all information from the observation vector, or induce strong perturbations of specific components or dynamics, without destroying the agent's ability to recover. However, if the observation or dynamics are  fully corrupted \emph{precisely} when the agent is on the narrow bridge, 
    then crossing becomes virtually impossible, even with unlimited training. 
This highlights two qualitatively different situations: (1) if the critical region can be avoided via an alternative path with only a small reduction in cumulative reward, a robust policy should steer the agent away from danger, which can be encouraged by targeted attacks at the approach to the bridge; (2) if crossing the bridge is \emph{mandatory}, perturbations must remain weak enough to preserve a viable recovery strategy, otherwise adversarial training fails.  Motivated by this, we aim at ensuring 
two complementary forms of robustness in RL policies: \emph{local robustness}, the ability to recover from 
perturbations while maintaining long-term performance; and \emph{global robustness}, the ability to favor safer over excessively risky trajectories without invalidating the task. 


To achieve this, we propose a new approach for robust reinforcement learning that dynamically adjusts attack magnitudes based on an estimate of state criticality, with the aim of preserving task solvability from any situation. After giving background on adversarial training in RL, Section~\ref{sec:rewpreserv_intro} introduces our $\alpha$-reward-preserving attacks and analyzes them in the tabular setting. 
Section~\ref{sec:deep_approx} extends  our $\alpha$-reward-preserving training approach for the deep RL setting.  Finally, experiments in Section~\ref{sec:xps} demonstrate the effectiveness of our approach in training policies that are resilient to diverse perturbations in their deployment environment.


\section{Toward Robust RL with Reward-Preserving Attacks}
\label{sec:rewpreserv_intro}
    
    \subsection{Robust RL through Adversarial Training}
    
        In this work, we consider Markov Decision Process $\Omega = (S, A, T, R, X, O, \gamma)$, where: $S$ is the set of states in the environment; $A$ is the set of actions available to the agent; $T : S \times A \times S \rightarrow [0,1]$ is the stochastic transition function, with $T(s_+|s,a)$ denoting the probability of transitioning to state $s_+$ given state $s$ and action $a$; $R : S \times A \times S \rightarrow \mathbb{R}$ is the reward function, with $R(s_t,a_t,s_{t+1})$ the reward received by the agent for taking action $a_t$ in state $s_t$ and moving to state $s_{t+1}$; $X$ is the set of observations as perceived by the agent; $O : S \times X \rightarrow [0,1]$ is the observation function, with $O(x|s)$ denoting the probability of observing $x \in X$ given state $s$. For simplicity, we consider in the following that $O$ is information preserving (i.e., it is a deterministic bijective mapping from $S$ to $X$),  while it could be extended to partially observable problems (POMDP). We also consider that any $x \in X$ is a vector of $k$ real values $\phi(s)$, with $\phi$ the mapping from $s \in {\cal S}$ to its corresponding encoding vector. 
        
        In $\Omega$, we consider policies $\pi : S \times A \rightarrow [0,1]$, where $\pi(a|s)$ denotes the probability of selecting action $a$ given state $s$. The classical goal in RL is to discover policies that maximize the expected cumulative discounted reward $\mathbb{E}_{\tau \sim \pi^\Omega} [\sum_{t=0}^{|\tau|-1} \gamma^{t}R(s_t, a_t, s_{t+1})]$, with $\tau=\big((s_0,a_0),(s_1,a_1),...,(s_{|\tau|},\textunderscore{})\big)$ a trajectory in $\Omega$ and $\pi^\Omega(\tau)$ the probability of $\tau$ in $\Omega$ when using policy $\pi$ for selecting actions at each step. The discount factor $\gamma \in ]0;1[$ weights the importance of future rewards. We note $V^{\pi, \Omega}(s)=\mathbb{E}_{\tau \sim \pi^\Omega} [\sum_{t=0}^{|\tau|-1} \gamma^{t}R(s_t, a_t, s_{t+1}) | s_0=s]$ the value function associated to $\pi$ for rollouts starting from $s_0$,  $\pi^{*,\Omega}$ the optimal policy in $\Omega$ and $V^{*, \Omega}(s)$ its associated value function. Similarly, $Q^{\pi, \Omega}(s,a)=\mathbb{E}_{\tau \sim \pi^\Omega} [\sum_{t=0}^{|\tau|-1} \gamma^{t}R(s_t, a_t, s_{t+1}) | s_0=s, a_0=a]$ is the state-action value function for $\pi$ in $\Omega$.
        
        For achieving robust agents, adversarial training introduces adversarial agents (or attackers) $\xi$, 
        which produce perturbations for the situations encountered by the protagonist agent $\pi$. 
        The adversarial actions taken by $\xi$ are denoted 
        $A^{\xi,\Omega}=(a_i^{\xi,\Omega})_{i=1}^k \in \prod_{i=1}^k {\cal A}_i^{\xi,\Omega}$, 
        where each $a_i^{\xi,\Omega}$ corresponds to a component (e.g., a parameter of the MDP) 
        and ${\cal A}_i^{\xi,\Omega}$ represents the set of values that the environment $\Omega$ allows for this component.
        Usually, the goal is to consider worst-case distributions in an uncertainty set ${\cal B}$, to promote robustness of the protagonist agent $\pi$. The radius $\eta_{\cal B}$ of ${\cal B}$ is defined according to a given metric (e.g., in term of f-divergence or in term of $L_1$, $L_2$ or $L_\infty$  distances), either on the parameters of the nominal MDP $\Omega$ or directly on its distributions, and determines  the maximal  magnitude of perturbations allowed inside ${\cal B}$. 
        The adversarial training setting can then be formalized as: 
        \begin{align}
            \label{obj_robusttl}
            \pi^* = \arg\max_\pi
            \mathbb{E}_{\tau \sim \pi^{\Omega^{\xi^*}}}[R(\tau)] \nonumber \\
            s.t.\qquad \xi^*=\arg\min_{\xi \in {\cal B}}   \Delta^{\pi,\Omega}(\xi)
        \end{align}
        where  we note $\Omega^\xi$ the environment under attacks from adversary $\xi$. By a slight abuse of notation, $\xi \in {\cal B}$ stands as an attacker that is constrained to produce only perturbed distributions within ${\cal B}$. $\Delta^{\pi,\Omega}(\xi)$ stands as the optimization objective of the adversarial agent given $\pi$ and the training environment $\Omega$. In the following, we consider the setting where the attacker aims at minimizing the cumulative discounted reward:  $\Delta^{\pi,\Omega}(\xi)=\mathbb{E}_{\tau \sim \pi^{\Omega^{\xi^*}}}[R(\tau)]$, while our proposal could be applied for other settings. Given a worst-case objective, the attacker can act on any component of the MDP, ranging from the observation function to the transition kernel.  When acting on the transition function, the Robust Value Iteration algorithm is shown to converge asymptotically toward the optimal policy under the optimal worst-case attack for the tabular setting \citep{wiesemann2013robust}.

    \subsection{Reward-Preserving Attacks}
        \label{sec:rew_preserve_attack}
        
        From formulation in \eqref{obj_robusttl}, we note that the shape of the uncertainty set ${\cal B}$ is highly impactful regarding the optimal policy under $\Omega^{\xi^*}$. It is known that sa-rectangular uncertainty sets (i.e., ${\cal B}$ allows independent perturbations for each state-action pair of the MDP) usually leads to very hard attacks, inducing too difficult or even impossible problems $\Omega^{\xi^*}$, which in turn results in too conservative policies \cite{zouitine2024time}. Hence, many approaches propose to regularize the attacks inside ${\cal B}$, with constrained perturbations. For instance, Active Domain Randomization \cite{mehta2020active} propose to discover the worst-case global parametrization of the MDP rather than relying on sa-rectangular attacks. However, this induces difficult worst-case identification in subsets of ${\cal B}$ that are no longer convex w.r.t. the MDP parametrization (e.g., implying convex hull approximation). Other approaches propose to apply regularization through time (e.g., as in \citep{zouitine2024time}), to act on the timing of the attacks \citep{fan2025less}, or to consider restrictions over the radius of ${\cal B}$ to lower the power of the resulting attacks \citep{ma2018improved}\footnote{Further discussion about related work is given in appendix \ref{sec:related_work}.}. However, these approaches do not allow for local adaptation of attacks regarding the current state area of the agent. We claim this is crucial for effectively coping with both kinds of robustness mentioned in the introduction. To close that gap,  we propose to introduce a new kind of "reward-preserving" attacks, as defined below.     
        
        \begin{definition}[Reward-Preserving Attack]
            \label{def_preserv}
            Given a MDP $\Omega$ and an uncertainty set ${\cal B}$ in that MDP, an attack $\xi \in {\cal B}$ on this MDP is said $\alpha$-Reward-Preserving for a state $s \in {\cal S}$ and an action $a \in {\cal A}(s)$ iff there exists an optimal policy $\pi^{*,\Omega^\xi}$ adapted to $\xi$ such that: $Q^{*,\Omega^\xi}(s,a) \geq $
             $Q^{*,\Omega^{\xi^*}}(s,a) + \alpha \left( Q^{*,\Omega}(s,a) - Q^{*,\Omega^{\xi^*}}(s,a) \right)$, 
            where $Q^{*,\Omega^{\xi^*}}(s,a)$ stands as the value at $s,a$ using 
            the optimal policy $\pi^{*,\Omega^{\xi^*}}$ against the optimal attacker $\xi^*$ for $\Omega$. 
            
            The set of $\alpha$-Reward-Preserving attacks for a state-action pair $(s,a)$ is noted $\Xi_\alpha(s,a)$. 
            An attack $\xi$ is also said $\alpha$-Reward-Preserving for an MDP $\Omega$ iff for any state-action pair $(s,a)$ of $\Omega$, $\xi \in \Xi_\alpha(s,a)$. The set of  $\alpha$-Reward-Preserving attacks for an MDP $\Omega$ is denoted as $\Xi_\alpha(\Omega)$. For any pair $(s,a)$, 
            we have $\Xi_\alpha(\Omega) \subseteq \Xi_\alpha(s) \subseteq {\cal B}$.    
        \end{definition}

        That is, an $\alpha$-reward-preserving attack $\xi_\alpha \in \Xi_\alpha(\Omega)$ is an attack that guarantees that a proportion $\alpha$ of the gap between the best expected cumulative reward in the original MDP $\Omega$ and the one in the worst-case modified MDP in ${\cal B}$ remains reachable in the  resulting perturbed MDP $\Omega^{\xi_\alpha}$. Importantly, this definition differs fundamentally from a convex mixture $\alpha\,\Omega + (1-\alpha)\,\Omega^{\xi^*}$, which merely constrains $\Omega^\xi$ to stay close to the nominal MDP $\Omega$, and offers no guarantee that a fixed proportion of the reward gap is preserved at each state--action pair.  Our definition, instead, constrains the optimal Q-values, effectively combining the policies of the nominal and worst-case MDPs at the value-function level—an objective that cannot generally be realized by simple MDP interpolation.

        Given a policy $\pi$, the worst case $\alpha$-reward-preserving attack $\xi_\alpha^{*,\pi}$ is defined for each state $s$ and action $a$ as belonging to: 
        \begin{equation}
            \Xi_\alpha^{*,\pi}(s,a) = \arg\min_{\xi \in \Xi_\alpha(s,a)} Q^{\pi, \Omega^\xi}(s,a)
        \end{equation}
        We also note $\Xi_\alpha^{*,*}(s,a)$ the set of optimal worst-case $\alpha$-reward-preserving attacks set for an optimal policy against them: $\xi_\alpha^{*}(s,a) = \arg\min_{\xi \in \Xi_\alpha(s,a)} \max_\pi Q^{\pi, \Omega^\xi}(s,a)$, and $Q^{*,\Omega^{\xi_\alpha^{*}}}(s,a)$ the associated value function for the corresponding optimal policy. 
        Note that $\Xi_\alpha^{*,*}(s,a)$ is typically not convex in the space of admissible attacks, even under SA-rectangularity, because the mapping $\xi \mapsto Q^{*,\Omega^\xi}(s,a)$ is highly non-linear in the perturbed MDP.  
        As a result, convex combinations of two attacks in $\Xi_\alpha^{*,*}(s,a)$ do not generally satisfy the $\alpha$-reward-preserving constraints. 
        

        Extending classical Robust Value Iteration (RVI) \citep{wiesemann2013robust} to $\alpha$-reward-preserving uncertainty sets in the tabular setting is non-trivial, as it requires optimizing admissible attacks whose constraints depend on the optimal policies adapted to them. A naive approach would define worst-case $\alpha$-reward-preserving Q-values as
\[
\hat{Q}(s,a) := Q^{*,\Omega^{\xi^*}}(s,a) + \alpha \bigl( Q^{*,\Omega}(s,a) - Q^{*,\Omega^{\xi^*}}(s,a) \bigr),
\]
interpolating between nominal and worst-case solutions. However, such $\hat{Q}$ generally fails to satisfy the Bellman optimality equations for any realizable MDP, and therefore cannot be used directly within a Bellman iteration. Section~\ref{sec:preservingRVI} discusses this limitation and presents an extension of RVI with weaker guarantees, which serves as a foundation for our deep robust approach in Section~\ref{sec:deep_approx}.


Worst-case $\alpha$-reward-preserving attacks also preserve the reward structure for sufficiently large $\mathcal{B}$ (Section~\ref{sec:properties}).

        \begin{property} \textbf{Reward Structure Preservation} \\
            Given a sufficiently large uncertainty set ${\cal B}$, $Q^{*, \xi^*}$ is equal to a given constant minimal value $Rmin$ for every state $s \in {\cal S}$  and action $a \in {\cal A}(s)$  (i.e., the worst-case attacks fully destroy the reward signal). In that setting, worst-case  $\alpha$-reward-preserving attacks preserve the structure of the reward:
            \begin{align}
                \forall ((s,a),(s',a')) \in ({\cal S}\times {\cal A})^2: Q^{*,\Omega}(s,a) > \nonumber \\
                Q^{*,\Omega}(s',a') \implies Q^{*,\Omega^{\xi_\alpha^{*}}}(s,a) > Q^{*,\Omega^{\xi_\alpha^{*}}}(s',a')
            \end{align}
        \end{property}

Thus, for sufficiently large $\mathcal{B}$, there exists an optimal policy for $\Omega^{\xi^*_\alpha}$ that coincides with the optimal policy of the nominal MDP $\Omega$. In non-tabular settings with stochastic neural policies, 
agents can acquire \emph{local robustness} with such attacks, by learning to recover from complex or misleading situations without biasing the nominal optimal policy (which may occur with classical uncertainty sets). Under observation attacks, this manifests as learning to denoise inputs and reduce sensitivity to isolated perturbations, while under dynamics attacks it encourages actions that keep the agent away from risky states — such as the edge of a bridge —, even when transitions are manipulated. Importantly, such robustness is achieved without altering the optimal policy path, as recovery strategies remain feasible.

However, this does not address \emph{global robustness}: preserving the reward structure does not enable the agent to prefer safer paths over riskier ones when the latter yield higher nominal returns. In contrast, such preferences can be recovered in classical settings with constrained uncertainty sets $\mathcal{B}$, as formalized by the following property (Section~\ref{sec:properties}).


\begin{property}[Condition for Preferred State--Action Change] \label{prop:change}
Consider two state--action pairs $(s,a)$ and $(s',a')$ such that
\[
d_\Omega\big((s,a),(s',a')\big)
:= Q^{*,\Omega}(s,a) - Q^{*,\Omega}(s',a') > 0 .
\]
Under a worst-case $\alpha$-reward-preserving attack $\xi^*_\alpha$ defined for a given  uncertainty set ${\cal B}$, the preference between $(s,a)$ and $(s',a')$ is reversed if and only if: $d_{\Omega^{\xi^*}}\big((s',a'),(s,a)\big)
> \frac{\alpha}{1-\alpha}\,
d_\Omega\big((s,a),(s',a')\big)
+ \delta\big((s',a'),(s,a)\big)$, 
where 
$\delta((s',a'),(s,a)) := (\epsilon_{s',a'}-\epsilon_{s,a})/(1-\alpha)$, with $\epsilon_{s,a}$ denotes the gap between $Q^{*,\Omega^{\xi^*_\alpha}}(s,a)$ and its $\alpha$-reward-preserving lower bound
$\hat Q(s,a) := (1-\alpha) Q^{*,\Omega^{\xi^*_\alpha}}(s,a) + \alpha Q^{*,\Omega}(s,a)$.  
While $\delta((s',a'),(s,a)) \to 0$ as $\eta_{\mathcal B}\to 0$, the actual variation of $Q^{*,\Omega^{\xi^*}}(s,a)$ can be amplified by local gaps in successor actions, so $\delta$ variations may be dominated by the effective sensitivity of $Q$ in “dangerous” zones, which induce preference changes under $\alpha$-reward-preserving attacks. 
\end{property}

        In other words, for any given state $s$, the optimal action changes from $a$ to $a'$ under a worst-case $\alpha$-reward-preserving attack whenever the resulting increase in worst-case performance outweighs a proportion $\frac{\alpha}{1-\alpha}$ of the nominal performance loss (assuming that both $Q^{*,\Omega^{\xi_\alpha^*}}$ are close enough to their respective bounds $\hat Q$).  
        This effect also propagates through distant states: $\alpha$-reward-preserving attacks may modify optimal trajectories over long horizons, as safer routes become preferable when their nominal performance loss is offset by improved worst-case robustness within the uncertainty set $\mathcal{B}$.

        We remark that $\alpha$ acts as a weighting factor that balances nominal performance against worst-case performance under attacks in $\mathcal{B}$. 
        When $\alpha < 0.5$, robustness to worst-case scenarios dominates the decision-making process, making worst-case performance more important than nominal performance. 
        Conversely, when $\alpha > 0.5$, the situation is reversed: nominal performance becomes more influential than worst-case robustness.

        This also highlights the importance of the shape of $\mathcal{B}$, as it directly influences the resulting behaviors. 
        Under $\alpha$-reward-preserving attacks with $\alpha > 0$, some amount of the nominal reward signal is always preserved, ensuring that efficient policies can still be learned. 
        However, since deviations toward safer trajectories are only allowed when they improve worst-case values within $\mathcal{B}$, global robustness is encouraged only when the uncertainty set is well structured. 
        In particular, the worst-case performance $Q^{*,\Omega^{\xi^*}}$ should vary on a scale comparable to nominal performance differences so that robustness incentives align with meaningful behavioral changes. To illustrate this, figure \ref{fig:preserve_RVI} compares Value Iteration (VI), Robust Value Iteration (RVI) and our $\alpha$-reward-preserving extension of RVI on a deterministic GridWorld environment with attacks on the dynamics. It highlights a setting, with rather large ${\cal B}$, where classical RVI exhibits strong risk aversion. It becomes excessively conservative and avoids the path to the goal. In contrast, our $\alpha$-reward-preserving extension reshapes the value landscape to downweight excessively pessimistic transitions, enabling the agent to reach the goal while still accounting for model uncertainty.

    \subsection{Magnitude–Direction Decomposition of Perturbations}
        \label{sec:decomposition}
            
        Following observations from the previous section, we need to control the shape of the uncertainty set $\mathcal{B}$ and the scope of the attacks in order to obtain an effective training approach under $\alpha$-reward preserving attacks. To do so, we propose to consider attacks of the nominal MDP $\Omega$ that decouple (i) the choice of the magnitude $\eta \leq \eta_{\cal B}$, where $\eta_{\cal B}$ stands for  the maximal magnitude allowed for attacks (which defines the radius of the convex set $\mathcal{B}(s,a)$ that includes all allowed perturbations of the attacked MDP component for state $s$ and action $a$), and (ii) the choice of the direction $A$ of the crafted perturbation for each state-action pair. 
        Following this, an attack is defined as $\xi_{\alpha} := ( \xi_{\alpha}^{\eta}, \, \xi^{A} \bigr)$, where:\newline
        $\bullet$ $\xi^{A} : {\cal S} \times {\cal A} \to {\cal A}^{\xi, \Omega}$ is the \emph{direction selector}, which, given a state $s$, and (optionally) an action $a$, produces a normalized perturbation direction over the parameters of the perturbed component.\newline
        $\bullet$ $\xi_\alpha^{\eta} : {\cal S} \times {\cal A} \to \mathbb{R}^+$ is the \emph{magnitude selector}, assigning to each state $s$ and (optionally) action $a$ a perturbation magnitude $\eta := \xi^{\eta}_{\alpha}(s,a)$ within $\mathbb{R}^+$. This selector aims to scale the attack so that its corresponding perturbation maintains the MDP component inside   $\Xi_\alpha(s,a) \subseteq {\cal B}(s,a)$, with ${\cal B}(s,a)$ the whole set of possible attacks for $(s,a)$ and  $\Xi_\alpha(s,a)$ the subset of those that are $\alpha$-reward-preserving. Thus, $\eta := \xi^{\eta}(s,a)$ stands as the radius of the convex core ${\cal B}_\alpha(s,a)$ of ${\Xi}_\alpha(s,a)$.

        Thus, for any state $s \in \mathcal{S}$ and action $a \in \mathcal{A}(s)$, an attack is fully specified by $\xi(s,a) = (\eta, A)$ with $\eta = \xi^\eta_\alpha(s,a)$ and $A = \xi^A(s,a)$, and acts on a perturbed MDP component $\omega$ as
$\hat{\omega}(s,a) \propto \omega(s,a) + \eta A.$ 
For dynamics attacks, taking $\mathcal{A}^{\xi,\Omega}=\Delta(\mathcal{S})$ and defining
$P_\xi(\cdot \mid s,a) \propto P^\Omega(\cdot \mid s,a) + \eta A(\cdot)$
yields a convex s--a rectangular uncertainty set $\mathcal{B}_\alpha(s,a)$ corresponding to the convex core of $\Xi_\alpha(s,a)$.  
Similarly, when perturbing a global parameter vector $\omega \in \mathbb{R}^d$ of the dynamics (e.g., some factors of physical forces in a simulator), we set $\omega=\omega_0+\eta A$ with $A \in \{-1,+1\}^d$, subject to admissible bounds.  Observation attacks can be handled analogously, e.g., by defining $O_\xi(s)=\delta(\phi(s)+\eta A)$ with $\|A\|_2=1$. In all cases, 
$\mathcal{B}_\alpha(s,a)$ contains at least one boundary point 
of $\Xi_\alpha(s,a)$, corresponding to a worst-case attack in 
$\Xi_\alpha^{*,*}(s,a)$.

        In this work, we mainly focus on the definition of accurate magnitude selectors, as it is core for building $\alpha$-reward-preserving attacks. The direction selector can be defined from any given classical approach from the literature. For a given state-action pair $(s,a)$, we consider the $Q$ value from the attacker perspective, setting $\eta$ as its first action and following an $\alpha$-reward-preserving  attacker $\xi_\alpha \in \Xi_\alpha(\Omega)$ in the subsequent steps: 
        \begin{equation*}
            Q^{\pi}_\alpha((s,a), \eta)=\mathbb{E}_{\tau \sim \pi^{\Omega^{\xi_\alpha}}}[R(\tau)|s_0=s,a_0=a,\eta_0=\eta]
        \end{equation*}
        where $\eta_0$ corresponds to the magnitude of attack applied on the first state-action $s_0,a_0$ from the trajectory. At each state-action pair, the worst-case identification problem thus comes down at taking the highest $\eta$ that satisfies (2), then considering the worst-case direction within the corresponding ball. Specifically, we can note that all attacks $\xi_\alpha$ from ${\cal B}_\alpha(s,a)$ respect: $Q^*_\alpha((s,a),\xi_\alpha^\eta(s,a)) \geq (1-\alpha) Q^*_0((s,a),\eta_{\cal B}) + \alpha Q^*_1((s,a),0) $\footnote{We can remark that this is not necessarily true in $\Xi_\alpha(s,a)$}. Our general approach of approximated $\alpha$-preserving-attacks for deep reinforcement learning presented in next section builds on this magnitude-parametrized value function.

\section{Approximated Reward-Preserving Attacks for Robust Deep RL}
    \label{sec:deep_approx}
    

Moving from the tabular setting with a known MDP to the more practical deep RL regime introduces several additional challenges for the use of $\alpha$-reward-preserving attacks:
1) The construction of $\Xi_\alpha$ relies on optimal policies adapted to worst-case attacks, which are no longer accessible in the deep RL setting and must therefore be approximated using reference policies that lag behind the current policy $\pi$;
2) Q-values must be approximated (typically via neural networks) from rollouts collected under a sufficiently diverse set of attacks to enable reliable magnitude selection;
3) These Q-value estimates must be continuously updated to track the evolving state–action occupancy induced by the learning policy.
We discuss each of these challenges in the remainder of this section.

    \subsection{Reference Policies}
        \label{sec:refpol}
        
        Since optimal policies under attack are not accessible in deep RL, we approximate $\alpha$-reward-preserving attacks using a reference policy $\tilde{\pi}$.
We define $\tilde{\Xi}_\alpha(\Omega)$ as the set of attacks under which $\tilde{\pi}$ preserves at least an $\alpha$ fraction of the reward gap between its minimal and maximal achievable returns. 
That is, for each (s,a), we consider attacks within: 
        \begin{align}
            \hat{Q}_\alpha(s,a)
            &\coloneqq
            (1-\alpha) Q^{\tilde{\pi}}_0\big((s,a),\eta_{\mathcal B}\big) 
             + \alpha 
            Q^{\tilde{\pi}}_1\big((s,a),0\big)
            , \nonumber  \\[2pt]
            \tilde{\mathcal B}_\alpha(s,a)
            &\coloneqq
            \Big\{\xi 
            \;:\Big. 
            Q^{\tilde{\pi}}_\alpha\big((s,a),\xi_\alpha^\eta(s,a)\big)
            \ge \hat{Q}_\alpha(s,a)
            \Big\}. \label{xitilde}
        \end{align}
        which corresponds to the convex core of the corresponding set $\tilde{\Xi}_\alpha(s,a)$, which includes all so-called $(\tilde{\pi},\alpha)$-reward-preserving attacks.  In the following, we use 
        $\tilde{\cal B}_\alpha^\eta(s,a)$ as the set of allowed magnitudes in ${\cal B}_\alpha(s,a)$. 
    
        We have: $\Xi_\alpha \subseteq \tilde{\Xi}_\alpha$ and $\Xi_\alpha^{\pi,*} \subseteq \tilde{\Xi}_\alpha^{\pi,*}$. That is,  approximate $\alpha$-reward-preserving attacks that are based on a reference policy $\tilde{\pi}$ are less  conservative than attacks from $\tilde{\Xi}_\alpha^{\pi,*}$. However, assuming that $\pi'$ is a policy lagging behind the currently trained policy $\pi$, inducing a two-timescale learning process, we claim we can define a process that approximately concentrates on $\Xi_\alpha^{*,*}$, to obtain an $\alpha$-robust policy. In the following, policy $\pi$ and $\tilde{\pi}$ are defined as neural networks with the same architecture, with parameters from $\tilde{\pi}$ that are periodically updated via polyak updates using weights from $\pi$.

    \subsection{Approximation of Q-values for the full range of magnitudes in $[0,\eta_{\cal B}]$} 
        \label{sec:appox}

In this setting, Q-values for magnitude selection are approximated  with a neural network taking as input the state-action pair, the candidate magnitude $\eta$, and a target level of reward preservation -- typically one of $\alpha$, $0$, or $1$ as in \eqref{xitilde}. 
To obtain accurate Q-value estimates for $\tilde{\pi}$, it is necessary to sample attack magnitudes across the full range $[0, \eta_{\cal B}]$, rather than greedily from $\tilde{\Xi}_\alpha$, to ensure sufficient diversity in the training transitions. 
This allows the Q-network to predict reliably not only the value for $Q^{\tilde{\pi}}_\alpha((s,a),\xi_\alpha^\eta(s,a))$, but also at the extrema, $Q^{\tilde{\pi}}_1((s,a),0)$ and $Q^{\tilde{\pi}}_0((s,a),\eta_{\cal B})$, which are used as effective bounds for magnitude selection.  
To achieve this, we define an $(\epsilon,\alpha)$-reward-preserving sampling distribution $p^{\tilde{\pi}}_\alpha(\cdot \mid s_t,a_t)$ over $[0, \eta_{\cal B}]$, allocating $(1-\epsilon)$ of its mass inside $\tilde{\mathcal{B}}_\alpha(s_t,a_t)$.  
At each step, with direction $A_t := \xi^A(s_t,a_t)$ and magnitude $\eta_t \sim p^{\tilde{\pi}}_\alpha(\cdot \mid s_t,a_t)$, we ensure that 
$
P\big(\xi_t=(\eta_t,A_t) \in \tilde{\mathcal{B}}_\alpha(s_t,a_t) \,\big|\, \xi_t \in \mathcal{B}(s_t,a_t)\big) = 1-\epsilon .
$

In practice, we first identify the magnitude $\eta^*(s_t,a_t)$ corresponding to the worst Q-value $Q^{\tilde{\pi}}_\alpha(s_t,a_t)$ within $\tilde{\mathcal{B}}^\eta_\alpha(s_t,a_t)$ by evaluating a discrete set of candidates in $[0, \eta_{\cal B}]$ and selecting the largest one satisfying \eqref{xitilde}\footnote{We use a geometric sequence of 40 candidates starting from $\eta_{\cal B}$ with common ratio 0.75, giving more precision for small magnitudes.}.  
We then sample $\eta_t$ from an exponential distribution with rate $\lambda_t = -\log(\epsilon)/\eta^*(s_t,a_t)$, ensuring both coverage of the full admissible range and adherence to the $(\epsilon,\alpha)$-reward-preserving requirement.
 Finally, sampled magnitudes are clipped to $[0, \eta_{{\cal B}}+]$, with $\eta_{{\cal B}}+$ the radius of an extended uncertainty set.  In our experiments, using $\eta_{{\cal B}}+ > \eta_{\cal B}$ proved beneficial: it allows occasional very challenging attacks during training while still leveraging the reward-reshaping effects induced by the use of a narrow set $\cal B$ (see Property \ref{prop:change}).
This approach is of course more conservative than always using $\eta^*(s_t,a_t)$, though this effect can be mitigated by lowering $\alpha$. Alternative distributions (e.g., truncated Gaussian, mixtures, epsilon-greedy with noise) could also be considered.


    \subsection{Off-policy Updates for non-Stationary State-Action Occupancy}
        \label{sec:offpol}

As $\pi$ evolves during training on the perturbed MDP, its state-occupancy distribution gradually diverges from that of the reference policy $\tilde{\pi}$, on which the Q-network was trained.  
Consequently, $Q^{\tilde{\pi}}_\alpha$, $Q^{\tilde{\pi}}_0$, and $Q^{\tilde{\pi}}_1$ can become inaccurate on new state-action pairs, causing the selected attacks $\xi$ to either collapse to null actions or saturate at $\eta_{\cal B}$, which may result in catastrophic forgetting of acquired robustness or even nominal performance.  
To prevent this, the Q-network defining the attacks must be continuously fine-tuned throughout training. 


To adapt the Q-networks during training, we use off-policy updates from transitions $(s_t,a_t,\eta_t,A_t,r_t,s_{t+1})$ collected under $\pi$.  
For each transition, we minimize the squared $\alpha$-reward-preserving temporal difference
\[
\delta_t := Q^{\tilde{\pi}}_\alpha(\cdot) - r_t - \gamma \, \mathbb{E}[Q^{\tilde{\pi}}_\alpha(\cdot)]
\]
weighted by the importance ratio $w_t = \tilde{\pi}/\pi$ to account for the fact that the Q-networks encode values for $\tilde{\pi}$ rather than $\pi$. We distinguish two settings:  
\begin{itemize}
    \item \textbf{Observation attacks:} the attack acts on the policy input and is agnostic to $a_t$, so $Q^{\tilde{\pi}}_\alpha((s_t,a_t),\eta_t)$ reduces to $Q^{\tilde{\pi}}_\alpha(s_t,\eta_t)$, with 
    \begin{equation*}
\delta_t:=Q^{\tilde{\pi}}_\alpha(s_t,\eta_t) -  r_t - \gamma \mathbb{E}_{\eta_{t+1}}\left[ Q^{\tilde{\pi}}_\alpha(s_{t+1},\eta_{t+1})\right] \, ,
        \end{equation*}
        \begin{equation*}
            w_t=\frac{\tilde{\pi}(a_t|\phi(s_t)+\eta_t A_t)}{\pi(a_t|\phi(s_t)+\eta_t A_t)} \, .
        \end{equation*}
        where the expectation on $\eta_{t+1}$ is taken according to $p^{\tilde{\pi}}_\alpha(.|s_{t+1})$.
    \item \textbf{Dynamics attacks:} the attack can exploit the agent's action, so
    \begin{align*}
            \delta_t \;:=\;&
            Q^{\tilde{\pi}}_\alpha\big((s_t,a_t),\eta_t\big) - r_t \\
            &-\; \gamma \,\mathbb{E}_{a_{t+1}} \mathbb{E}_{\eta_{t+1}}
            \Big[\, Q^{\tilde{\pi}}_\alpha\big((s_{t+1},a_{t+1}),\eta_{t+1}\big) \Big] \, ,
        \end{align*}
        \begin{equation*}
            w_t=\frac{\tilde{\pi}(a_t|s_t)}{\pi(a_t|s_t)} \, .
        \end{equation*}
        where $a_{t+1} \sim \tilde{\pi}(.|s_{t+1})$ and the expectation on  $\eta_{t+1}$ is taken according to $p^{\tilde{\pi}}_\alpha(.|s_{t+1},a_{t+1})$.
\end{itemize}

        \noindent We remark that, rather than using the magnitude $\eta_{t+1}$ from the stored transition, we define $\delta_t$ with a target that considers a recomputed $\eta_{t+1}$. This is important in order to account for the evolution of the Q-networks and to maintain consistency of the Bellman updates.
        Next, while we could use a single sample of $\eta_{t+1}$ at each step, we instead approximate the full expectation to reduce variance. In our setting, this can be done using the same sequence of magnitude candidates already used to define $\eta^*_t$ (see Section~\ref{sec:appox}): We approximate $\mathbb{E}_{\eta\sim p^{\tilde\pi}_\alpha} [Q^{\hat\pi}_\alpha((s_t,a_t),\eta)]$ by applying a trapezoidal rule on our geometric grid $\{\eta^i\}$ for the clipped exponential density on $[0,\eta_{{\mathcal B}+}]$, and adding the point mass at $\eta_{{\mathcal B}+}$ of size $e^{-\lambda_t\eta_{{\mathcal B}+}}$. 
        

        Finally, we update the dynamic Q-network $Q_{\psi_\alpha}$ for $Q^{\tilde{\pi}}_\alpha$ by performing gradient descent on $w_t \, \delta_t^2$.  
While $Q^{\tilde{\pi}}_0$ and $Q^{\tilde{\pi}}_1$ could in principle be optimized similarly by restricting transitions to $\eta_t = \eta_{\mathcal B}$ and $\eta_t = 0$, respectively, and setting $\eta_{t+1} = \eta_t$, this filtering is highly inefficient in practice.  
Instead, we employ two separate Q-networks: 
(1) a \emph{dynamic} network $Q_{\psi_\alpha}(\cdot,\eta)$ modeling the expected return under variable $\alpha$-reward-preserving magnitudes, and 
(2) a \emph{static} network $Q_{\psi_c}(\cdot,\eta)$ conditioned on a fixed magnitude, such that $Q_{\psi_c}(\cdot,\eta_{\mathcal B}) \approx Q^{\tilde{\pi}}_0(\cdot,\eta_{\mathcal B})$ and $Q_{\psi_c}(\cdot,0) \approx Q^{\tilde{\pi}}_1(\cdot,0)$.  
The static network allows sharing information across all constant-magnitude transitions in $[0, \eta_{\mathcal B}]$, leveraging every collected transition even though only the extremes are needed to define the $\alpha$-reward-preserving magnitude sets.

        Algorithms~\ref{algo:approx_dynamics} and~\ref{algo:approx_observations} (in Appendix~\ref{DeepPreserve}) present the complete procedures for dynamics and observation attacks  respectively, following all the main steps described in this section.

\section{Experiments}
\label{sec:xps}

    We study robustness to adversarial perturbations of bounded magnitude $\eta$,  starting from a pre-trained (baseline) agent. We begin by validating the behavior of $\alpha$-preserving attacks for a fixed policy. Next, 
    we experience adversarial fine-tuning of  agents using our dynamic $\alpha$-preserving attacks. 
    Finally, we compare robustness of our obtained policies against agents trained using non-adaptive baselines, 
    using either a constant or a uniformly sampled magnitude.
    Experiments reported focus only on observation attacks, experiments for dynamics attacks are left for future work.  
    
    \subsection{Experimental setup}
        We evaluate on \emph{HalfCheetah-v5} using a pre-trained \emph{SAC} \citep{haarnoja2018soft} agent as our baseline. The adversary applies observation perturbations constrained by $L_2$ radius $\eta \le \eta_{\cal B}$ using the FGM\_QAC attack (results with other attacks are given in appendix \ref{sec:supplementary_material_alpha}). This attack method is a variant of the FGSM attack \citep{goodfellow2014explaining} for actor attacks of Q actor-critic agents. It targets both networks jointly by back-propagating gradients from the critic network $\tilde{q}$ through the actor network $\tilde{\pi}$ to the input observation $x$ of the actor:
        \[
x' = x - \eta\,
\frac{
\nabla_x \tilde q\big(x^\perp,\mu_{\tilde\pi}(x)\big)
}{
\left\|\nabla_x \tilde q\big(x^\perp,\mu_{\tilde\pi}(x)\big)\right\|_2
}, 
\qquad
x^\perp := \mathrm{stopgrad}(x)
\]
where 
$\mu_{\tilde{\pi}}$ stands as the mean parameter of the gaussian distribution produced by the reference SAC actor policy ${\tilde{\pi}}$. That is, it seeks at the direction in the observation space that results in an action that most decreases the expected return. Note that  $\tilde{q}$ is trained from a buffer of non-perturbed observations, while the actor $\tilde{\pi}$ uses perturbed ones. 
        
        We report episodic return under either no perturbation ($\eta=0$) or under test-time attacks at magnitude $\eta$, averaging over \emph{20} evaluation episodes per checkpoint (plots show rolling average evaluation over the 5 last checkpoints) and per training seed (3 seeds per setting used in our experiments). Experiments were performed on Nvidia V100 GPU devices. The code will be publicly released after acceptance. All hyper-parameters used are given in appendix \ref{sec:hparams}.

    \subsection{Reward-preserving attacks via a learned $Q^{\tilde{\pi}}_\alpha((s,a),\eta)$ and reward preservation target $\alpha$}
    
        We first analyze the behavior of reward-preserving attacks for a fixed baseline agent (i.e., $\tilde{\pi}=\pi$ is constant over the whole training process). In these experiments, we 
        train critics of the form $Q^{\tilde{\pi}}_\alpha((s,a),\eta)$ for various preservation levels $\alpha$, in order to assess the impact of $\alpha$ on 
        selected attack magnitudes and policy performances. 
        
        Figure~\ref{fig:alpha_calibration} reports the performance of the baseline agent under various values of $\alpha$ for reward-preserving attacks (using $\eta_{\cal B} = \eta_{\cal B+}=0.3$). The results validate that the mechanism behaves as intended: for smaller $\alpha$, the attack is more aggressive, yielding lower achieved return and larger average chosen magnitude; as \linebreak[4] $\alpha \to 1$, attacks become mild, returns increase, and the average chosen magnitude decreases.
        
        \begin{figure}[t]
            \centering
            \includegraphics[width=0.9\linewidth]{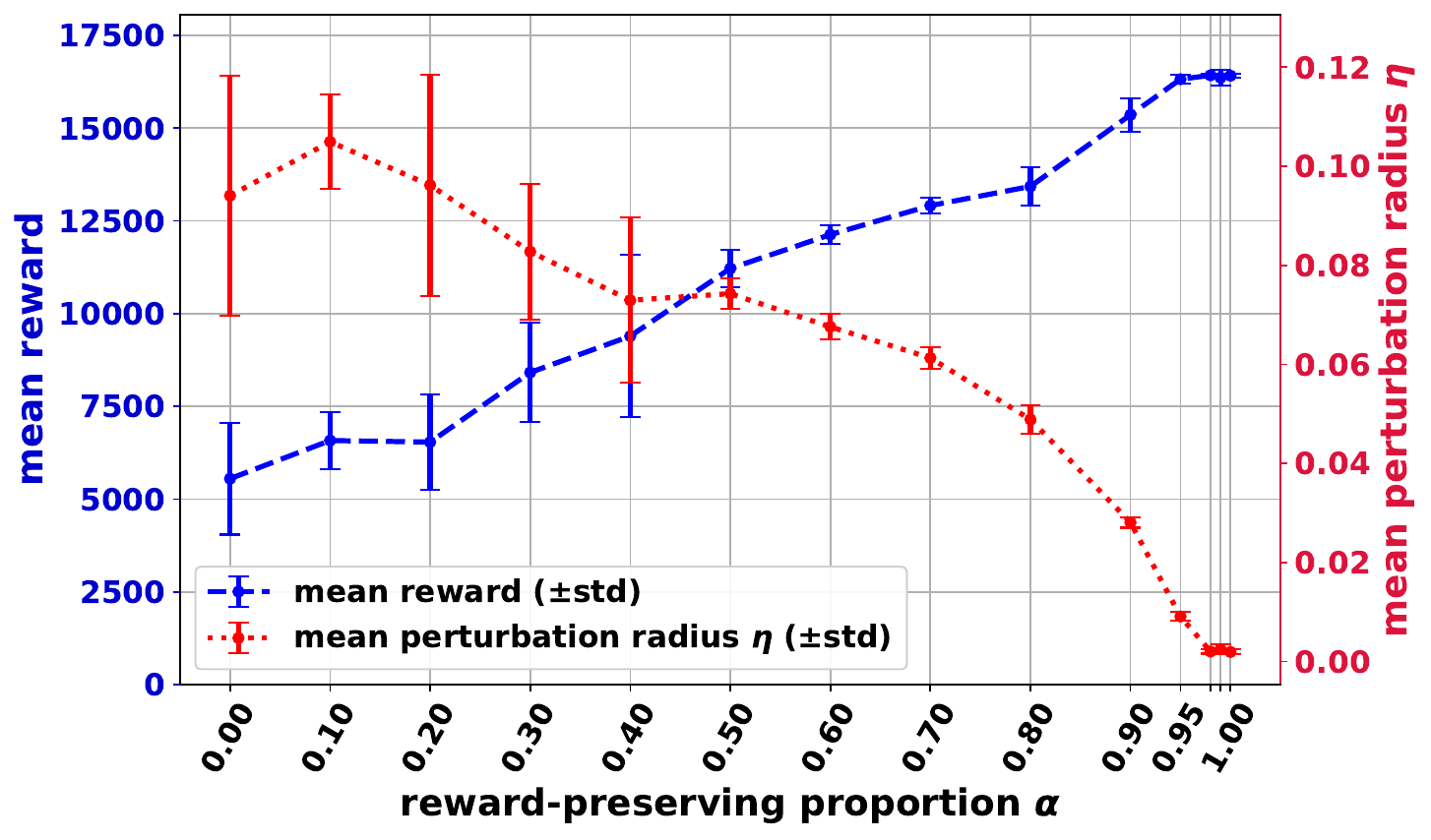}
            \caption{Calibration of reward-preserving $\alpha$-attacks on a pre-trained agent using $Q^{\tilde{\pi}}_\alpha((s,a),\eta)$.
Smaller values of $\alpha$ induce larger average perturbation magnitudes and lower returns, while $\alpha \to 1$ recovers nominal performance.}
            \label{fig:alpha_calibration}
        \end{figure}

    

    \subsection{Adversarial $\alpha$-reward-preserving training} 

In this section, we consider the full $\alpha$-reward-preserving adversarial training process introduced in Section~\ref{sec:appox}.
Starting from a pre-trained agent $\pi$, we fine-tune it against adversaries enforcing different levels of reward preservation $\alpha$, which are trained jointly with $\pi$. 
The complete training procedure is summarized in Algorithm~\ref{algo:approx_observations}.

All fine-tuning learning curves are reported in Appendix~\ref{sec:supplementary_material_alpha}. They show that best performances are consistently obtained using  $\eta_{\cal B}=0.3$ and $\eta_{\cal B+}=0.5$. These values are thus used for all reward-preserving results presented in the following.

Figure~\ref{fig:alpha_profiles} shows the performance of the $\alpha$-trained agents after 30M environment steps, evaluated across the full range of test-time perturbation magnitudes $\eta$. Agents trained with intermediate $\alpha$ (within $[0.3;0.7]$) maintain strong performance throughout the range, with the best results observed for $\alpha=0.6$. 
Intermediate $\alpha$-reward-preserving attacks provide a level of challenge adapted to the agent’s capabilities across different regions of the environment, avoiding overly aggressive perturbations in difficult areas while still encouraging exploration in easier or more regular regions.


        \begin{figure}[t]
            \centering
            \includegraphics[width=0.9\linewidth]{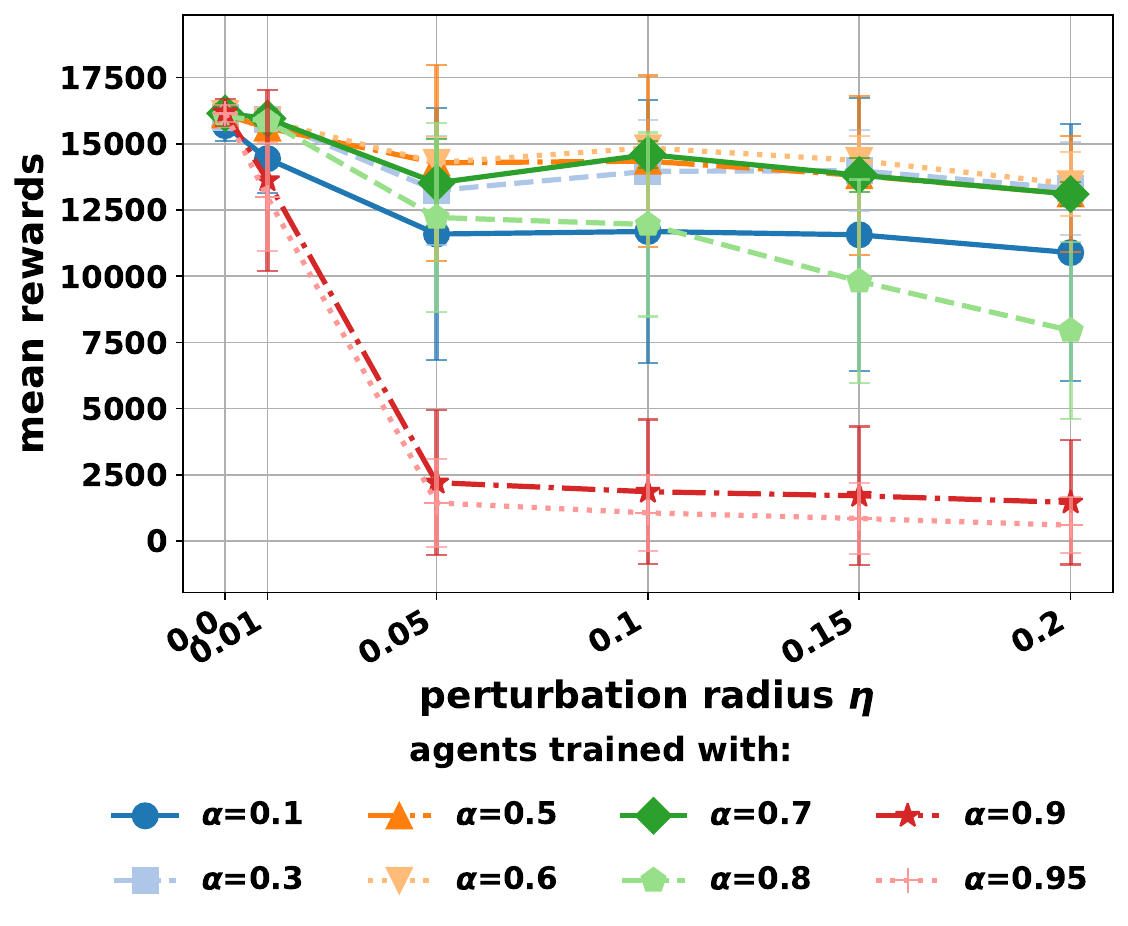}
            \caption{Robustness profiles of $\alpha$-trained agents under varying evaluation $\eta$: Agents trained with intermediate $\alpha$ are the more robust over a broader range of magnitudes $\eta$.}
            \label{fig:alpha_profiles}
        \end{figure}

    \subsection{Comparison with non-adaptive approaches}

        We then compare our method to standard adversarial training baselines with constant or random perturbation magnitudes. Starting from the pre-trained agent, we fine-tune agents against FGM\_QAC using either (i) a \emph{constant} perturbation magnitude  $\eta$, or (ii) a training magnitude  sampled uniformly as $\eta \sim \mathcal{U}(0,\eta_{\cal B})$ to encourage robustness across perturbation magnitudes at test-time. Fine-tuning curves for these baselines are reported in Appendix~\ref{sec:supplementary_material_const}.
        
        To summarize cross-$\eta$ robustness, Figure~\ref{fig:fixed_eta_profiles} evaluates the agents against a range of perturbation magnitudes $\eta$. We find that constant-$\eta$ adversarial training produces \emph{specialized} policies: each agent is robust for evaluation settings using the same perturbation magnitude as the one  it was trained on, and robustness does not transfer well to significantly smaller or larger attack strengths. In contrast, training with uniformly sampled attack magnitudes yields policies that are robust over a broader range of settings for $\eta$ in test environments.
        
        However, despite this improved coverage, these uniformly-trained agents remain consistently below our best reward-preserving $\alpha$-trained policies. The figure reports the performances of an agent trained with our $\alpha$-reward-preserving process, using $\alpha=0.6$, which achieves significantly higher returns under perturbation (for any constant $\eta > 0$ at test-time) and also maintains stronger nominal performance (when $\eta=0$). 
        
        \begin{figure}[t]
            \centering
            \includegraphics[width=0.9\linewidth]{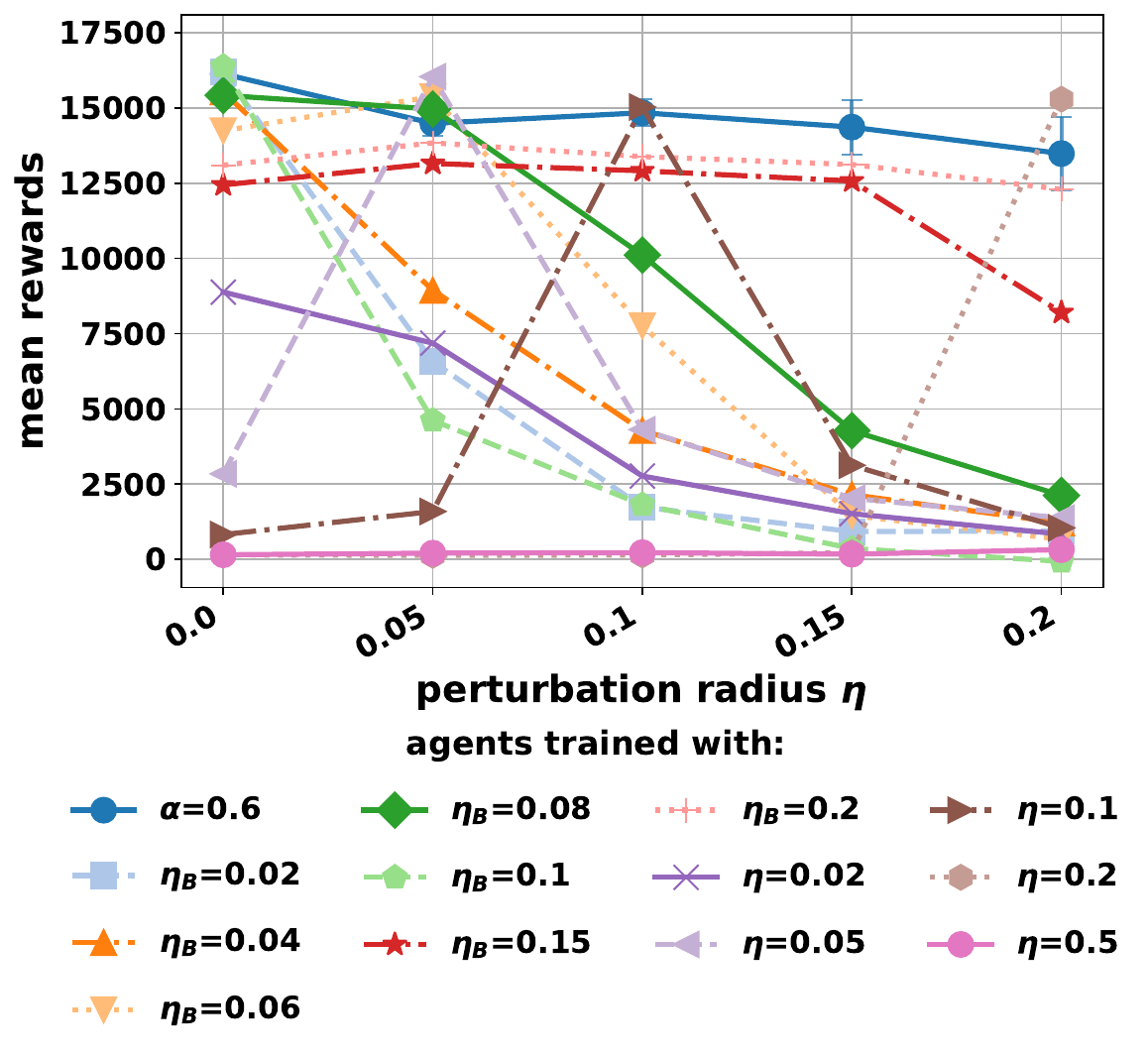}
            \caption{Robustness profiles under varying evaluation $\eta$. Agents trained with a constant magnitude  $\eta$, are denoted $\eta$ in the legend. 
            Agents trained with random magnitude 
            $\eta \sim \mathcal{U}(0,\eta_{\cal B})$ are denoted $\eta_{\cal B}$ in the legend. 
            Our reward-preserving approach with $\alpha=0.6$ achieves higher return across perturbed cases and preserves better nominal performance.}
            \label{fig:fixed_eta_profiles}
        \end{figure}

        Experiments conducted 
        with other adversarial attack methods, whose 
        results are reported in Appendix~\ref{sec:supplementary_material} report the same tendencies.
        
        \noindent Overall, reward-preserving $\alpha$-training yields the best robustness/nominal trade-off: it avoids over-specialization for a specific $\eta$, while preserving strong performance in the nominal setting.

\section{Conclusion}
    
    We introduced \emph{reward-preserving attacks} as a principled way to control adversarial strength in reinforcement learning without collapsing the learning signal. Our formulation constrains the adversary so that, at each state-action pair, an $\alpha$-fraction of the nominal-to-worst-case return gap remains achievable for a policy adapted to the attack. To apply this idea in deep RL, we proposed a magnitude-direction decomposition and an approximate procedure that learns a critic $Q((s,a),\eta)$ to adapt attack strength online to meet a reward-preservation target.

Empirically, $\alpha$-reward-preserving attacks behave as intended: decreasing $\alpha$ increases the average attack magnitude and reduces returns in a controlled way. Adversarial fine-tuning against these attacks preserves strong nominal performance while producing policies robust across a wide range of perturbations, whereas constant-magnitude or uniformly sampled attacks either overfit or sacrifice performance. Intermediate levels of preservation provide the best trade-off.  

These results demonstrate the potential of $\alpha$-reward-preserving attacks as a principled mechanism for adaptive robustness, exposing agents to challenges matched to their capabilities while avoiding catastrophic failures. Beyond the current benchmarks, these attacks offer a promising tool for improving generalization, safe exploration, and transfer to novel environments, as they expose the agent to controlled yet meaningful stress-tests. Future work  will extend this approach to high-dimensional continuous control, multi-agent scenarios, and other types of perturbations, including dynamics perturbations, further leveraging the balance between challenge and task solvability.



\section*{Impact Statement}

    This work introduces $\alpha$-reward-preserving attacks as a principled mechanism for adaptive robustness in reinforcement learning. By controlling the magnitude of adversarial perturbations according to the agent's capabilities, these attacks allow policies to experience meaningful challenges without compromising the solvability of the task.  
    
    The immediate societal impact is primarily positive: this method can improve the reliability, safety, and generalization of RL agents in real-world applications, including robotics, autonomous systems, and other safety-critical domains. By training agents under controlled, adaptive perturbations, the approach reduces the likelihood of catastrophic failures when deployed in unpredictable environments.  
    
    Potential risks include misuse in settings where robust agents are deployed for harmful purposes, and the fact that $\alpha$-reward-preserving attacks can make the agent perceive normal conditions while it is actually being perturbed, which could be exploited in adversarial or deceptive scenarios. Overall, our work highlights a tool for safer and more resilient RL, emphasizing responsible usage.

\section*{Acknowledgments}

    This work has been supported by the French government under the “France 2030” program, as part of the SystemX Technological Research Institute within the Confiance.ai program. This work was granted access to the HPC resources of IDRIS under the allocation AD011015866 made by GENCI.

\bibliography{biblio}
\bibliographystyle{icml2026} 

\newpage
\appendix
\onecolumn

\section{Appendix}

    \subsection{Related Work: Regulating Adversarial Attacks for Robust RL}
        \label{sec:related_work}
        
        This appendix complements Section~\ref{sec:rewpreserv_intro} by situating our reward-preserving attacks within the literature on adversarial robustness in RL. A recurring theme is that, unlike supervised learning, adversarial perturbations in RL compound along trajectories: overly aggressive attacks can make the task effectively unsolvable and collapse the learning signal, while overly weak attacks fail to induce meaningful robustness \citep{huang2017adversarial,pinto2017robust,morimoto2005robust}. We organize prior work along three axes aligned with Section~\ref{sec:rewpreserv_intro}: (1) \emph{observation vs.\ dynamics} attacks, (2) \emph{local vs.\ global} control of perturbations, including sa-rectangularity and step-level perturbations vs. episode-level perturbations, and (3) \emph{regularizations} that explicitly regulate attack strength.
        
        \subsubsection{Observation vs.\ dynamics attacks}
            \label{sec:rw_obs_vs_dyn}
            
            \paragraph{Observation-space attacks.}
                Observation attacks perturb the agent’s inputs, e.g., by adding norm-bounded noise to state features or pixels. Early work adapted supervised adversarial examples to RL and showed that policies can be highly sensitive to small input perturbations, motivating adversarial training in the observation space \citep{huang2017adversarial,goodfellow2014explaining,carlini2017towards}. In deep actor-critic settings, gradient-based perturbations are natural when the attacker has white-box access, yielding FGSM/PGD-style attacks or variants that backpropagate critic signals through the actor to craft damaging perturbations (as in our FGM\_QAC setting).
                Beyond gradient-based attacks, observation perturbations can also be generated by \emph{adversarial policies} trained with RL in a black-box manner: an attacker network observes $x_t$ (and optionally additional side information) and outputs an additive perturbation $a_t^{\xi,X}$, producing $x'_t=x_t+a_t^{\xi,X}$. Concurrent works such as OARLP \citep{russo2021towards}, AdvRL-GAN \citep{yu2022natural}, and ATLA \citep{zhang2021robust} follow this principle, typically optimizing an untargeted adversarial objective (often the negative of the protagonist return) without requiring access to gradients of the victim policy.

            \paragraph{Dynamics (transition) attacks and robust MDPs.}
                A second family targets the transition kernel (or a parameterization of it) and is closely related to robust MDPs \citep{wiesemann2013robust}. Robust value iteration and related dynamic-programming methods provide guarantees under structured uncertainty sets, but their conservatism and the realism of the induced worst-case dynamics strongly depend on the geometry and rectangularity assumptions of the uncertainty set \citep{wiesemann2013robust}. In deep RL, dynamics perturbations are often implemented through \emph{adversarial policies} that inject disturbances (forces, parameter shifts, etc.) based on the current state, resulting in a minimax game between a protagonist and an antagonist \citep{pinto2017robust}. RARL \citep{pinto2017robust} is a canonical example: the adversary is trained with reward $r_\xi=-r$ to apply disturbances that reduce the protagonist’s return. Extensions refine the adversary objective (e.g., risk-aware variants) or the perturbation mechanism, but share the same core idea: the transition dynamics are modified online by an adversarial agent.
                A complementary line of work attacks dynamics via \emph{adversarial domain randomization}, where an environment parameter vector is sampled (typically per episode) from an uncertainty set. ADR \citep{mehta2020active} learns a challenging sampling distribution over parameters (e.g., via particles and SVPG), and M2TD3 \citep{tanabe2022max} conditions the critic on environment parameters to enable gradient-based search for worst-case configurations within the parameter set. Compared to per-step adversarial policies, these approaches often yield more \say{plausible} dynamics shifts by restricting perturbations to a low-dimensional parameterization.
        
        \subsubsection{Local vs.\ global control of perturbations, sa-rectangularity}
            \label{sec:rw_local_vs_global}
            
            A key distinction for robust RL is whether perturbations can be chosen \emph{locally} at each time step and state-action pair, or only \emph{globally} at the episode (or slower) timescale.
        
            \paragraph{Local (per-step) adversaries and sa-rectangularity.}
                When the adversary can select perturbations at every step conditioned on the current state (and possibly action), the induced uncertainty is effectively sa-rectangular: the adversary can independently choose worst-case disturbances for each encountered state-action pair. This is the implicit setting of many adversarial-policy methods (including RARL) and of robust MDP formulations with sa-rectangular uncertainty sets \citep{pinto2017robust,wiesemann2013robust}. While this yields strong worst-case robustness, it can be excessively pessimistic: compounding, locally worst-case perturbations may drive the agent into unrecoverable regions and make training unstable or uninformative.
            
            \paragraph{Global (episode-level or slowly varying) perturbations.}
                In contrast, domain randomization methods typically choose an environment configuration once per episode, reducing the adversary’s ability to ``chase'' the agent online. Adversarial domain randomization (ADR) \citep{mehta2020active} and worst-parameter search methods such as M2TD3 \citep{tanabe2022max} fall into this category. These global perturbations often preserve solvability more readily, but they may miss critical state-local vulnerabilities that only appear when an attacker can time perturbations precisely. A middle ground constrains how quickly perturbations can change over time, producing disturbances that are neither fully local nor fully global; for instance, TC-RMDP bounds the rate of change of an adversarially controlled dynamics parameter, enforcing temporally correlated perturbations and mitigating the drawbacks of fully rectangular uncertainty \citep{zouitine2024time}.
        
        \subsubsection{Regularizations that prevent overly aggressive attacks}
            \label{sec:rw_regularizations}
            
            The above distinctions motivate a third axis: \emph{how} methods regulate the adversary so it remains challenging yet does not break learnability.
            
            \paragraph{Magnitude penalties and constrained adversaries.}
                A direct approach is to penalize attack magnitude or constrain adversary updates. SC-RARL adds a penalty term to the adversary objective to discourage large perturbations while still minimizing the protagonist return, thereby reducing training collapse under overly aggressive disturbances \citep{ma2018improved}. Related ideas in distributionally robust RL constrain the adversary within a Wasserstein ball (or similar trust-region) around previous dynamics, producing adversarial-yet-plausible shifts rather than arbitrarily destructive ones \citep{abdullah2019wasserstein}. In generative environment design, DRAGEN learns a generator of environments and performs adversarial search under a distributional constraint, again aiming to avoid unrealistic worst-case jumps \citep{ren2022distributionally}.
                
            \paragraph{Adaptive curricula and performance-based regulation.}
                Another family regulates attack strength according to the protagonist’s learning progress. For example, A2P-SAC adaptively modulates the effective attacker influence so that attacks strengthen when the agent is performing well and weaken when training destabilizes, reducing the need for manual tuning and avoiding loss of learning signal \citep{liu2024robust}. More broadly, curriculum and teacher-student environment design methods propose tasks/environments of increasing difficulty, attempting to keep training near the frontier of the agent’s capabilities; in UED/PAIRED, a teacher proposes environments that maximize regret between agents, implicitly controlling difficulty to remain informative \citep{dennis2020emergent}.
                
            \paragraph{Positioning reward-preserving attacks.}
                Most prior regulation mechanisms act either (i) \emph{globally} (episode-level parameter shifts, slowly varying disturbances), or (ii) via \emph{generic} regularizers (penalizing magnitude or constraining adversary updates) that do not explicitly account for \emph{state criticality}. Our setting highlights that solvability can be destroyed in specific regions (e.g., on a narrow bridge), while large perturbations may be tolerable elsewhere. Reward-preserving attacks address this by regulating adversarial strength \emph{locally} through a value-based feasibility constraint: at each state-action pair, the adversary is restricted so that an $\alpha$ fraction of the nominal-to-worst-case return gap remains achievable (Definition~\ref{def_preserv}). This yields a state-conditional attack magnitude that is strong in ``safe'' regions and automatically reduced in critical regions where aggressive perturbations would eliminate any viable recovery strategy. In this sense, our approach can be seen as a learnability-preserving alternative to fixed-radius (often too destructive) and uniformly sampled-radius (often too diffuse) adversarial training, while remaining compatible with both observation attacks (as in our experiments) and dynamics uncertainty sets (as in robust MDP formulations).

    \newpage

        \subsection{Hyper-parameters}
        \label{sec:hparams}
        
        
        \begin{table}[!ht]
            \centering
            \caption{Environment, initialization, and training budgets.}
            \label{tab:hparams_budget}
            \begin{tabular}{@{}ll@{}}
            \toprule
            \textbf{Item} & \textbf{Value} \\
            \midrule
            Environment & HalfCheetah-v5 \\
            RL framework & StableBaselines3 \\
            RL algorithm & SAC \\
            Nominal pre-training steps & 20M \\
            \bottomrule
            \end{tabular}
        \end{table}
        
        \begin{table}[H]
            \centering
            \caption{Architecture of the base SAC agent and optimizer settings}
            \label{tab:hparams_sac}
            \begin{tabular}{@{}ll@{}}
            \toprule
            \textbf{Hyper-parameter} & \textbf{Value} \\
            \midrule
            Policy & \texttt{MlpPolicy} \\
            Actor/Critic MLP & \texttt{net\_arch=[256,256,256]} \\
            Activations & ReLU \\
            Log std init & -3 \\
            Discount & 0.99 \\
            Batch size & 256 \\
            Train envs & 8 \\
            Replay buffer size & 1e6 \\
            Polyak & 0.005 \\
            Entropy coef & \texttt{auto} \\
            Train freq & 1 \\
            grad steps & 1 \\
            SDE exploration & True \\
            Learning rate & 1e-3\\
            \bottomrule
            \end{tabular}
        \end{table}
        
        \begin{table}[H]
            \centering
            \caption{Cycle-based training, and $Q_\alpha$-learning settings.}
            \label{tab:hparams_loop}
            \begin{tabular}{@{}ll@{}}
            \toprule
            \textbf{Hyper-parameter} & \textbf{Value} \\
            \midrule
            Q MLP & \texttt{net\_arch=[256,256,256,256]} \\
            Adversarial training steps & 30M \\
            Train envs & 8 \\
            Replay buffer & 1e5 \\
            Q sample reuse & 10 \\
            Q batch size & 1,000 \\
            Protagonist LR & 1e-3 \\
            Q-network LR & 1e-3 \\
            Polyak (target Q) & \texttt{tau\_q}=0.1 \\
            Polyak (reference protagonist) & \texttt{tau\_ref}=0.1 \\
            Reward-preservation tail prob. & $\epsilon=0.01$ \\
            \bottomrule
            \end{tabular}
        \end{table}
        

    \newpage
        
    \subsection{Additional results}
        \label{sec:supplementary_material}
        To better understand which design choices drive performance and robustness, we report a set of supplementary studies.
        All runs use the same environment \emph{HalfCheetah-v5} and the same pre-trained \emph{SAC} agent. We compare methods using evaluation curves recorded during training as well as the final-policy evaluation over a grid of perturbation magnitudes.
    
        \paragraph{Attacks considered.}
            
            \ \\We evaluate robustness under three observation-space attacks :

            RUA denotes a random baseline where we sample a uniform random perturbation direction :
            \begin{equation}
                x' = x + \eta \frac{u}{\|u\|_2} \quad \text{with} \quad u \sim \mathcal{U}\big(\!-\!1,1\big)^{\dim(x)}
            \end{equation}
            
            FGM\_C 
            is a variant of the FGSM attack for continuous-action policies: it perturbs the observation in the direction that minimizes an MSE loss between the deterministic actor output $\mu_{\tilde{\pi}}(x)$ and a target action $a$. In our experiments, we use a random target sampled uniformly around the original action of the actor output :
            \begin{equation}
                x' = x - \eta \,\frac{\nabla_x \|\mu_{\tilde{\pi}}(x)-a\|_2^2}{\Big\|\nabla_x \|\mu_{\tilde{\pi}}(x)-a\|_2^2\Big\|_2} \quad \text{with} \quad a \sim {\cal U}\big(\:\mu_{\tilde{\pi}}(x)\!-\!5e^{-5}\,,\,\mu_{\tilde{\pi}}(x)\!+\!5e^{-5}\:\big)
            \end{equation}
            This produces perturbations with high stochasticity, while focusing on noise directions that have impacts on the policy's decision. 
            
            FGM\_QAC is an \emph{untargeted} gradient-based attack. It is a variant of the FGSM attack for Q Actor-Critic architectures: it back-propagates the critic signal through the actor (freezing the critic observation input) to decrease the estimated $q$-value of the actor’s action :
            \begin{equation}
                x' = x - \eta \,\frac{\nabla_x \tilde{q}(x^\perp,\mu_{\tilde{\pi}}(x))}{\|\nabla_x \tilde{q}(x^\perp,\mu_{\tilde{\pi}}(x))\|_2} \quad \text{with} \quad 
                x^\perp := stopgrad(x)
            \end{equation}
            This attack thus produces perturbations in the direction that looks the most detrimental from the critic point of view (hence, with long term impact).

        \subsubsection{Training with alpha reward preserving strategy for different attack methods}
            \label{sec:supplementary_material_alpha}

            In this first ablation, we study the effect of the \emph{$\alpha$-reward-preserving} training strategy under different adversarial attack models.
            For each attack (FGM\_QAC, FGM\_C and RUA), we perform a grid search over $\alpha$, $\eta_B$, and $\eta_{B+}$, and report in Figures~\ref{fig:alpha-training-curves-fgm-qac},~\ref{fig:alpha-training-curves-fgm-c} and ~\ref{fig:alpha-training-curves-rua} evaluation curves collected during training at fixed perturbation magnitudes $\eta \in \{0, 0.05, 0.15\}$, and in Figure~\ref{fig:alpha-evaluations} evaluation of the final agent over a broader range $\eta \in \{0, 0.01, 0.05, 0.1, 0.15, 0.2\}$.

            We observe that using $\alpha=0.6$ with $\eta_{\cal B}=0.3$ and $\eta_{\cal B+}=0.5$ consistently yields strong performance across all attack settings. This configuration provides a favorable trade-off between challenging the policy and preserving task solvability.

    
            \begin{figure}[!htp]
                \centering
                \begin{adjustwidth}{-0cm}{-1.3cm} 
                \centering
                \caption{Evaluation curves during training with alpha reward preserving strategy under \textbf{FGM-QAC} attacks for evaluation settings $\eta \in \{0, 0.05, 0.15\}$.}
                \label{fig:alpha-training-curves-fgm-qac}
                \begin{subfigure}[t]{0.355\textwidth}
                    \centering
                    \includegraphics[width=\linewidth]{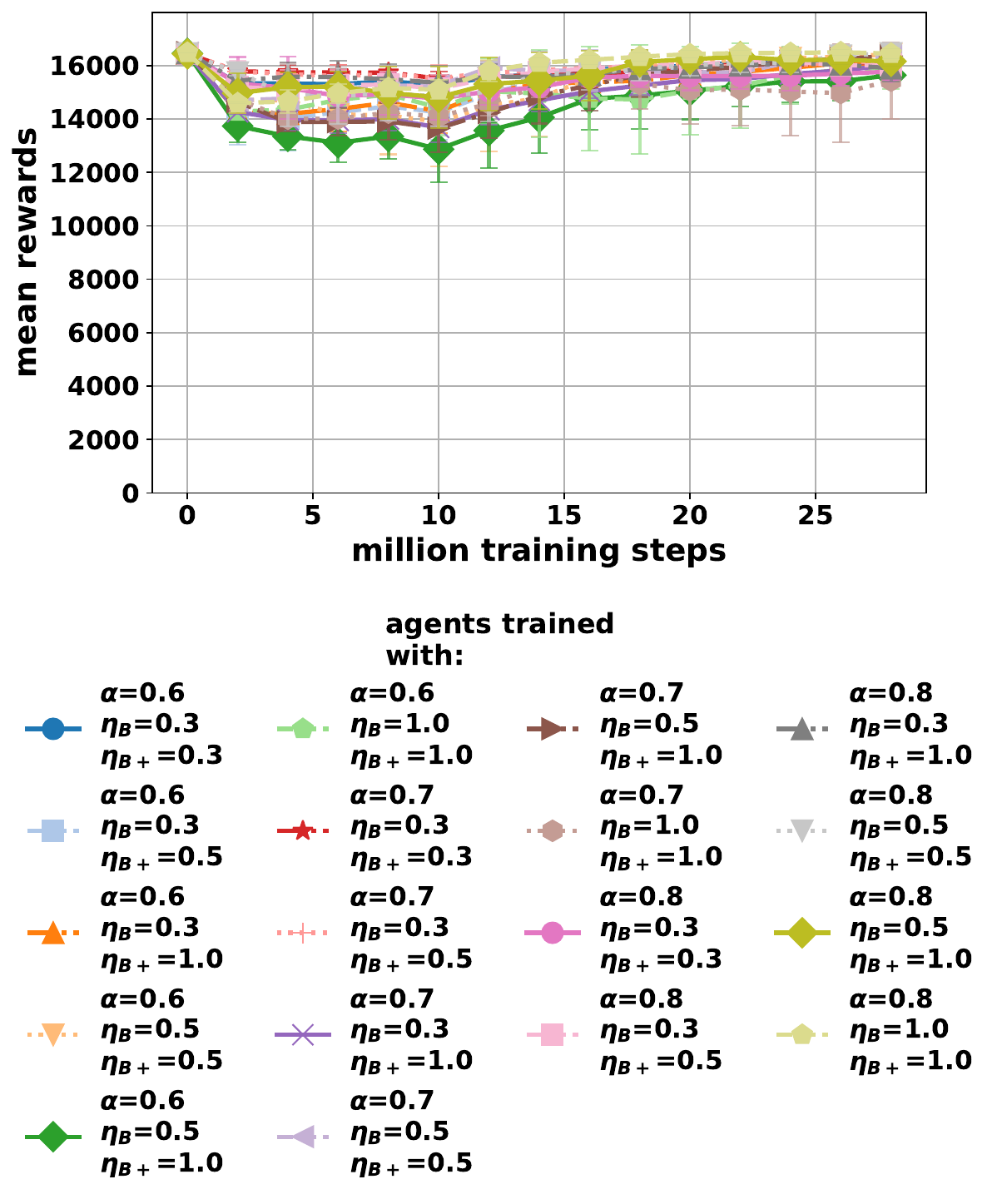}
                    \caption{training; eval $\eta=0$}
                \end{subfigure}\hfill
                \begin{subfigure}[t]{0.355\textwidth}
                    \centering
                    \includegraphics[width=\linewidth]{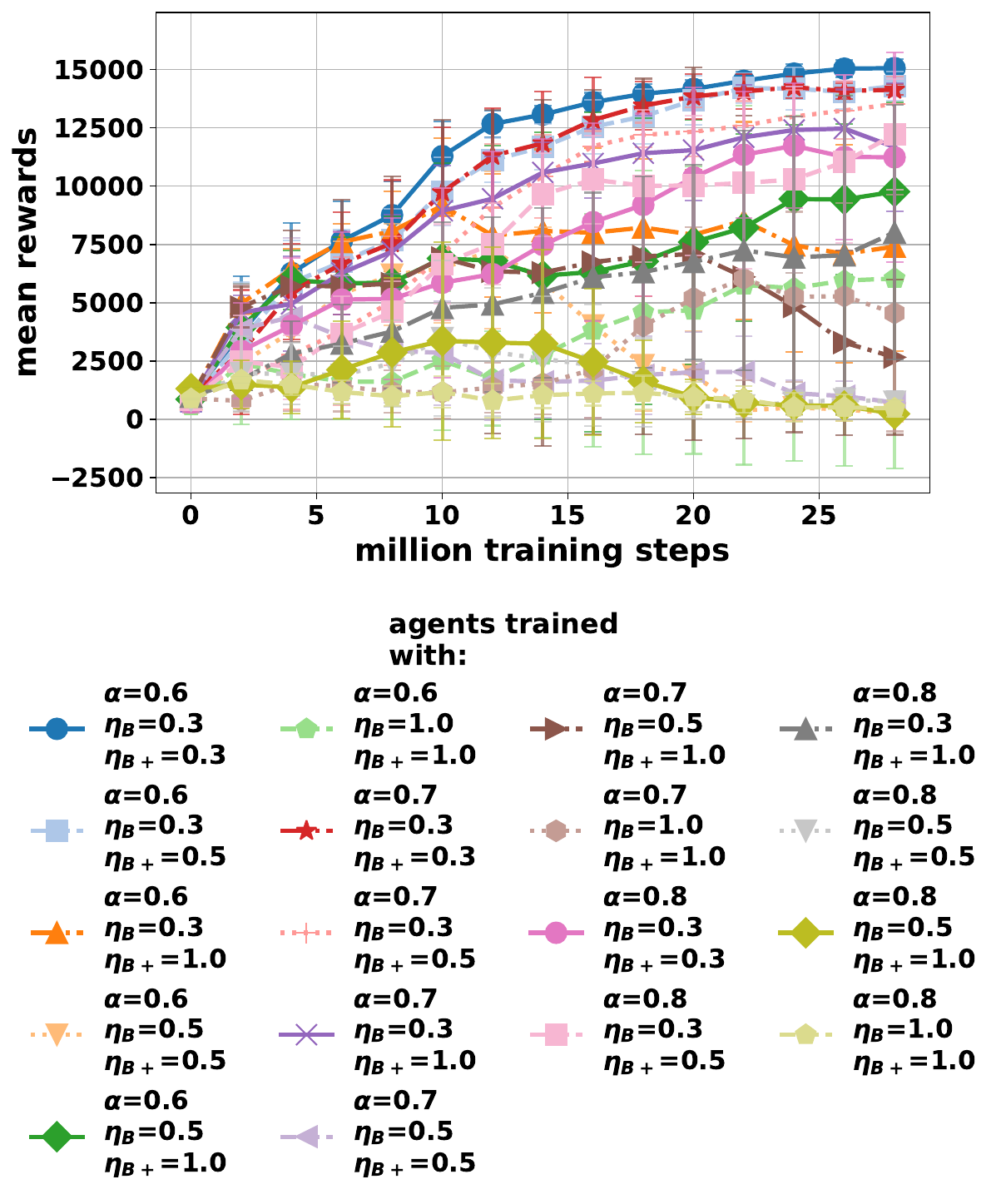}
                    \caption{training; eval $\eta=0.05$}
                \end{subfigure}\hfill
                \begin{subfigure}[t]{0.355\textwidth}
                    \centering
                    \includegraphics[width=\linewidth]{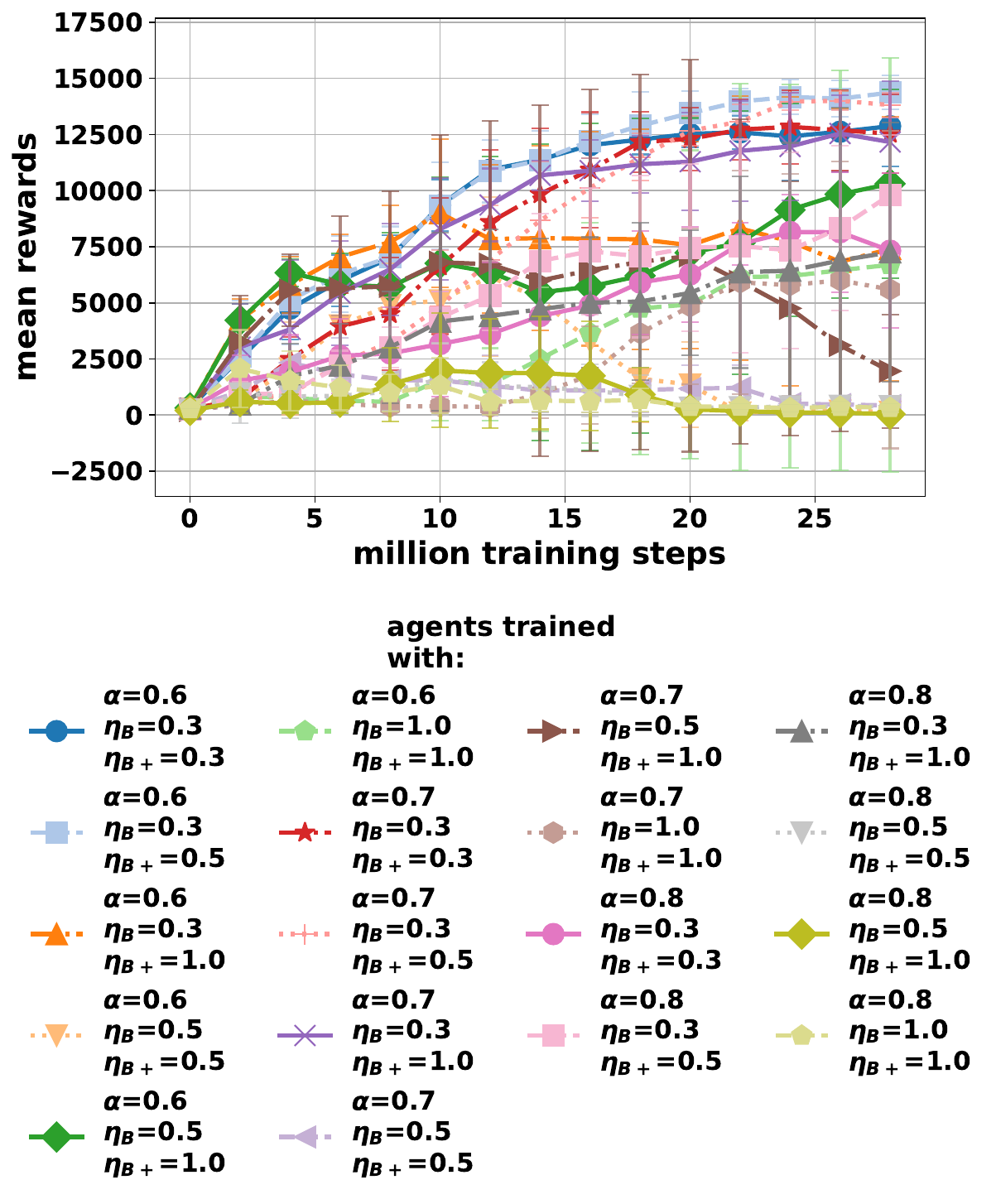}
                    \caption{training; eval $\eta=0.15$}
                \end{subfigure}
                \end{adjustwidth}
            \end{figure}
                            
            \begin{figure}[!htp]
                \centering
                \begin{adjustwidth}{-0cm}{-1.3cm} 
                \centering
                \caption{Evaluation curves during training with alpha reward preserving strategy under \textbf{FGM-C} attacks for evaluation settings $\eta \in \{0, 0.05, 0.15\}$.}
                \label{fig:alpha-training-curves-fgm-c}
                \begin{subfigure}[t]{0.355\textwidth}
                    \centering
                    \includegraphics[width=\linewidth]{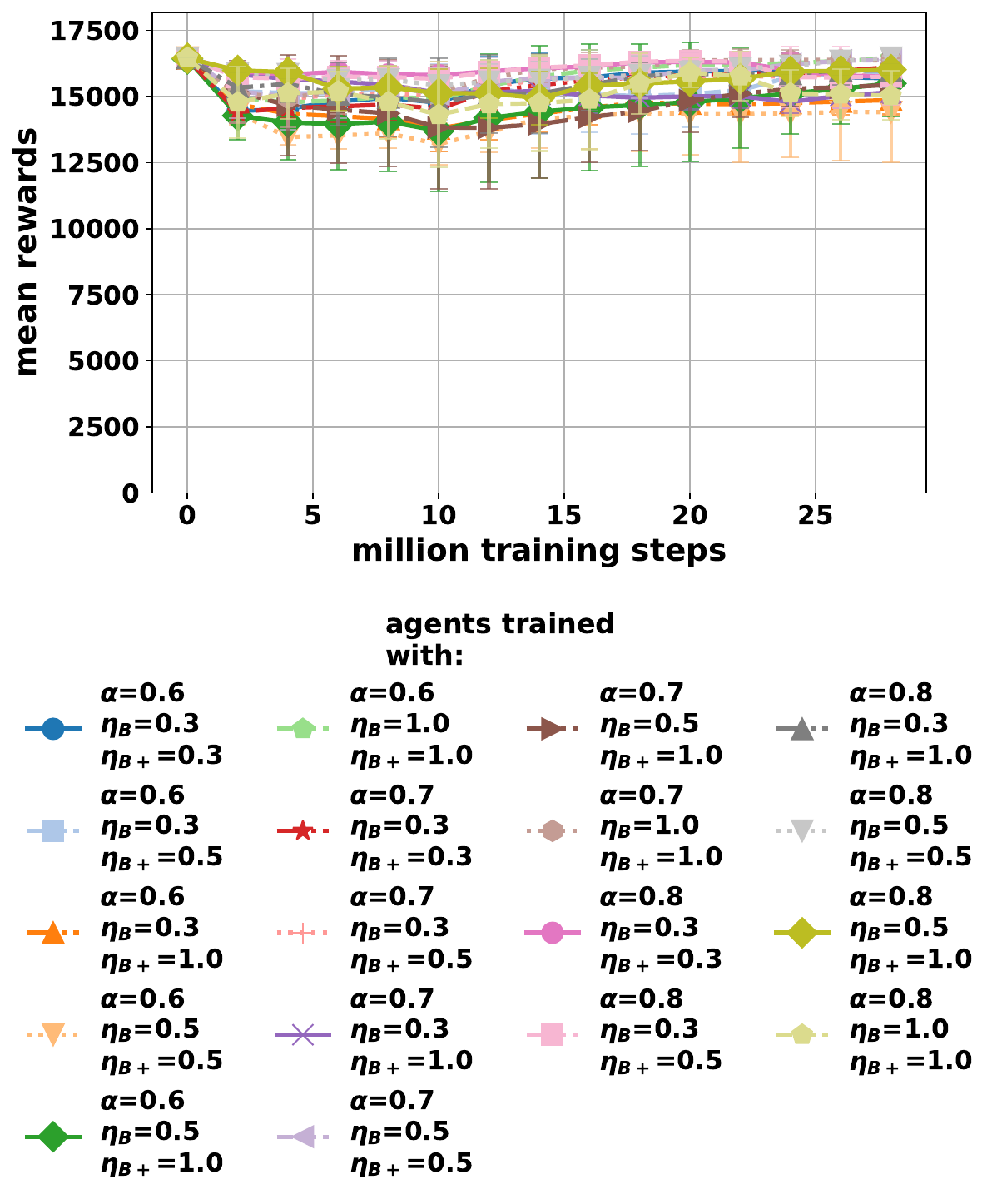}
                    \caption{training; eval $\eta=0$}
                \end{subfigure}\hfill
                \begin{subfigure}[t]{0.355\textwidth}
                    \centering
                    \includegraphics[width=\linewidth]{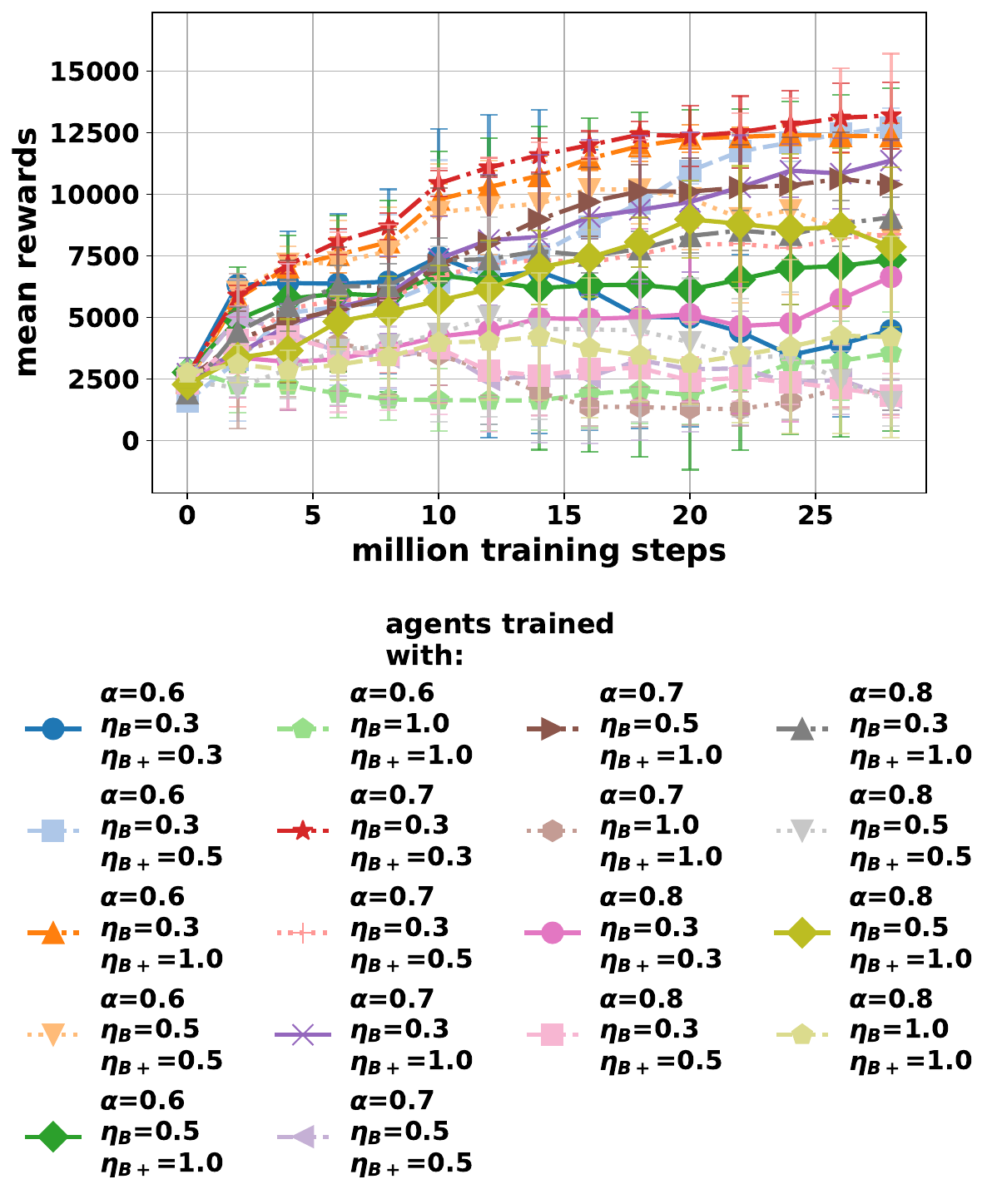}
                    \caption{training; eval $\eta=0.05$}
                \end{subfigure}\hfill
                \begin{subfigure}[t]{0.355\textwidth}
                    \centering
                    \includegraphics[width=\linewidth]{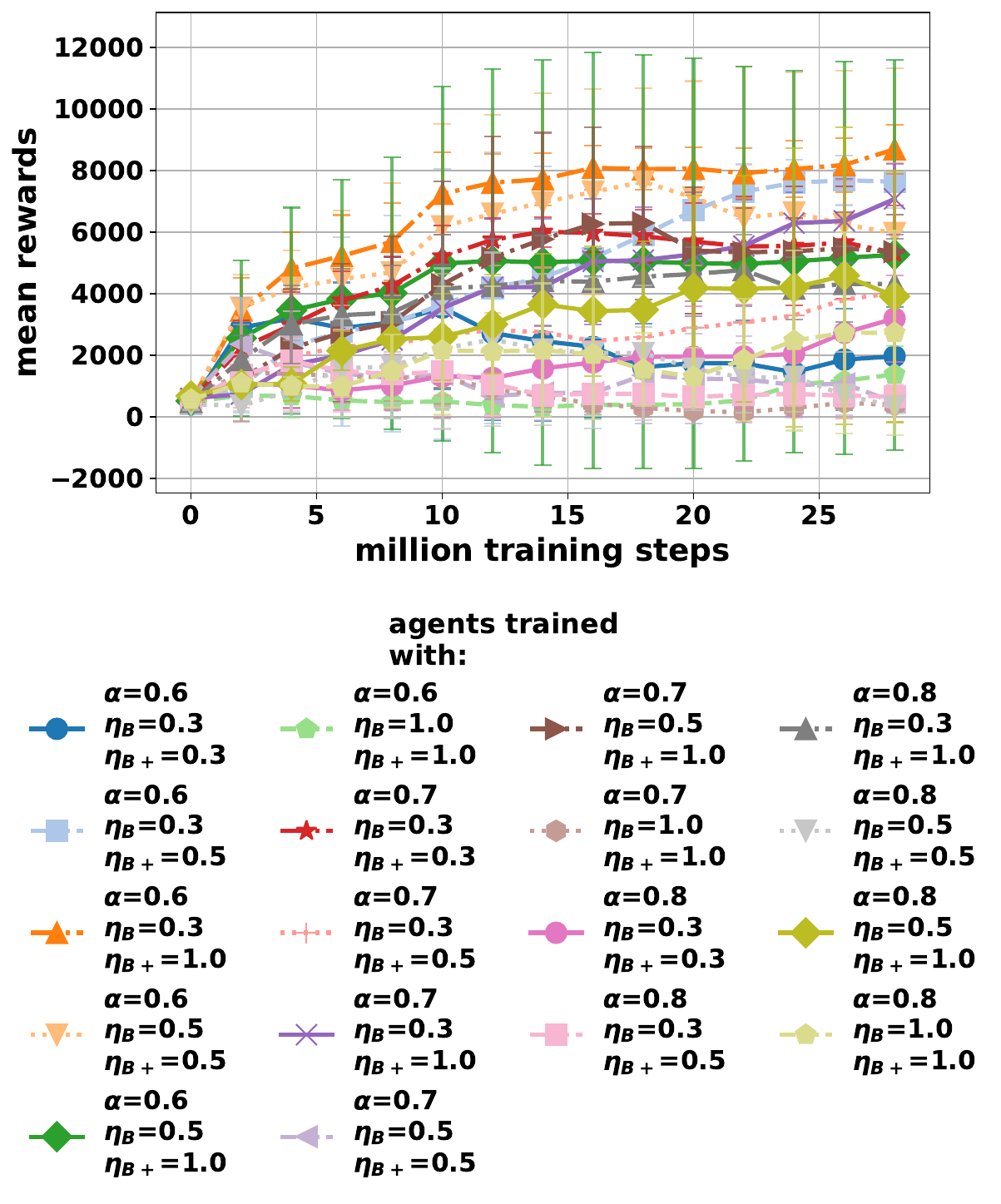}
                    \caption{training; eval $\eta=0.15$}
                \end{subfigure}
                \end{adjustwidth}
            \end{figure}

            \begin{figure}[!htp]
                \centering
                \begin{adjustwidth}{-0cm}{-1.3cm} 
                \centering
                \caption{Evaluation curves during training with alpha reward preserving strategy under \textbf{RUA} attacks for evaluation settings $\eta \in \{0, 0.05, 0.15\}$.}
                \label{fig:alpha-training-curves-rua}
                \begin{subfigure}[t]{0.355\textwidth}
                    \centering
                    \includegraphics[width=\linewidth]{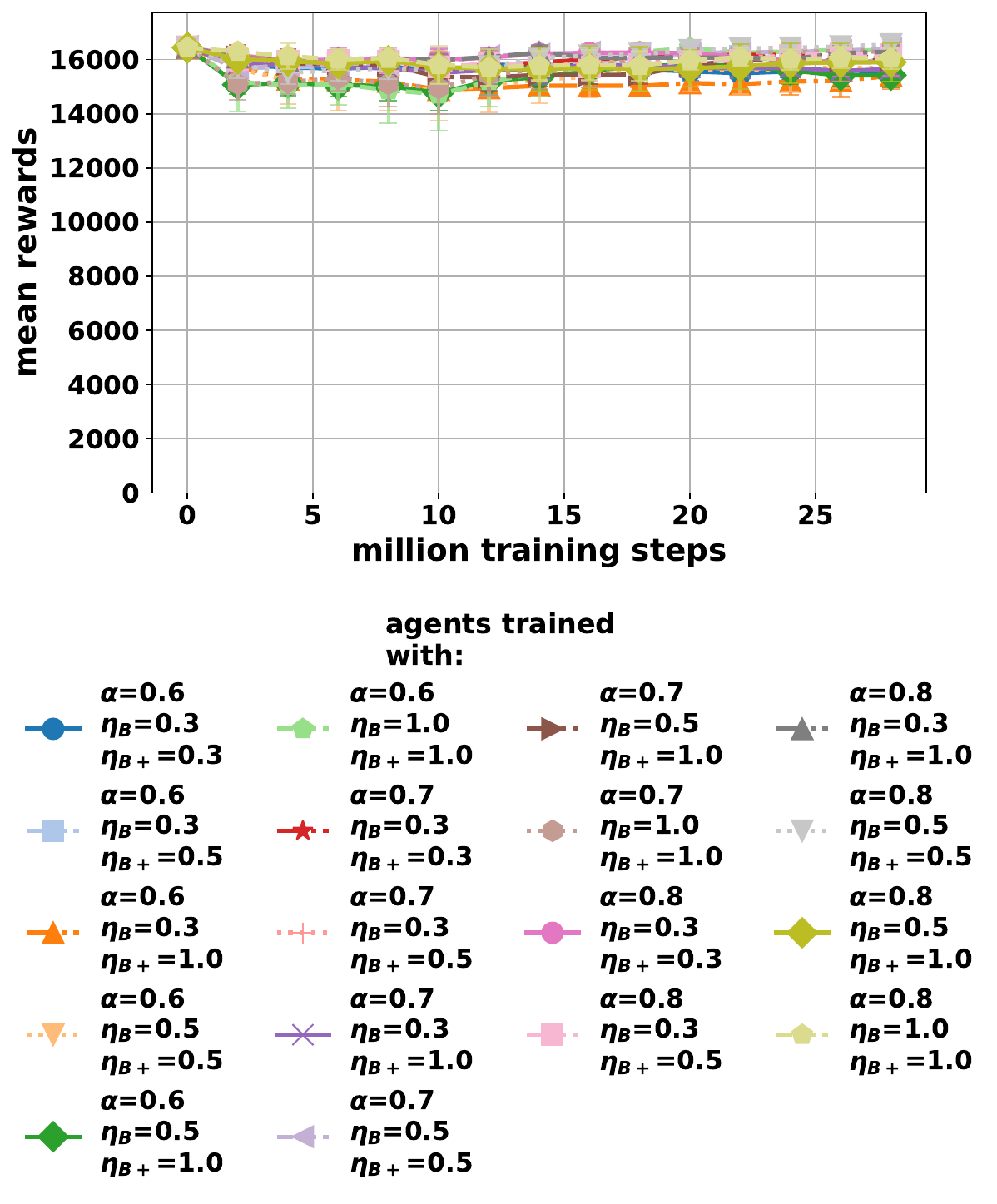}
                    \caption{training; eval $\eta=0$}
                \end{subfigure}\hfill
                \begin{subfigure}[t]{0.355\textwidth}
                    \centering
                    \includegraphics[width=\linewidth]{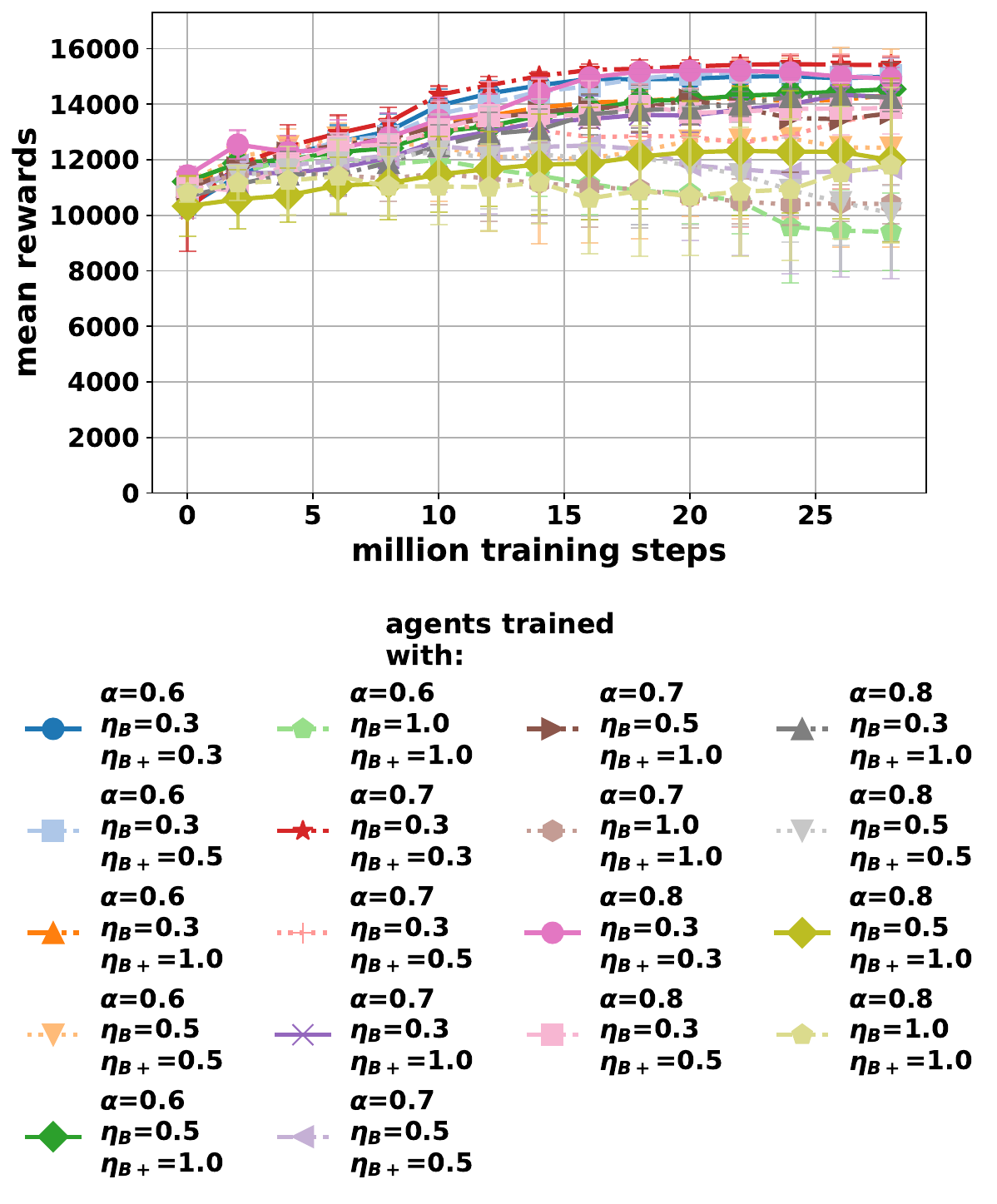}
                    \caption{training; eval $\eta=0.05$}
                \end{subfigure}\hfill
                \begin{subfigure}[t]{0.355\textwidth}
                    \centering
                    \includegraphics[width=\linewidth]{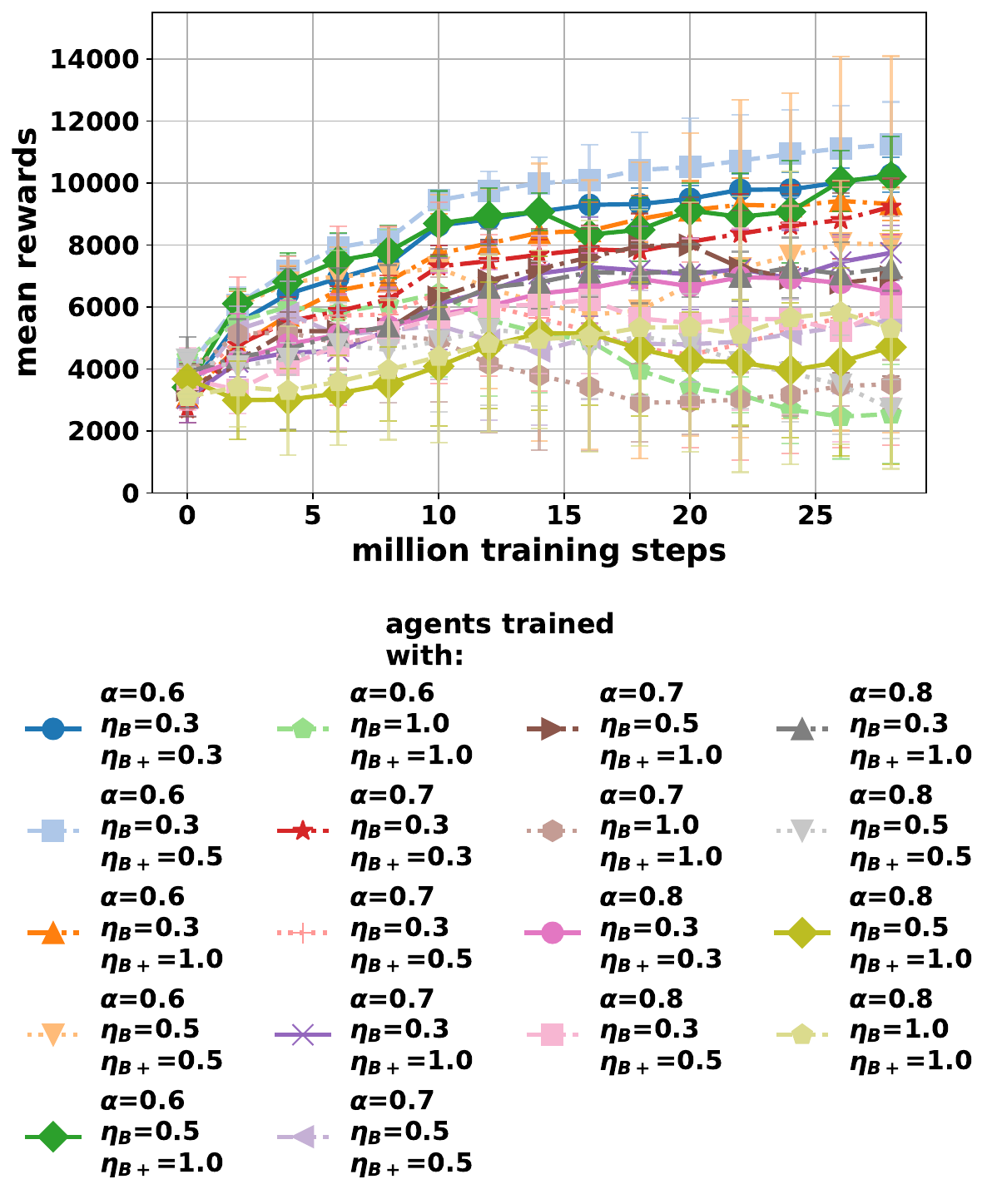}
                    \caption{training; eval $\eta=0.15$}
                \end{subfigure}
                \end{adjustwidth}
            \end{figure}

            \begin{figure}[!htp]
                \centering
                \begin{adjustwidth}{-0cm}{-1.3cm} 
                \centering
                \caption{Evaluation of the agents trained with alpha reward preserving strategy against \textbf{FGM\_QAC}, \textbf{FGM\_C} and \textbf{RUA} for $\eta \in \{0, 0.01, 0.05, 0.1, 0.15, 0.2\}$.}
                \label{fig:alpha-evaluations}
                \begin{subfigure}[t]{0.355\textwidth}
                    \vspace{0pt}\centering
                    \includegraphics[width=\linewidth]{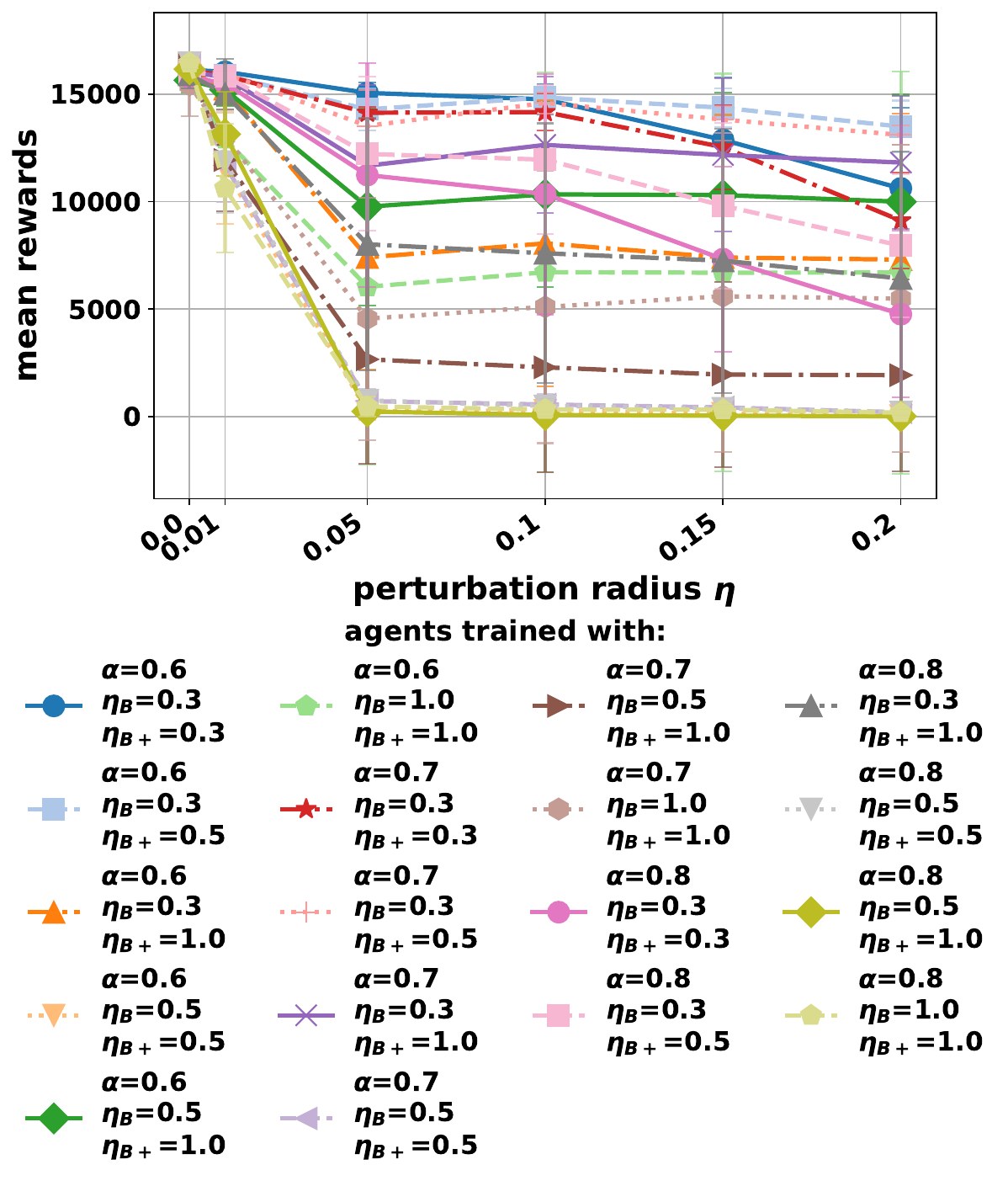}
                    \caption{FGM\_QAC : eval $\eta \in [0,0.2]$}
                \end{subfigure}\hfill
                \begin{subfigure}[t]{0.355\textwidth}
                    \vspace{0pt}\centering
                    \includegraphics[width=\linewidth]{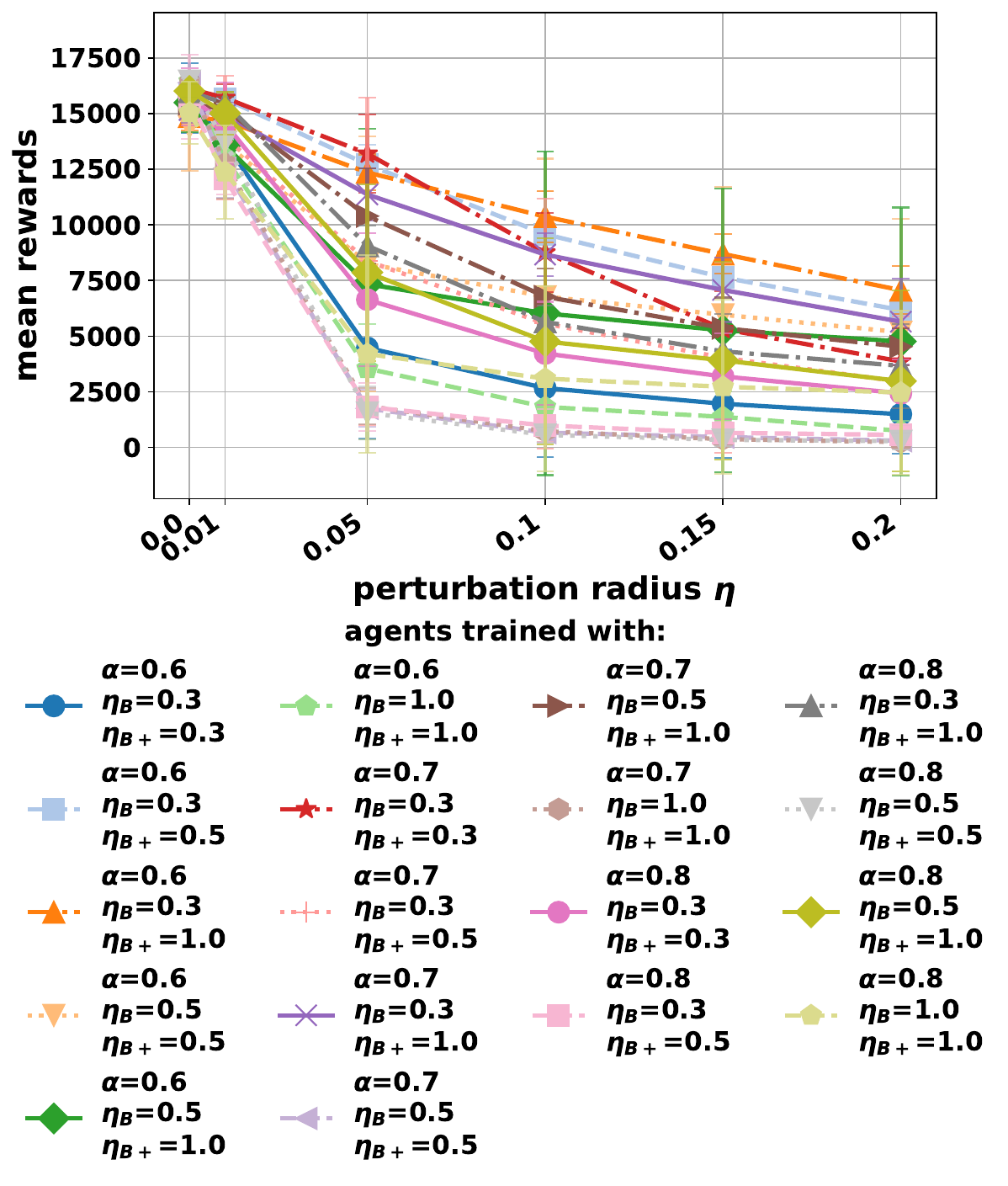}
                    \caption{FGM\_C : eval $\eta \in [0,0.2]$}
                \end{subfigure}\hfill
                \begin{subfigure}[t]{0.355\textwidth}
                    \vspace{0pt}\centering
                    \includegraphics[width=\linewidth]{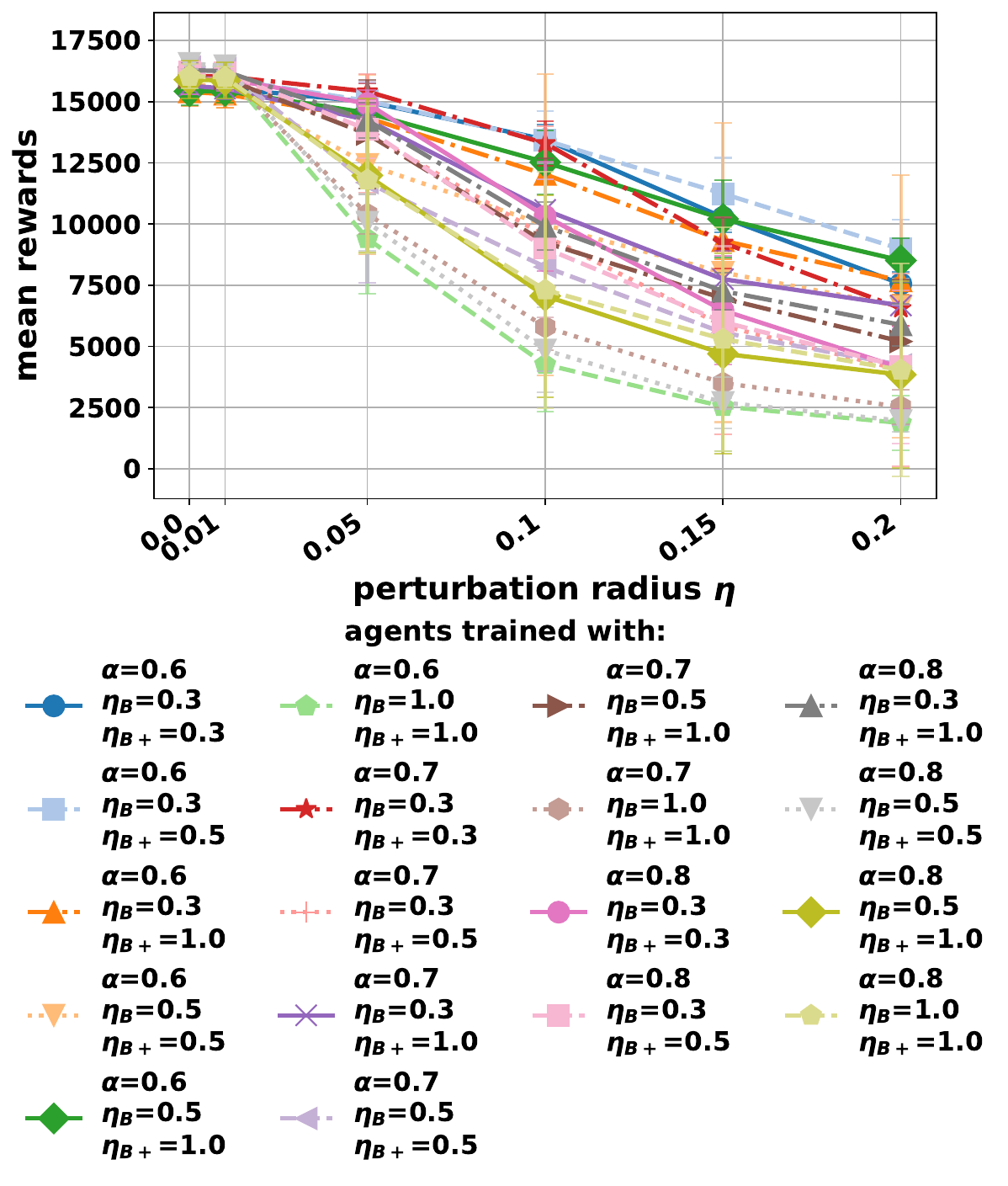}
                    \caption{RUA : eval $\eta \in [0,0.2]$}
                \end{subfigure}
                \end{adjustwidth}
            \end{figure}

        \newpage
            
        \subsubsection{Training with constant and random perturbation strength for different attack methods}
            \label{sec:supplementary_material_const}
            We then compare the $\alpha$-reward-preserving strategy against simpler baselines that use a \emph{random uniform} perturbation radius during training.
            For each attack (FGM\_QAC, FGM\_C and RUA), we grid-search over the perturbation parameters $\eta_B$ and include, as a reference point, the best $\alpha$ configuration identified in the previous ablation.
            As above, we report in Figures~\ref{fig:eta-training-curves-fgm-qac},~\ref{fig:eta-training-curves-fgm-c} and ~\ref{fig:eta-training-curves-rua} evaluation curves collected during training at fixed perturbation magnitudes $\eta \in \{0, 0.05, 0.15\}$, and in Figure~\ref{fig:eta-evaluations} evaluation of the final agent over a broader range $\eta \in \{0, 0.01, 0.05, 0.1, 0.15, 0.2\}$. 
            

            We observe that our $\alpha$-reward-preserving trained agent performs among the best across most settings. In some specific test attack magnitudes, it is slightly outperformed by baselines tailored to those particular conditions. However, these specialized baselines are then outperformed by our agent in the nominal environment or under stronger attacks, highlighting the overall robustness and generality of $\alpha$-reward-preserving training.

            \begin{figure}[!htp]
                \centering
                \begin{adjustwidth}{-0cm}{-1.3cm} 
                \centering
                \caption{Evaluation curves during training with random uniform perturbation radius strategy under \textbf{FGM-QAC} attacks for evaluation settings $\eta \in \{0, 0.05, 0.15\}$.}
                \label{fig:eta-training-curves-fgm-qac}
                \begin{subfigure}[t]{0.355\textwidth}
                    \centering
                    \includegraphics[width=\linewidth]{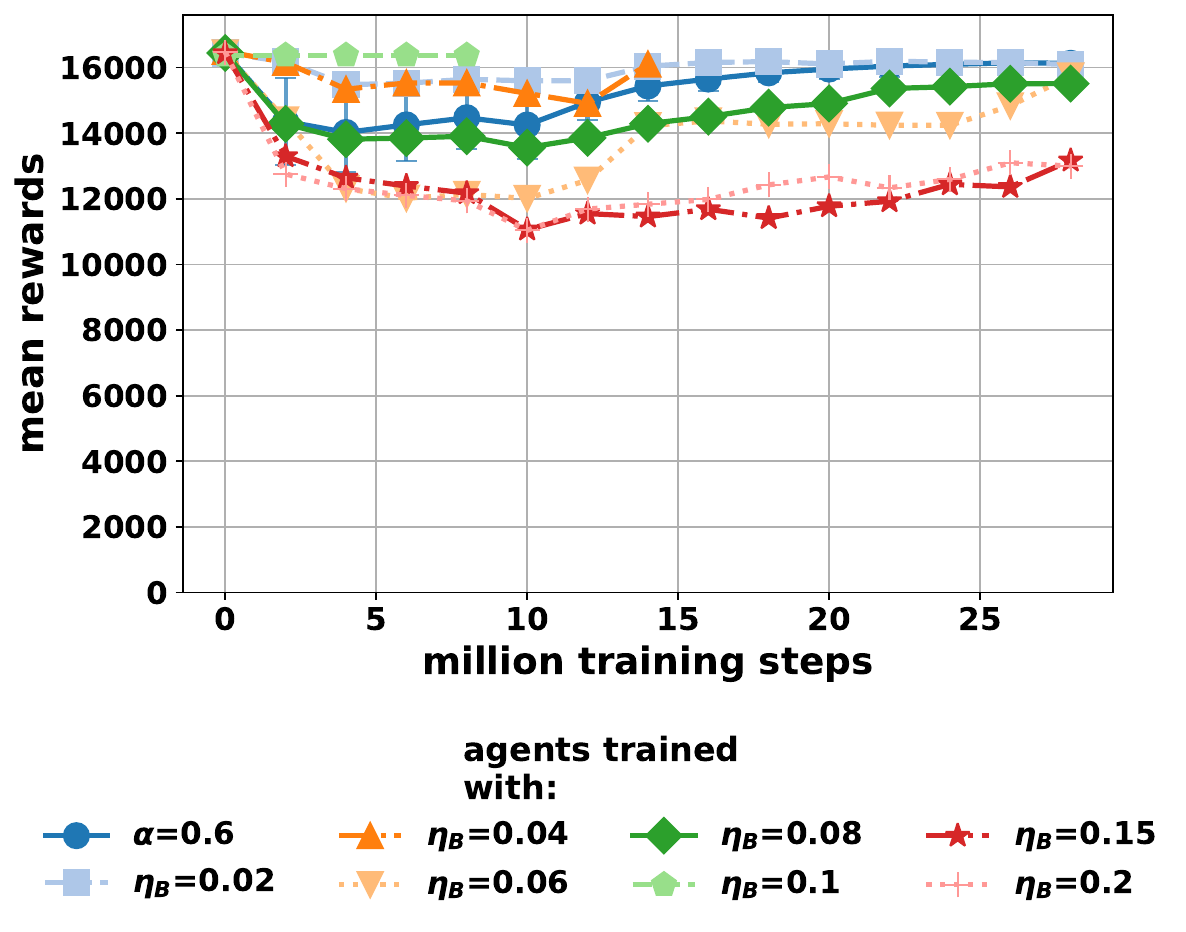}
                    \caption{eval $\eta=0$}
                \end{subfigure}\hfill
                \begin{subfigure}[t]{0.355\textwidth}
                    \centering
                    \includegraphics[width=\linewidth]{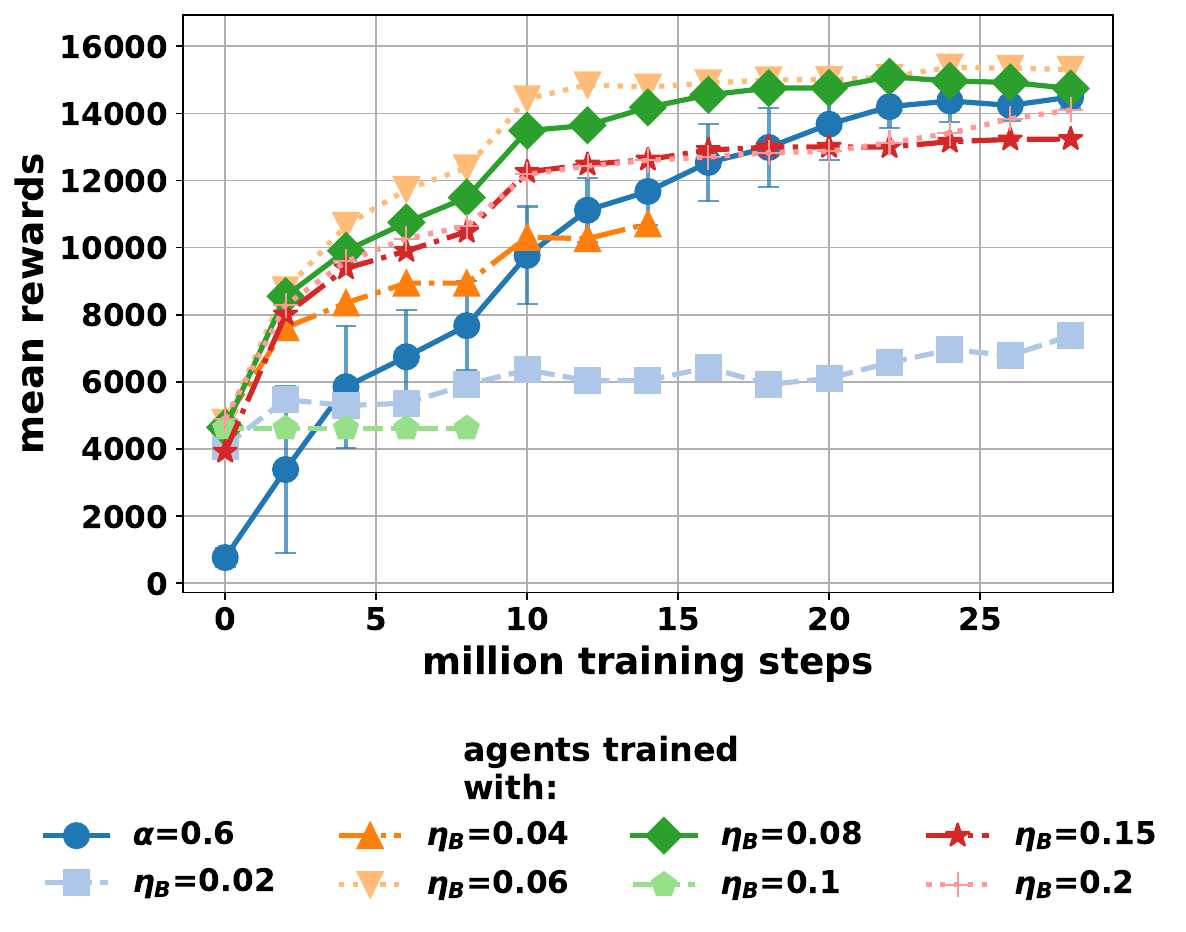}
                    \caption{eval $\eta=0.05$}
                \end{subfigure}\hfill
                \begin{subfigure}[t]{0.355\textwidth}
                    \centering
                    \includegraphics[width=\linewidth]{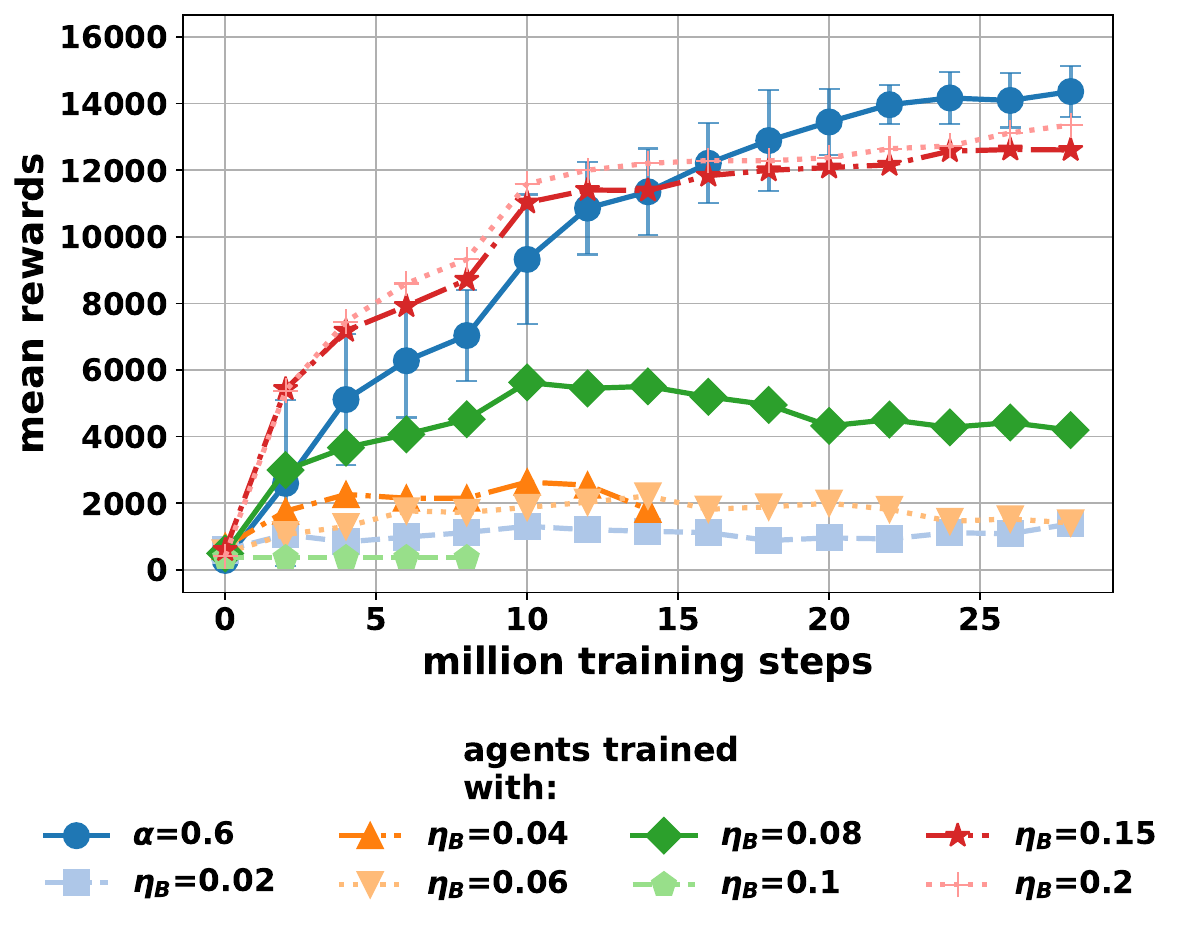}
                    \caption{eval $\eta=0.15$}
                \end{subfigure}
                \end{adjustwidth}
            \end{figure}
                            
            \begin{figure}[!htp]
                \centering
                \begin{adjustwidth}{-0cm}{-1.3cm} 
                \centering
                \caption{Evaluation curves during training with random uniform perturbation radius strategy under \textbf{FGM-C} attacks for evaluation settings $\eta \in \{0, 0.05, 0.15\}$.}
                \label{fig:eta-training-curves-fgm-c}
                \begin{subfigure}[t]{0.355\textwidth}
                    \centering
                    \includegraphics[width=\linewidth]{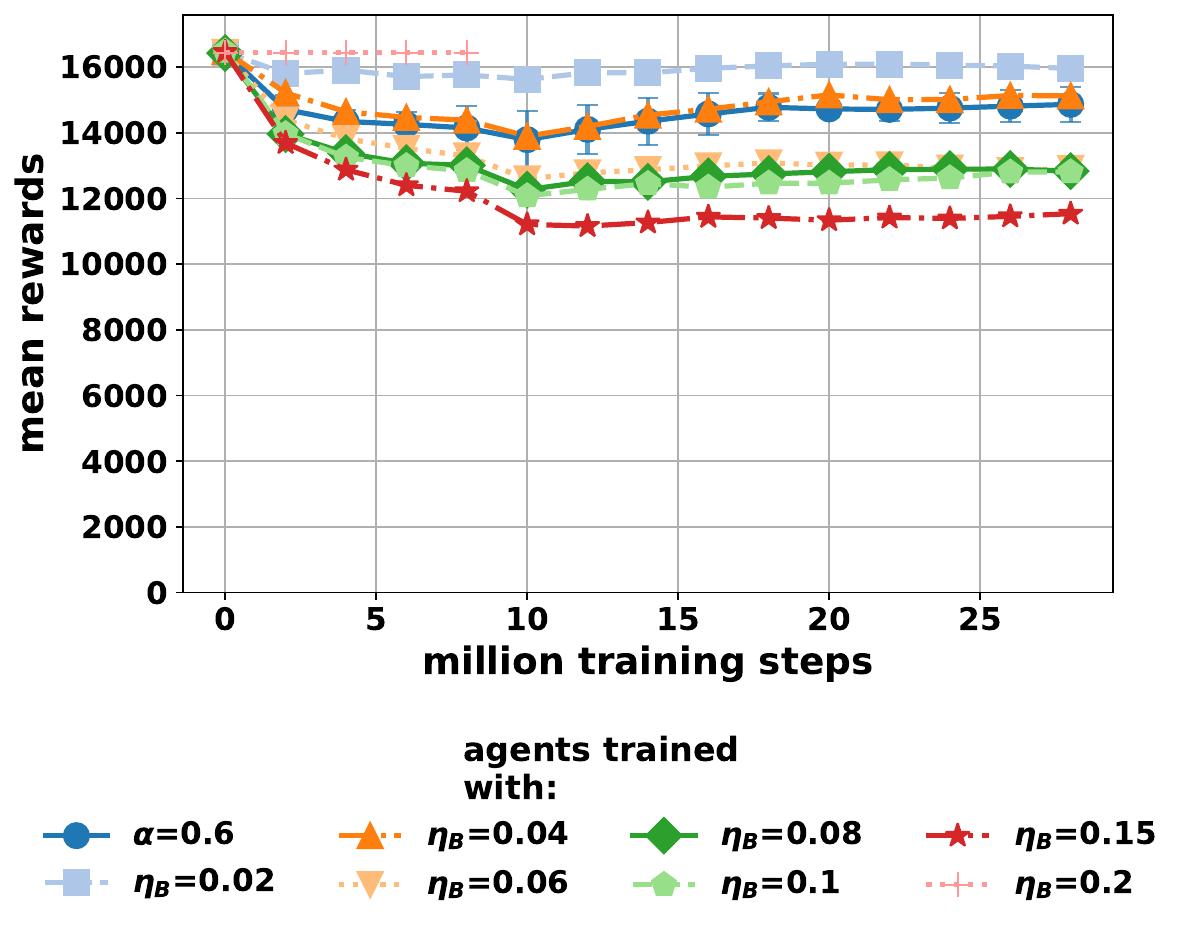}
                    \caption{eval $\eta=0$}
                \end{subfigure}\hfill
                \begin{subfigure}[t]{0.355\textwidth}
                    \centering
                    \includegraphics[width=\linewidth]{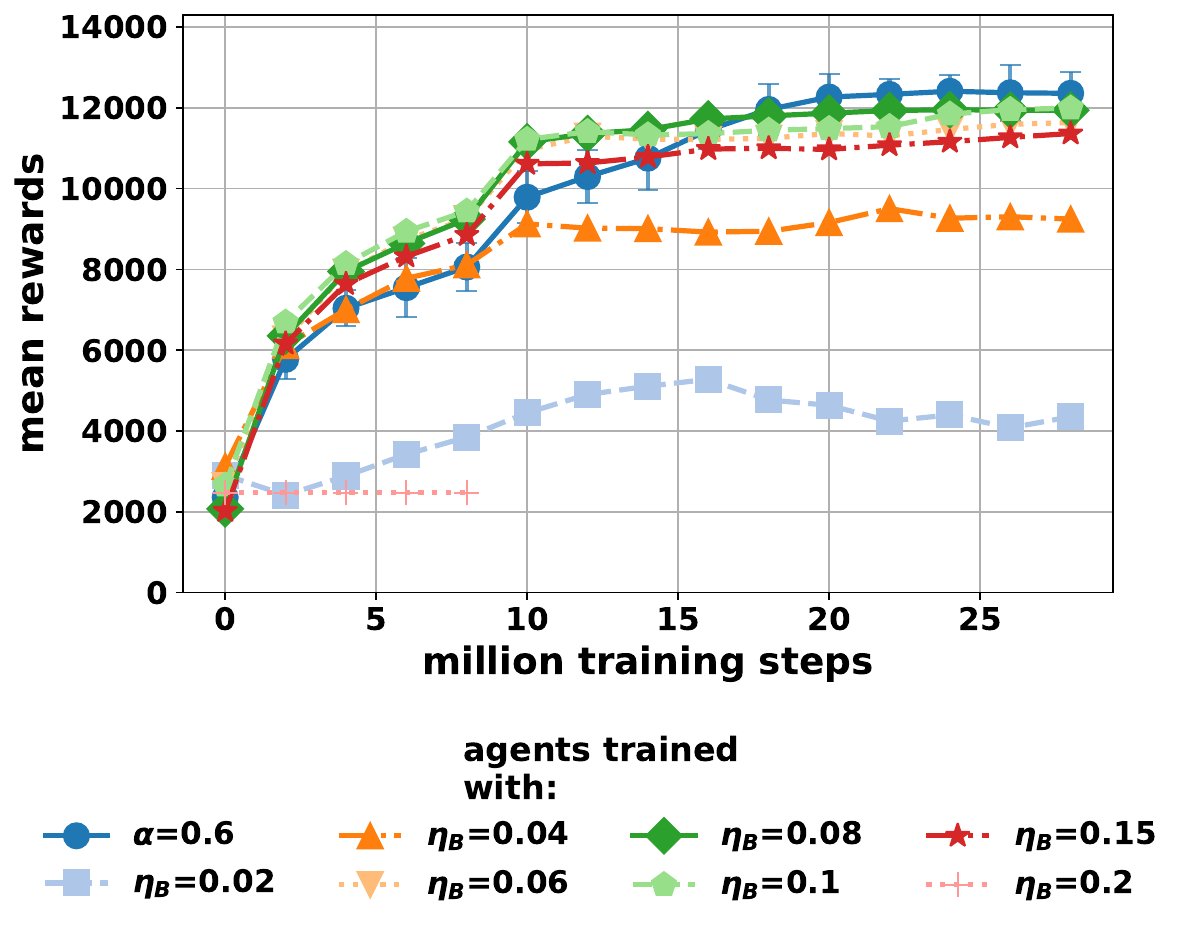}
                    \caption{eval $\eta=0.05$}
                \end{subfigure}\hfill
                \begin{subfigure}[t]{0.355\textwidth}
                    \centering
                    \includegraphics[width=\linewidth]{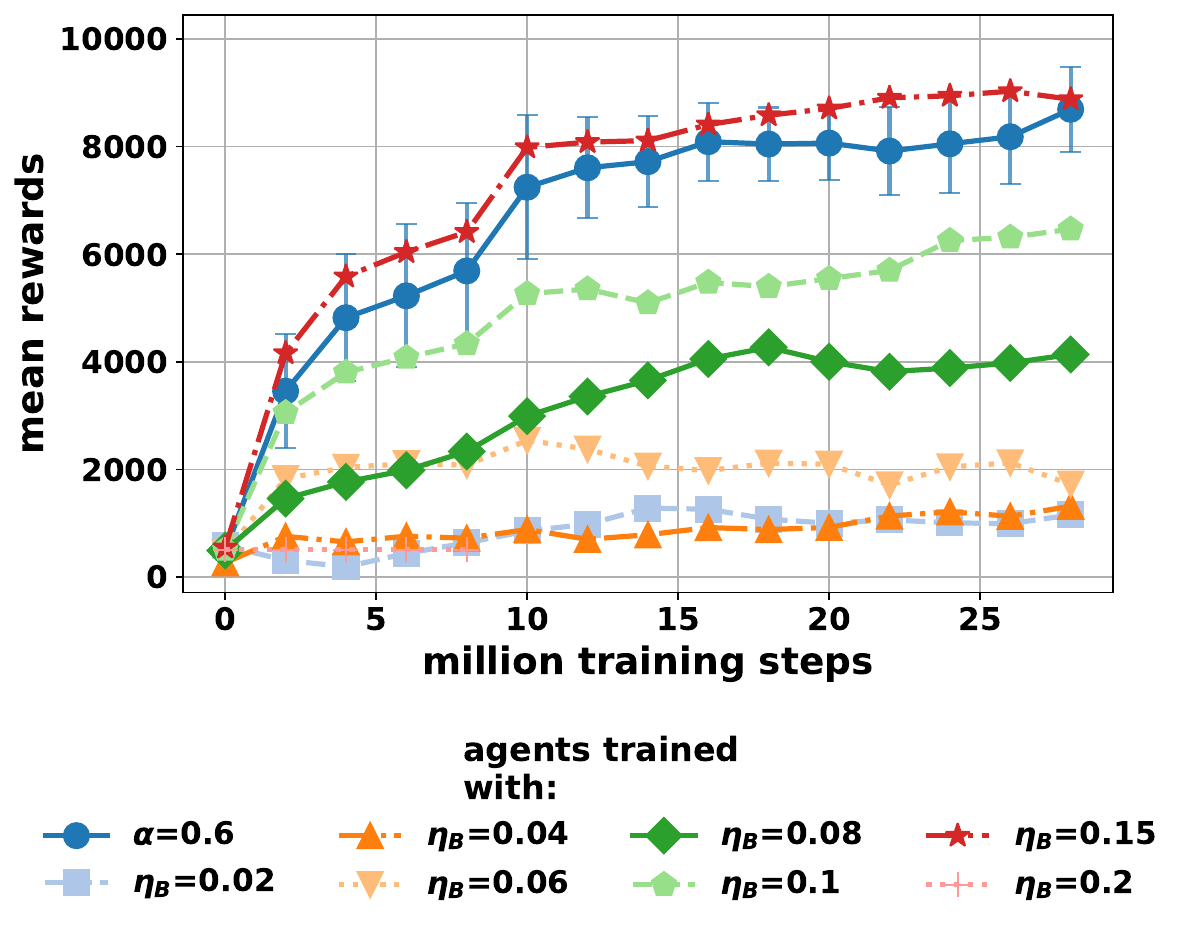}
                    \caption{eval $\eta=0.15$}
                \end{subfigure}
                \end{adjustwidth}
            \end{figure}

            \begin{figure}[!htp]
                \vspace*{-\topskip}
                \centering
                \begin{adjustwidth}{-0cm}{-1.3cm} 
                \centering
                \caption{Evaluation curves during training with random uniform perturbation radius strategy under \textbf{RUA} attacks for evaluation settings $\eta \in \{0, 0.05, 0.15\}$.}
                \label{fig:eta-training-curves-rua}
                \begin{subfigure}[t]{0.355\textwidth}
                    \centering
                    \includegraphics[width=\linewidth]{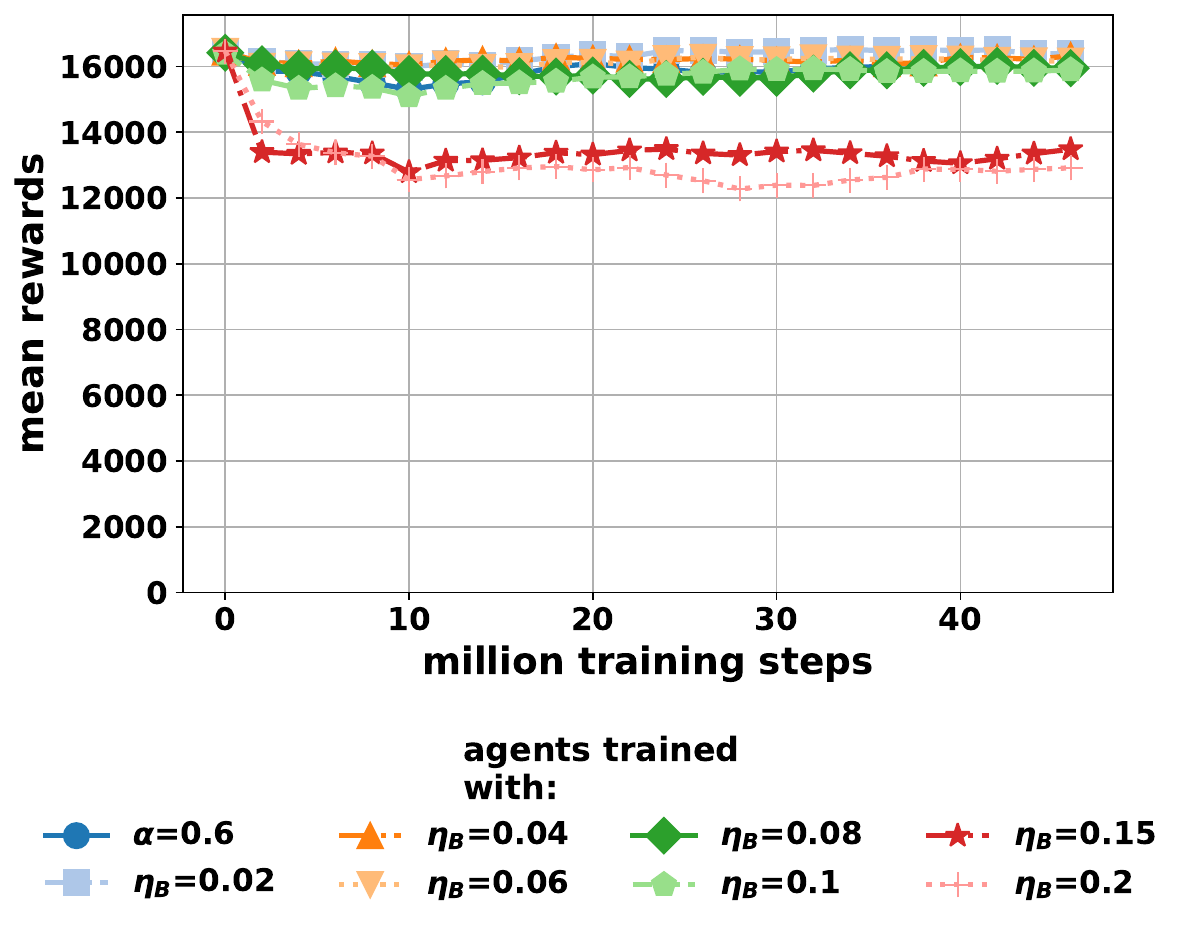}
                    \caption{eval $\eta=0$}
                \end{subfigure}\hfill
                \begin{subfigure}[t]{0.355\textwidth}
                    \centering
                    \includegraphics[width=\linewidth]{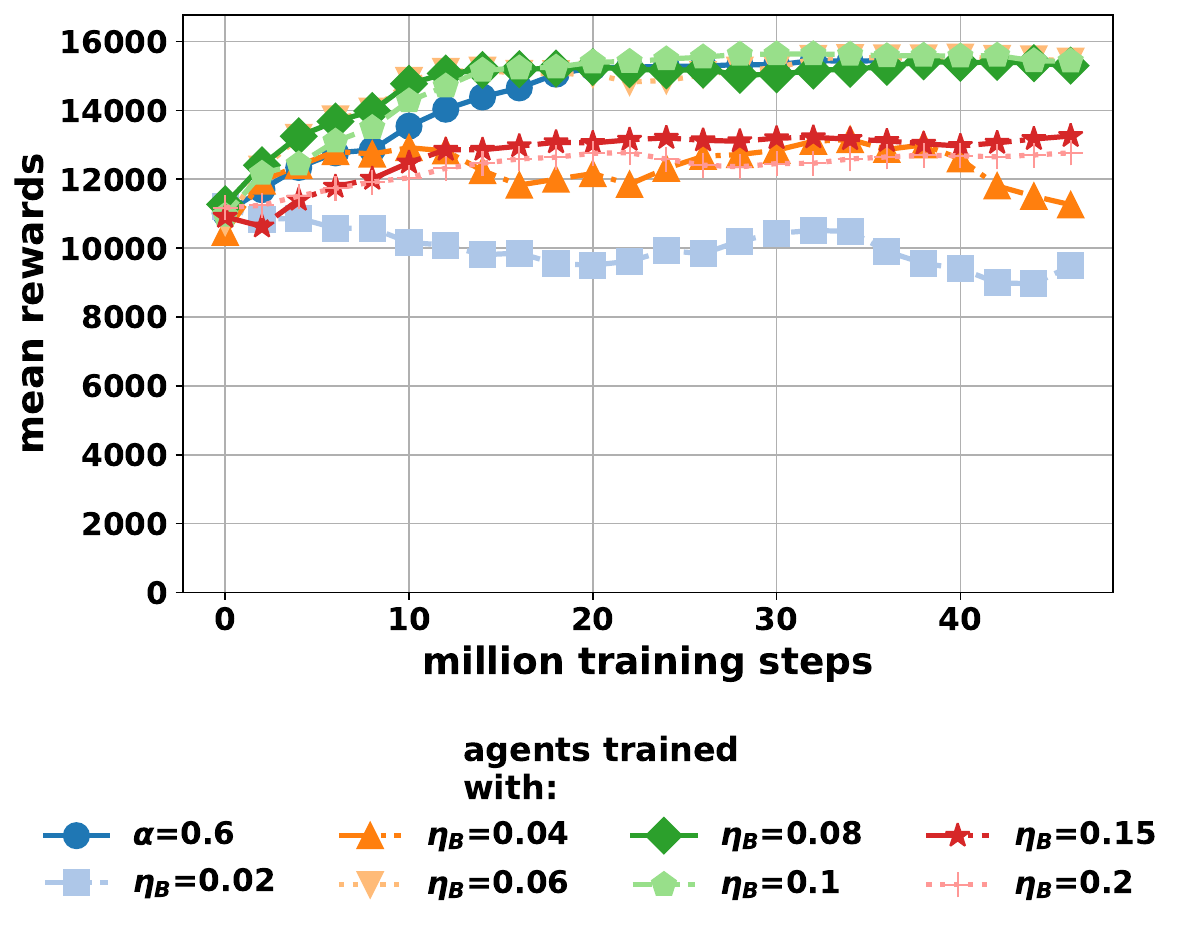}
                    \caption{eval $\eta=0.05$}
                \end{subfigure}\hfill
                \begin{subfigure}[t]{0.355\textwidth}
                    \centering
                    \includegraphics[width=\linewidth]{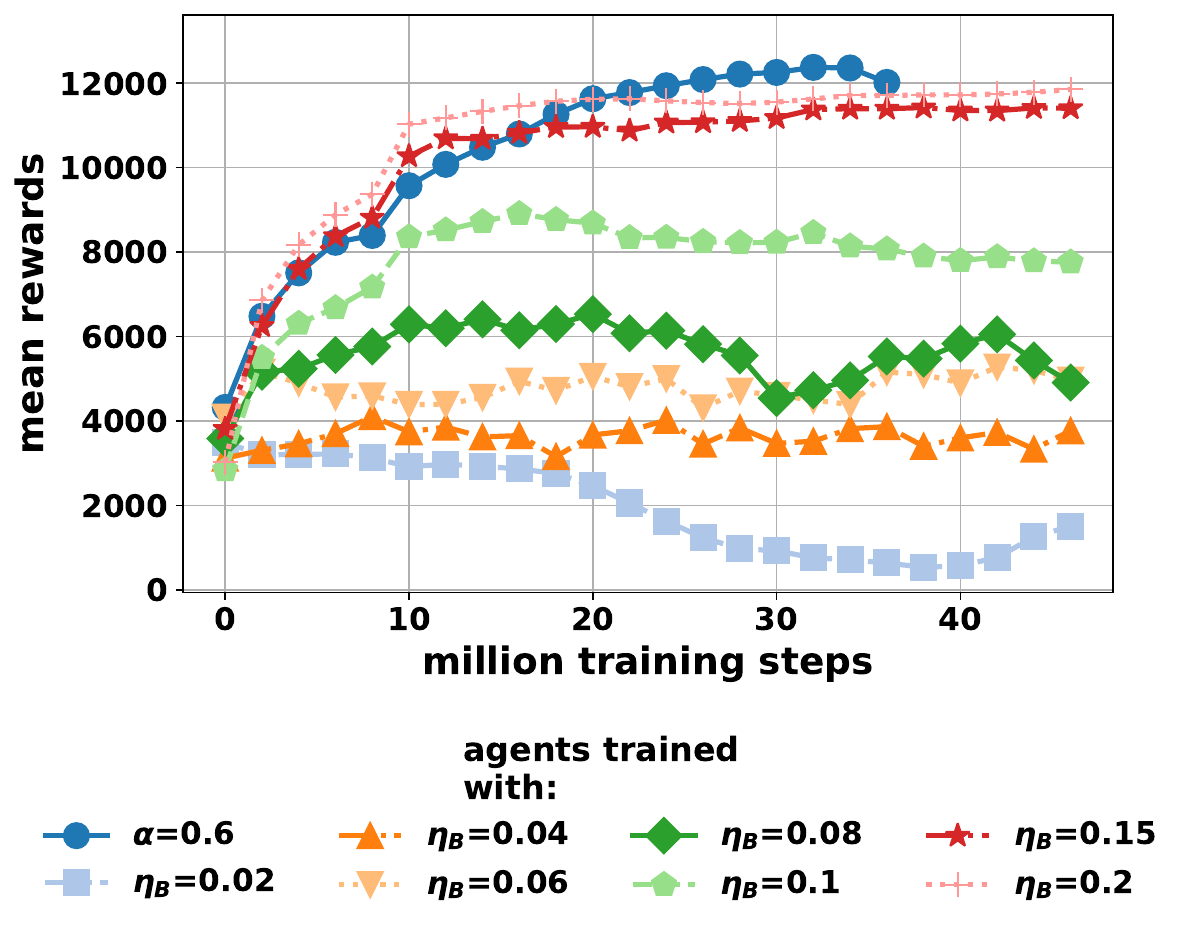}
                    \caption{eval $\eta=0.15$}
                \end{subfigure}
                \end{adjustwidth}
            \end{figure}

            \begin{figure}[!htp]
                \vspace*{-\topskip}
                \centering
                \begin{adjustwidth}{-0cm}{-1.3cm} 
                \centering
                \caption{Evaluation of the agents trained with constant or random uniform perturbation radius strategy against \textbf{FGM\_QAC}, \textbf{FGM\_C} and \textbf{RUA} for $\eta \in \{0, 0.01, 0.05, 0.1, 0.15, 0.2\}$.}
                \label{fig:eta-evaluations}
                \begin{subfigure}[t]{0.355\textwidth}
                    \vspace{0pt}\centering
                    \includegraphics[width=\linewidth]{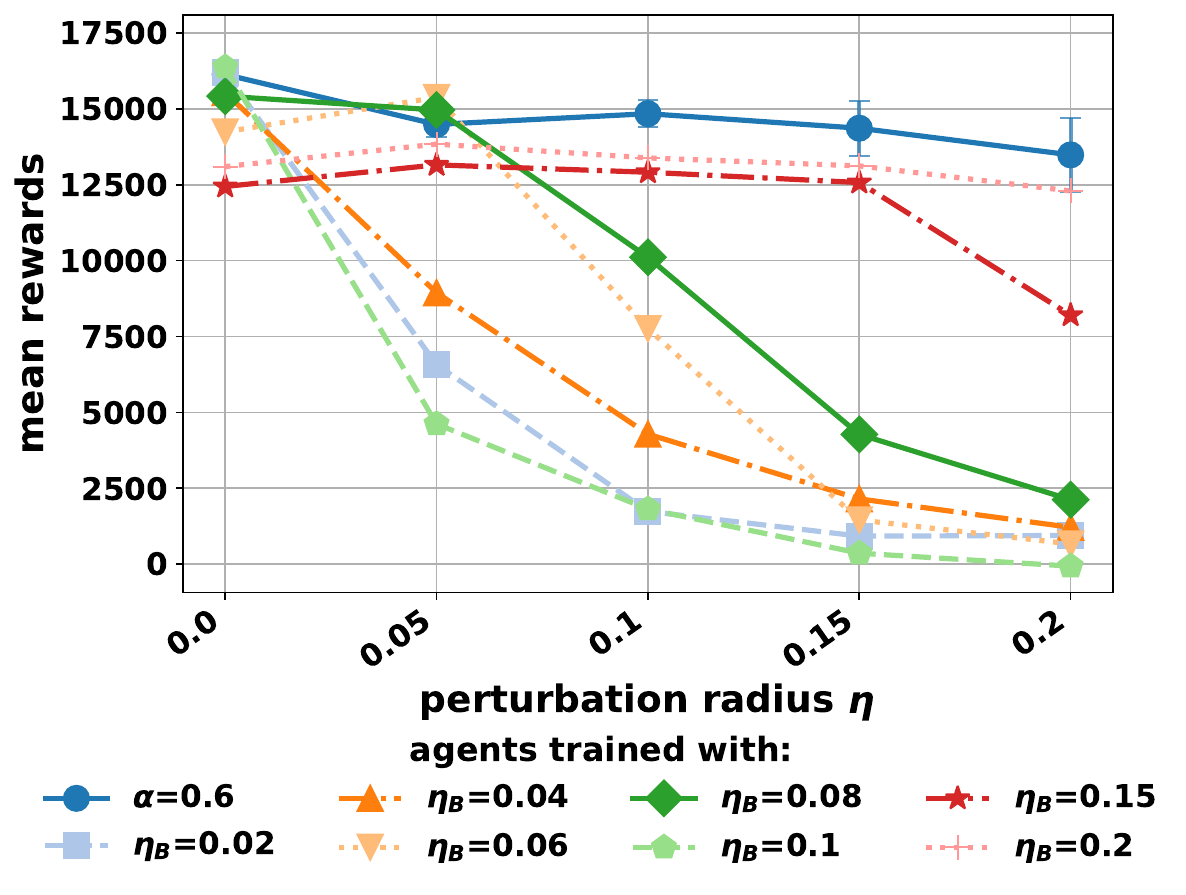}
                    \caption{FGM\_QAC : eval $\eta \in [0,0.2]$}
                \end{subfigure}\hfill
                \begin{subfigure}[t]{0.355\textwidth}
                    \vspace{0pt}\centering
                    \includegraphics[width=\linewidth]{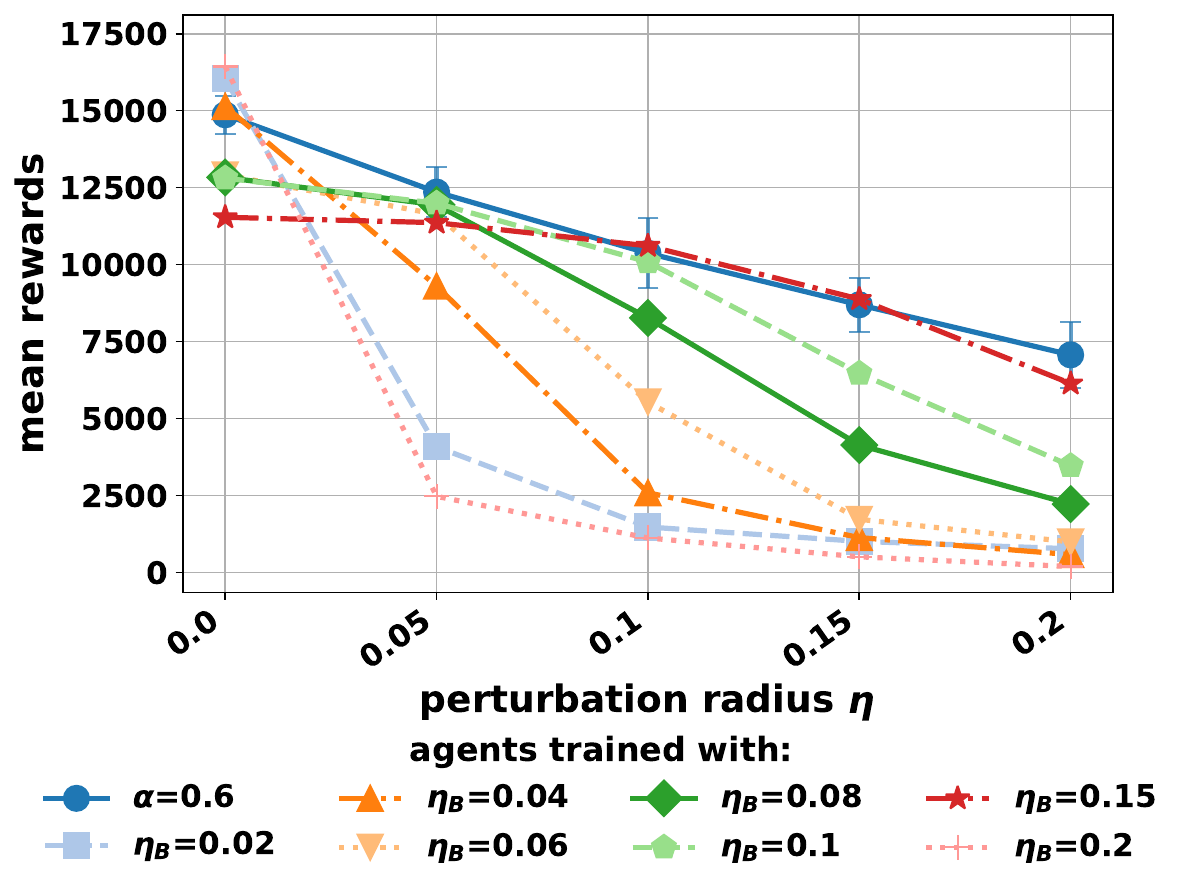}
                    \caption{FGM\_C : eval $\eta \in [0,0.2]$}
                \end{subfigure}\hfill
                \begin{subfigure}[t]{0.355\textwidth}
                    \vspace{0pt}\centering
                    \includegraphics[width=\linewidth]{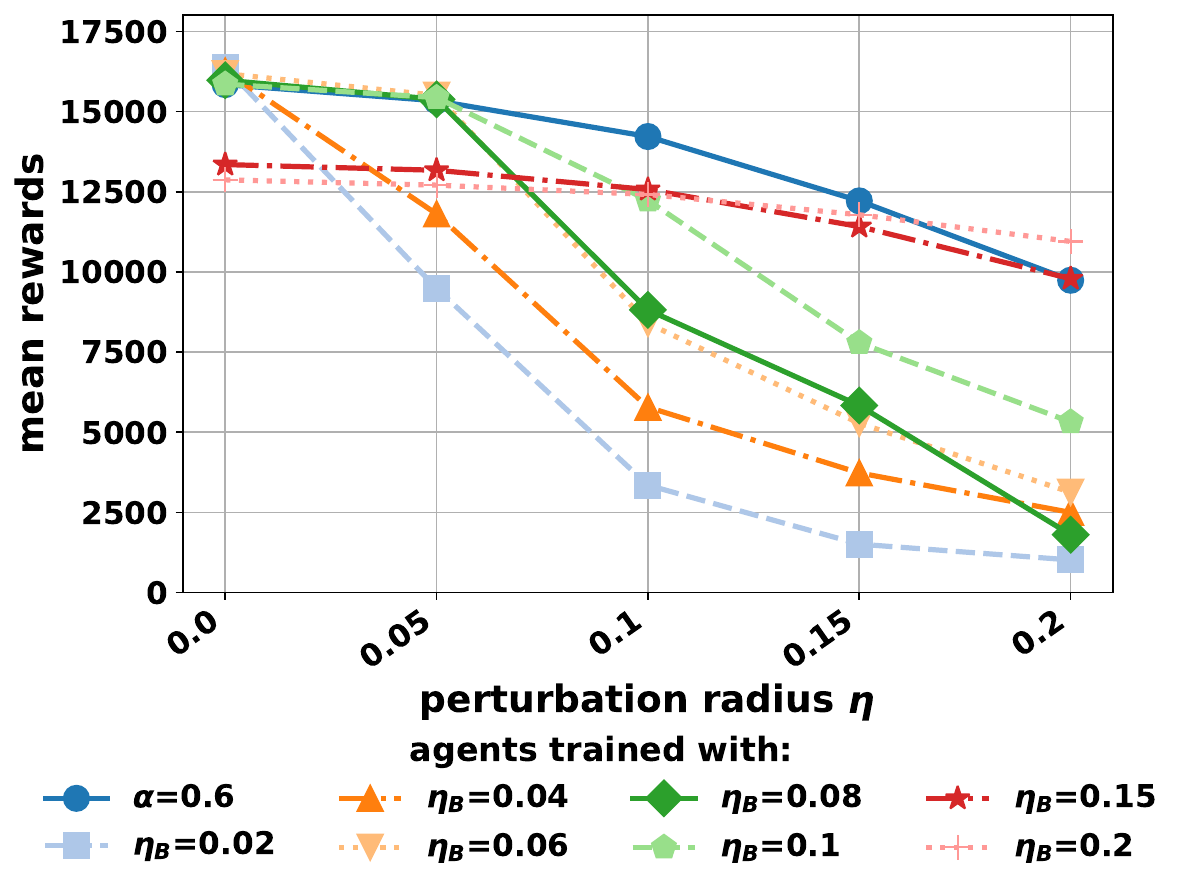}
                    \caption{RUA : eval $\eta \in [0,0.2]$}
                \end{subfigure}
                \end{adjustwidth}
            \end{figure}

\newpage
        
    \subsection{Extension of RVI for $\alpha$-reward-preserving attacks}
        \label{firstRVI}
        
        This section first recalls the classical Robust Value Iteration algorithm and then presents its extension for $\alpha$-reward-preserving attacks. 
        
        \subsubsection{Robust Value Iteration}
        
            First, we recall the classical Robust Value Iteration in algorithm \ref{algo:classicalRVI}.

            \begin{algorithm}[!ht]
                \caption{Robust Value Iteration with $r(s,a,s')$ and action-value function $Q$}
                \label{algo:classicalRVI}
                \begin{algorithmic}[1]
                    \REQUIRE State space $\mathcal{S}$, action space $\cal A$, reward $r(s,a,s')$, nominal MDP $\Omega$, uncertainty set $\mathcal{B}(s,a)$, discount $\gamma$
                    \ENSURE Robust Q-function $Q^{*,\Omega^{\xi^*}}$
                    
                    \STATE Initialize $Q(s,a) \leftarrow 0$ for all $(s,a)$
                    
                    \FOR{each iteration $k = 0,1,2,\dots$}
                      \FOR{each state $s \in \mathcal{S}$}
                        \FOR{each action $a \in \cal A$}
                          \STATE Compute worst-case expected Q over uncertainty set:
                          \STATE $Q_{\text{new}}(s,a) \leftarrow \min\limits_{P \in \mathcal{B}(s,a)} \sum\limits_{s'} P(s' \mid s,a)\Big[r(s,a,s') + \gamma \max\limits_{a'} Q(s',a')\Big]$
                        \ENDFOR
                      \ENDFOR
                      \STATE Update $Q(s,a) \leftarrow Q_{\text{new}}(s,a)$ for all $(s,a)$
                    \ENDFOR
                    
                    \STATE \textbf{return} $Q$
                \end{algorithmic}
            \end{algorithm}

            This algorithm is guaranteed to converge asymptotically towards $Q^{*,\Omega^{\xi^*}}$ (i.e., the robust Q-value for the optimal policy in the worst-case MDP for this policy given uncertainty sets ${\cal B}(s,a)$ in each pair $s,a$).  
            For completeness, we  recall the proof of convergence of this algorithm below.

            \paragraph{Convergence of Robust Value Iteration.} 
                Consider a discounted MDP with state space $\mathcal{S}$, action space $\cal A$, reward $r(s,a,s')$ and uncertainty sets $\mathcal{B}(s,a)$.
                Define the robust Bellman operator for action-values:
                \[
                (\mathcal{T} Q)(s,a) = \min_{P \in \mathcal{B}(s,a)} \sum_{s'} P(s'|s,a) \big[ r(s,a,s') + \gamma \max_{a'} Q(s',a') \big], \quad \gamma \in [0,1).
                \]

            \begin{proposition}
                $\mathcal{T}$ is a $\gamma$-contraction in the supremum norm $\|\cdot\|_\infty$, hence it admits a unique fixed point $Q^{*,\Omega^{\xi^*}}$, and iterating
                \[
                Q_{k+1} = \mathcal{T} Q_k
                \]
                converges to $Q^{*,\Omega^{\xi^*}}$ with
                \[
                \| Q_k - Q^{*,\Omega^{\xi^*}} \|_\infty \le \gamma^k \| Q_0 - Q^{*,\Omega^{\xi^*}} \|_\infty.
                \]
            \end{proposition}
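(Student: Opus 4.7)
The plan is to establish the $\gamma$-contraction property of $\mathcal{T}$ in $\|\cdot\|_\infty$, after which existence and uniqueness of the fixed point $Q^{*,\Omega^{\xi^*}}$, together with the geometric convergence rate, follow directly from the Banach fixed-point theorem applied on the complete space $(\mathbb{R}^{\mathcal{S}\times\mathcal{A}}, \|\cdot\|_\infty)$.

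First, I would fix two action-value functions $Q_1, Q_2$ and a pair $(s,a)$, and bound $(\mathcal{T}Q_1)(s,a) - (\mathcal{T}Q_2)(s,a)$. The starting point is the elementary inequality
\[
\bigl|\min_{x} f(x) - \min_{x} g(x)\bigr| \;\le\; \sup_{x} |f(x) - g(x)|,
\]
which allows me to couple the two minimizations over $\mathcal{B}(s,a)$. Concretely, for any $\varepsilon > 0$, I pick an $\varepsilon$-optimal $P_1^\varepsilon \in \mathcal{B}(s,a)$ for $\mathcal{T}Q_1$, use the same $P_1^\varepsilon$ as an admissible (but possibly suboptimal) choice inside the expression for $\mathcal{T}Q_2$, and symmetrically, then send $\varepsilon \to 0$.

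Second, under a common kernel $P \in \mathcal{B}(s,a)$, the two bracketed expressions differ only through $\gamma \max_{a'} Q_i(s',a')$. Applying the standard pointwise bound
\[
\bigl|\max_{a'} Q_1(s',a') - \max_{a'} Q_2(s',a')\bigr| \;\le\; \max_{a'}|Q_1(s',a')-Q_2(s',a')| \;\le\; \|Q_1 - Q_2\|_\infty
\]
gives a bound uniform in $s'$. Since $\sum_{s'} P(s'|s,a) = 1$, taking the expectation preserves the bound, yielding $|(\mathcal{T}Q_1)(s,a) - (\mathcal{T}Q_2)(s,a)| \le \gamma \|Q_1 - Q_2\|_\infty$. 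Taking the supremum over $(s,a)$ establishes the contraction, from which Banach's theorem delivers both the unique fixed point $Q^{*,\Omega^{\xi^*}}$ and the bound $\|Q_k - Q^{*,\Omega^{\xi^*}}\|_\infty \le \gamma^k \|Q_0 - Q^{*,\Omega^{\xi^*}}\|_\infty$ by a trivial induction on $k$.

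The only delicate point is the rigorous handling of the inner $\min$: if $\mathcal{B}(s,a)$ is compact (which is typical, e.g., closed bounded polyhedra of stochastic vectors), minimizers are attained and the coupling argument is immediate; otherwise the $\varepsilon$-optimal selection above is needed. Everything else (the max-over-actions bound, preservation under convex combinations, completeness of $(\mathbb{R}^{\mathcal{S}\times\mathcal{A}}, \|\cdot\|_\infty)$ in the finite state-action setting) is routine, so the contraction constant $\gamma$ and the geometric rate come out of the same inequality chain.
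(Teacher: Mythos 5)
Your proposal is correct and follows essentially the same route as the paper: both rest on the inequality $|\min_P f(P)-\min_P g(P)|\le\sup_P|f(P)-g(P)|$ to reduce the robust operator to the fixed-kernel Bellman operator, whose $\gamma$-contraction property (via the max-over-actions bound) then transfers, and Banach's theorem concludes. You merely spell out the $\varepsilon$-optimal-selection and fixed-$P$ contraction details that the paper treats as well known.
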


            \begin{proof} 
                For any fixed transition $P \in \mathcal{B}(s,a)$, define the classical Bellman operator
                \[
                (\mathcal{T}^P Q)(s,a) = \sum_{s'} P(s'|s,a) \big[ r(s,a,s') + \gamma \max_{a'} Q(s',a') \big].
                \]
                It is well-known that $\mathcal{T}^P$ is $\gamma$-contractant in $\|\cdot\|_\infty$, since for any $Q_1, Q_2$:
                \[
                \| \mathcal{T}^P Q_1 - \mathcal{T}^P Q_2 \|_\infty \le \gamma \| Q_1 - Q_2 \|_\infty.
                \]
                
                \noindent Then, we remark that the robust operator is the pointwise minimum over $P \in \mathcal{B}(s,a)$:
                \[
                (\mathcal{T} Q)(s,a) = \min_{P \in \mathcal{B}(s,a)} (\mathcal{T}^P Q)(s,a).
                \]
                For any two functions $Q_1, Q_2$ and for each $(s,a)$, we thus have
                \begin{eqnarray}
                \big| \min_{P} (\mathcal{T}^P Q_1)(s,a) - \min_{P} (\mathcal{T}^P Q_2)(s,a) \big|
                &\le& \max_{P} \big| (\mathcal{T}^P Q_1)(s,a) - (\mathcal{T}^P Q_2)(s,a) \big| \nonumber\\ &\le&  \gamma \| Q_1 - Q_2 \|_\infty \nonumber.
                \end{eqnarray}
                
                \noindent Therefore, $\mathcal{T}$ is also $\gamma$-contractant in $\|\cdot\|_\infty$.
                
                \noindent By the Banach fixed-point theorem, $\mathcal{T}$ has a unique fixed point $Q^{*,\Omega^{\xi^*}}$ and the iterates $Q_k$ converge to $Q^{*,\Omega^{\xi^*}}$ exponentially fast.
            \end{proof}

        \subsubsection{$\alpha$-reward-preserving RVI for dynamics sa-rectangular attacks with known MDP}
            \label{sec:preservingRVI}
            
            In this section, we give an extension of the classical RVI presented in previous section, for our case of $\alpha$-reward-preserving attacks. First, we can remark that the bound $\hat{Q}_\alpha(s,a)= (1-\alpha) Q^{*,\Omega^{\xi^*}}(s,a) + \alpha Q^{*,\Omega}(s,a)$  that is used to define $\alpha$-reward-preserving sets as defined in definition \ref{def_preserv} only uses static components, which can be obtained by an initial application of the classical value iteration algorithm (i.e., for $Q^{*,\Omega}$) and the classical robust value iteration algorithm (i.e., for $Q^{*,\Omega^{\xi^*}}$) respectively. This allows to get thresholds on admissible Q-values during the process. However, explicitly characterizing the uncertainty sets $\Xi_\alpha(s,a) \subseteq \mathcal{B}(s,a)$ using these bounds is intractable, as it would require solving a complete robust planning problem for each candidate perturbed transition model.
            
            Rather, we adopt an incremental process in our proposed algorithm \ref{algo:preservRVI}, that employs a two-timescale stochastic approximation
            \citep{borkar1997stochastic}, in order to craft attacks that lie in the convex core ${\cal B}_\alpha$ of $\Xi_\alpha$ (see section \ref{sec:decomposition}), by progressively tuning admissible magnitudes of attacks throughout the process, starting with all magnitudes $\eta(s,a)$ set to $\eta_{\cal B}$. On the fast timescale, Q-values are updated, with a decreasing rate $c_k$, using a robust Bellman operator restricted 
            to transition kernels within distance $\eta(s,a)$ of the nominal dynamics (given a specific metric $d(.,P)$ regarding nominal dynamics $P$). 
            On the slow timescale, the admissible magnitudes $\eta(s,a)$ are adjusted so as to enforce
            the $\alpha$-reward-preserving constraint defined by $\hat{Q}_\alpha(s,a)$. At each step, every $\eta(s,a)$  is updated with quantity $\beta_k \Delta_{\eta(s,a)}$, with $\beta_k$ a decreasing learning rate and $\Delta_{\eta(s,a)}$ the update direction. In algorithm \ref{algo:preservRVI}, we set $\Delta_{\eta(s,a)}=-1$ if the attack for $(s,a)$ is too strong regarding the threshold $\hat{Q}_\alpha$ (i.e., the result of the robust bellman operator with magnitude $\eta(s,a)$ is lower than $\hat{Q}_\alpha$) and $\Delta_{\eta(s,a)}=-1$ otherwise. 
            
            This separation of timescales allows the Q-iterates to track, almost surely, the fixed point
            of the robust Bellman operator corresponding to quasi-static values of $\eta$,
            while $\eta$ is progressively tuned to identify the largest admissible subset of
            $\mathcal B$ compatible with $\alpha$-reward preservation.

            The stepsizes $(c_k)$ and $(\beta_k)$ are chosen to satisfy the Robbins--Monro conditions
            \citep{robbins1951stochastic}, with an explicit two-timescale separation:
            \[
            \sum_k c_k = \infty, \quad \sum_k c_k^2 < \infty, \qquad
            \sum_k \beta_k = \infty, \quad \sum_k \beta_k^2 < \infty, \qquad
            \frac{\beta_k}{c_k} \to 0.
            \]
            A canonical choice is
            \[
            c_k = k^{-\tau_Q}, \qquad \beta_k = k^{-\tau_\eta},
            \]
            with $1/2 < \tau_Q < \tau_\eta \le 1$.
            Such two-timescale schemes are standard in stochastic approximation and reinforcement learning
            \citep{borkar1997stochastic,konda1999actor}. Under these conditions, the fast-timescale recursion sees $\eta(s,a)$ as quasi-static, 
            and the $Q$-updates track the fixed point of the robust Bellman operator associated with the current admissible radius $\eta(s,a)$.  
            Conversely, the slow-timescale recursion for $\eta(s,a)$ sees the $Q$-values as essentially equilibrated, 
            and adjusts the admissible attack magnitude so as to satisfy the $\alpha$-reward-preservation constraints.

            Because of the nonlinear and nonconvex mapping $\eta \mapsto Q^{*,\eta}$ (arising from the $\arg\min$ over adversarial transitions and the $\max$ over actions), global convergence cannot be guaranteed. However, by classical results on two-timescale stochastic approximation \citep{borkar2008stochastic}, the joint iterates $(Q_k, \eta_k)$ converge to \textit{locally stable equilibria}, corresponding to solutions that satisfy the $\alpha$-reward-preserving constraints.  While we do not claim global optimality, these locally stable solutions ensure that the resulting policy respects the desired fraction of reward preservation and adapts to safer regions of the state-action space.

            \begin{algorithm}[!ht]
                \caption{$\alpha$-reward-preserving Robust Value Iteration with $r(s,a,s')$ and action-value function $Q$}
                \label{algo:preservRVI}
                \begin{algorithmic}[1]
                    \REQUIRE State space $\mathcal{S}$, action space $\cal A$, reward $r(s,a,s')$, nominal MDP $\Omega$,
                    uncertainty set $\mathcal{B}(s,a)$, discount $\gamma$, preservation rate $\alpha$, scheduled magnitude learning rates $\beta_k$, scheduled Q update rates $c_k$.
                    \ENSURE Robust $\alpha$-reward-preserving Q-function $Q^{*,\Omega^{\xi_\alpha^*}}$
                    
                    \STATE Compute $Q^{*,\Omega}$ using classical Value Iteration until convergence
                    \STATE Compute $Q^{*,\Omega^{\xi^*}}$ using Robust Value Iteration until convergence
                    \STATE Define $\alpha$-reward-preserving uncertainty sets $\Xi_\alpha(s,a) \subseteq {\cal B}(s,a)$ implicitly, by computing the thresholds:
                    \STATE $$\hat{Q}_\alpha(s,a)= (1-\alpha) Q^{*,\Omega^{\xi^*}}(s,a) + \alpha Q^{*,\Omega}(s,a)$$
                    \STATE Initialize $Q(s,a) \leftarrow 0$ for all $(s,a)$
                    \STATE Initialize $\eta(s,a) \leftarrow \eta_{\cal B}$ for all $(s,a)$
                    
                    \FOR{each iteration $k = 0,1,2,\dots$}
                      \FOR{each state $s \in \mathcal{S}$}
                        \FOR{each action $a \in \cal A$}
                          \STATE Compute worst-case expected Q over uncertainty sets of radius $\eta(s,a)$:
                          \STATE $Q_{\text{new}}(s,a) \leftarrow
                          \min\limits_{\substack{P^\xi \in {\cal B}(s,a)\\ d(P^\xi,P)\leq \eta(s,a)}}
                          \sum\limits_{s'} P^\xi(s' \mid s,a)\Big[r(s,a,s') + \gamma \max\limits_{a'} Q(s',a')\Big]$
                    
                          \STATE $\Delta_{\eta(s,a)} \leftarrow +1$
                          \IF{$Q_{\text{new}}(s,a) < \hat{Q}_{\alpha}(s,a)$}
                            \STATE $\Delta_{\eta(s,a)} \leftarrow -1$
                          \ENDIF
                          \STATE $\eta(s,a) \leftarrow \eta(s,a) + \beta_k \Delta_{\eta(s,a)}$
                        \ENDFOR
                      \ENDFOR
                      \STATE Update $Q(s,a) \leftarrow Q(s,a) + c_k\big(Q_{\text{new}}(s,a) - Q(s,a)\big)$ for all $(s,a)$
                    \ENDFOR
                    
                    \STATE \textbf{return} $Q$
                \end{algorithmic}
            \end{algorithm}

    \subsection{Robust Value Iteration and $\alpha$-reward-preserving RVI via Sinkhorn in Gridworlds}
        \label{sec:gridworld} 
        
        This section details the experimental setting used to produce figure \ref{fig:preserve_RVI}, in particular the way worst-case attacks are computed over each uncertainty set in our experimental design for dynamics attacks in the tabular setting with known nominal dynamics. The environment is a deterministic discrete Gridworld where the agent navigates from the bottom-left corner to the upper-left corner, receiving a reward $+1$, while avoiding terminal states with reward $-1$ along its path.
        
        We consider a robust Markov Decision Process (RMDP) on this environment, under uncertainty in the transition dynamics, modeled via a Wasserstein ball of radius $\eta_{\cal B}$ around the nominal distribution.

        \paragraph{Classical Robust Minimization.} 
            For each state-action pair $(s,a)$, the robust Q-value is defined by minimizing the expected return over all admissible distributions in the uncertainty set:
            \begin{equation}
                Q_{\text{robust}}(s,a) = \min_{P^\xi \in {\cal B}(s,a)} \sum_{s'} P^\xi(s'|s,a) \big[r(s,a,s') + \gamma \max_{a'} Q(s',a') \big],
            \end{equation}
            subject to
            \begin{equation}
                d(P^\xi,P) \le \eta(s,a),
            \end{equation}
            where $d(P^\xi,P)$ is the Wasserstein-2 distance with squared Euclidean transport costs between successor states.  
            
            While this formulation directly captures the worst-case expectation, it is computationally challenging: multiple distributions may achieve the same minimal value, leading to discontinuities and instability in iterative algorithms.
            \paragraph{Approximate Optimization via Entropic Sinkhorn.} 
            To address this issue, we replace the classical robust minimization with a smooth entropic-regularized optimal transport problem. For each state-action pair $(s,a)$, the worst-case transition distribution is approximated by solving the following optimization over transport plans:
            \begin{equation}
                \begin{aligned}
                    \pi^* = \arg\min_{\pi \ge 0}
                    \;&
                    \sum_{s',s''} \pi(s',s'') \Big[ V' + \omega D(s',s'') \Big]
                    - \lambda \sum_{s',s''} \pi(s',s'') \log \pi(s',s'')
                    \\
                    \text{s.t.}\;&
                    \sum_{s'} \pi(s',s'') = p_0(s''|s,a),
                \end{aligned}
            \end{equation}
            where $\pi(s',s'')$ is the transport plan from $s''$ to $s'$ and 
            $V' = r(s,a,s') + \gamma V(s')$, $p_0(\cdot|s,a)$ is the nominal transition distribution, $D$ is the squared Euclidean distance between successor states, $\lambda>0$ is the entropic regularization parameter, and $\omega = 1/\eta(s,a)$ controls the strength of the transport cost.
            
            The resulting worst-case transition distribution is given by the first marginal of the optimal transport plan:
            \begin{equation}
                p^*(s'|s,a) = \sum_{s''} \pi^*(s',s'').
            \end{equation}
            
            This formulation yields a smooth approximation of the original Wasserstein-robust minimization, avoiding discontinuities caused by multiple equivalent worst-case distributions while still favoring transitions toward low-value successor states.

        \paragraph{Sinkhorn Iterations.}
            The entropic optimal transport problem is solved via the classical Sinkhorn algorithm. Let
            \begin{equation}
                K = \exp\!\left(- \omega D / \lambda \right)
            \end{equation}
            be the entropic transport kernel, and
            \begin{equation}
                w = \exp(- V / \lambda)
            \end{equation}
            encode the influence of the value function on the final distribution.
            
            The algorithm introduces two multiplicative vectors \(u\) and \(v\) to enforce the marginal constraints and incorporate the value weighting:
            \begin{itemize}
                \item \(v\) adjusts the plan to satisfy the fixed nominal marginal \(p_0\) (i.e., the distribution from which the mass originates),
                \item \(u\) adjusts the plan to produce a final marginal weighted by \(V\), yielding the soft-min distribution.
            \end{itemize}
            
            Starting from \(u = \mathbf{1}\) and \(v = \mathbf{1}\), the updates are iterated as
            \begin{align}
                u &\gets \frac{w}{K v + \epsilon}, \\
                v &\gets \frac{p_0}{K^\top u + \epsilon},
            \end{align}
            where \(\epsilon > 0\) prevents division by zero. After convergence, the robust transition distribution is recovered as the first marginal of the transport plan:
            \begin{equation}
                p^* = \frac{u \cdot (K v)}{\sum_{s'} u \cdot (K v)}.
            \end{equation}
            
            Intuitively, the alternating updates of \(u\) and \(v\) ensure that \(p^*\) both respects the nominal distribution \(p_0\) and shifts probability mass toward low-value successor states, while maintaining smoothness due to the entropic regularization.
            
            Finally, this $p^*$ distribution is the transition kernel used in the robust Bellman operators in algorithms \ref{algo:classicalRVI} and \ref{algo:preservRVI}.

    \subsection{Properties of $\alpha$-Reward-Preserving MDPs}
        \label{sec:properties}
        
        \begin{property} \textbf{Reward Structure Preservation} \\
            Suppose that for a sufficiently large uncertainty set ${\cal B}$, $Q^{*, \xi^*}$ is equal to a given constant minimal value $Rmin$ for every state $s \in {\cal S}$ and action $a \in {\cal A}(s)$ (i.e., the worst-case attacks fully destroy the reward signal). In that setting, worst-case  $\alpha$-reward-preserving attacks preserve the structure of the reward: $\forall ((s,a),(s',a')) \in ({\cal S}\times {\cal A})^2: Q^{*,\Omega}(s,a) > Q^{*,\Omega}(s',a') \implies Q^{*,\Omega^{\xi_\alpha^{*}}}(s,a) > Q^{*,\Omega^{\xi_\alpha^{*}}}(s',a')$ 
        \end{property}
        
        \begin{proof}
            Assuming that for a given large uncertainty set ${\cal B}$, the worst case attack fully destroys the reward signal. That is, $Q^{*, \xi^*}(s,a)=Rmin$ for every state $s \in {\cal S}$ and action $a \in {\cal A}(s)$. In that setting, we get for every $s \in {\cal S}$ and $a \in {\cal A}(s)$ that:
            \begin{multline}
                \Xi_\alpha(s,a) := \Big\{ \xi\in\mathcal B(s,a) \;:\;  \\  Q^{*,\Omega^\xi}(s,a) \geq Q^{*,\Omega^{\xi^*}}(s,a) + \alpha \left( Q^{*,\Omega}(s,a) - Q^{*,\Omega^{\xi^*}}(s,a) \right) \Big\}\nonumber, 
            \end{multline}
            can be rewritten as:
            \begin{multline}
                \Xi_\alpha(s,a) := \Big\{ \xi\in\mathcal B(s,a) \;:\;   Q^{*,\Omega^\xi}(s,a)\geq Rmin + \alpha \left( Q^{*,\Omega}(s,a) - Rmin \right) \Big\}\nonumber, 
            \end{multline}
            
            \noindent Let us define for any $(s,a)$: $$\hat{Q}(s,a):=Q^{*,\Omega^{\xi^*}}(s,a) + \alpha \left( Q^{*,\Omega}(s,a) - Q^{*,\Omega^{\xi^*}}(s,a)  \right)$$. 
            
            \noindent Contrary to the general case where we cannot guarantee that there exists an attack $\xi_\alpha$ from $\Xi_\alpha$ that respects $Q^{*,\Omega^\xi}(s,a)=\hat{Q}(s,a)$ for any $(s,a)$, we show below that this is the case when $Q^{*,\Omega^{\xi^*}}(s,a)=R{min}$. 
            
            \noindent In that setting we have for any $(s,a)$: 
            \begin{eqnarray*}
                \hat{Q}(s,a) & = & Q^{*,\Omega^{\xi^*}}(s,a) + \alpha \left( Q^{*,\Omega}(s,a) - Q^{*,\Omega^{\xi^*}}(s,a)  \right) \\
                &=&Rmin + \alpha \left( Q^{*,\Omega}(s,a) - Rmin \right) \\
                &=&(1-\alpha) Rmin + \alpha  Q^{*,\Omega}(s,a)
            \end{eqnarray*}
            
            Thus, we have: $ Q^{*,\Omega} = (\hat{Q}(s,a) - (1-\alpha) Rmin)/\alpha$, and thus, using the fixed point property of the optimal bellman operator for $Q^{*,\Omega}$: 
            \begin{eqnarray*}
                \hat{Q}(s,a) & = & (1-\alpha) Rmin + \alpha  \mathbb{E}_{s' \sim \Omega} \left[R(s,a,s') + \gamma \max_{a'} Q^{*,\Omega}(s',a') \right] \\
                & = & (1-\alpha) Rmin + \alpha  \mathbb{E}_{s' \sim \Omega} \left[R(s,a,s') + \gamma \max_{a'} \frac{(\hat{Q}(s',a') - (1-\alpha) Rmin)}{\alpha}\right] \\
                & = & (1-\alpha) (1-\gamma) Rmin +  \mathbb{E}_{s' \sim \Omega} \left[ \alpha R(s,a,s') + \gamma \max_{a'} \hat{Q}(s',a')\right] \\
                & = & \mathbb{E}_{s' \sim \Omega} \left[ \hat{R}_\alpha(s,a,s') + \gamma \max_{a'} \hat{Q}(s',a')\right] \\
            \end{eqnarray*}
            where $\hat{R}_\alpha(s,a,s'):=\alpha R_\alpha(s,a,s')+(1-\alpha) (1-\gamma) Rmin  $. Thus, the use of an $\alpha$-reward-preserving attack in large ${\cal B}$ comes down to acting in the nominal MDP with rescaled rewards. 
            
            Since $\hat{Q}(s,a)$ is the lower bound of the Q-value for any $\xi \in \Xi_\alpha$, and since 
            it can be reached for all $(s,a)$ using iterative classical bellman updates using rescaled rewards for $\gamma \in [0;1)$, we can state that all  $\xi^*_\alpha \in \Xi_\alpha^{*,*}(s,a)$ respect $Q^{*,\Omega^{\xi_\alpha^*}}(s,a)=\hat{Q}(s,a)$.   
            
            To conclude, we can remark that the reward rescaling $\hat{R}_\alpha$ is the same for any $(s,a)$. Thus,  if for any pair $(s,a)$ and $(s',a')$ we have that $Q^{*,\Omega}(s,a)>Q^{*,\Omega}(s',a')$, we also have  $\hat{Q}(s,a) > \hat{Q}(s',a')$, or equivalently $Q^{*,\Omega^{\xi_\alpha^{*}}}(s,a) > Q^{*,\Omega^{\xi_\alpha^{*}}}(s',a')$. 
            
            For sufficiently large sets ${\cal B}$, $\alpha$-reward-preserving attacks preserve the structure of the reward. There exists an optimal policy for $\Omega^{\xi_\alpha^*}$ that is also the optimal policy for the  nominal MDP $\Omega$.    
        \end{proof}

        \begin{property}\textbf{Condition for Preferred State–Action Change}  
            Consider two state–action pairs $(s,a)\in\mathcal{S}\times{\cal A}(s)$ and $(s',a')\in\mathcal{S}\times{\cal A}(s')$.  
            Assume that, in the nominal MDP $\Omega$, $(s,a)$ is preferred to $(s',a')$, i.e., 
            $d_\Omega((s,a),(s,a')) := Q^{*,\Omega}(s,a) - Q^{*,\Omega}(s',a') > 0$.  
            Under any worst-case $\alpha$-reward-preserving attack $\xi^*_\alpha$ defined for a given  uncertainty set ${\cal B}$, the preference is reversed — namely, $(s',a')$ becomes preferred to $(s,a)$ (i.e., $Q^{*,\Omega^{\xi^*_\alpha}}(s',a') > Q^{*,\Omega^{\xi^*_\alpha}}(s,a)$) — if and only if
            \[
                d_{\Omega^{\xi^*}}((s',a'),(s,a)) \;>\; \frac{\alpha}{1-\alpha}\, d_\Omega((s,a),(s',a')) \;+\;
                \delta((s',a'),(s,a)), 
            \]
            where $d_{\Omega^{\xi^*}}((s',a'),(s,a)):=Q^{*,\Omega^{\xi^*}}(s',a')-Q^{*,\Omega^{\xi^*}}(s,a)$, and $\delta((s',a'),(s,a)):=\frac{(\epsilon_{s',a'}-\epsilon_{s,a})}{1-\alpha}$, with $\epsilon_{s,a}$ the gap between $Q^{*,\Omega^{\xi_\alpha^*}}(s,a)$ and its $\alpha$-reward-preserving lower-bound $\hat{Q}(s,a):=(1-\alpha) Q^{*,\Omega^{\xi_\alpha^*}}(s,a) + \alpha  Q^{*,\Omega}(s,a)$.
            Defining the total-variation diameter of $\mathcal B$ at any $(s,a)$ by $\eta_{\mathcal B}$, standard Lipschitz bounds imply $\delta((s',a'),(s,a)) = \mathcal{O}(\eta_{\mathcal B})$. While $\delta((s',a'),(s,a)) \to 0$ as $\eta_{\mathcal B}\to 0$, the actual variation of $Q^{*,\Omega^{\xi^*}}(s,a)$ can be amplified by local gaps in successor actions, so $\delta$ variations may be dominated by the effective sensitivity of $Q$ in “dangerous” zones, which induce preference changes under $\alpha$-reward-preserving attacks. 
        \end{property}

        \begin{proof}
            By definition of an $\alpha$-reward-preserving attack $\xi_\alpha^*$, we have for any state-action pair $(x,u)$:
            \[
            Q^{*,\Omega^{\xi_\alpha^*}}(x,u) \ge \hat Q(x,u) := (1-\alpha) Q^{*,\Omega^{\xi^*}}(x,u) + \alpha Q^{*,\Omega}(x,u).
            \]
            Define the gap
            \[
            \epsilon_{x,u} := Q^{*,\Omega^{\xi_\alpha^*}}(x,u) - \hat Q(x,u) \ge 0.
            \]
            
            Consider two state-action pairs $(s,a)$ and $(s',a')$. Let
            \[
            \delta((s',a'),(s,a)) := \frac{\epsilon_{s',a'} - \epsilon_{s,a}}{1-\alpha}.
            \]
            
            The preference of $(s',a')$ over $(s,a)$ under $\xi_\alpha^*$ is expressed as
            \[
            Q^{*,\Omega^{\xi_\alpha^*}}(s',a') > Q^{*,\Omega^{\xi_\alpha^*}}(s,a).
            \]
            
            Using the definition of $\epsilon$ and $\hat Q$, we can rewrite this as
            \[
            \hat Q(s',a') + \epsilon_{s',a'} > \hat Q(s,a) + \epsilon_{s,a}.
            \]
            
            Subtracting $\hat Q(s,a)$ from both sides and rearranging terms gives
            \[
            \hat Q(s',a') - \hat Q(s,a) > \epsilon_{s,a} - \epsilon_{s',a'} = - (\epsilon_{s',a'} - \epsilon_{s,a}) = -(1-\alpha)\delta((s',a'),(s,a)).
            \]
            
            By definition of $\hat Q$, we have
            \[
            \hat Q(s',a') - \hat Q(s,a) = (1-\alpha)\big( Q^{*,\Omega^{\xi^*}}(s',a') - Q^{*,\Omega^{\xi^*}}(s,a) \big) + \alpha \big( Q^{*,\Omega}(s',a') - Q^{*,\Omega}(s,a) \big).
            \]
            
            Let $d_\Omega := Q^{*,\Omega}(s,a) - Q^{*,\Omega}(s',a') > 0$. Then
            \[
            Q^{*,\Omega}(s',a') - Q^{*,\Omega}(s,a) = -d_\Omega.
            \]
            
            Plugging this in gives
            \[
            (1-\alpha)\big( Q^{*,\Omega^{\xi^*}}(s',a') - Q^{*,\Omega^{\xi^*}}(s,a) \big) - \alpha d_\Omega > -(1-\alpha) \delta((s',a'),(s,a)).
            \]
            
            Dividing both sides by $(1-\alpha)$ yields the stated condition:
            \[
            d_{\Omega^{\xi^*}}((s',a'),(s,a)) >   \frac{\alpha}{1-\alpha} d_\Omega((s,a),(s',a')) + \delta((s',a'),(s,a)).
            \]
            
            The following of the proof relies on the decomposition of $\epsilon_{s,a}$ relative to the linear lower-bound $\hat Q$:
            \[
            \epsilon_{s,a} = Q^{*,\Omega^{\xi_\alpha^*}}(s,a) - \hat Q(s,a) 
            = (1-\alpha)\big(Q^{*,\Omega^{\xi_\alpha^*}}(s,a) - Q^{*,\Omega^{\xi^*}}(s,a)\big) + \alpha \big(Q^{*,\Omega^{\xi_\alpha^*}}(s,a) - Q^{*,\Omega}(s,a)\big).
            \]
            
            Each term measures the sensitivity of the robust Q-value under $\xi_\alpha^*$ to changes in transitions (or observations) compared to the reference MDPs. Using standard results (e.g., from \citep{wiesemann2013robust}):
            
            \[
            |Q^{*,\Omega^{\xi_\alpha^*}} - Q^{*,\Omega^{\xi^*}}| \le L_1 \eta_\mathcal B, 
            \quad 
            |Q^{*,\Omega^{\xi_\alpha^*}} - Q^{*,\Omega}| \le L_2 \eta_\mathcal B,
            \]
            
            where the constants $L_1,L_2$ depend on $R_{\max}$, $\gamma$, and the propagation of the max over successors. Therefore,
            \[
            |\epsilon_{s,a}| \le (1-\alpha)L_1 \eta_\mathcal B + \alpha L_2 \eta_\mathcal B = \mathcal{O}(\eta_\mathcal B).
            \]
            
            Consequently, 
            \[
            \delta((s',a'),(s,a)) = \frac{\epsilon_{s',a'} - \epsilon_{s,a}}{1-\alpha} = \mathcal{O}(\eta_\mathcal B),
            \]
            and $\delta((s',a'),(s,a)) \to 0$ as $\eta_\mathcal B \to 0$. 
            
            \medskip
            \noindent
            \textbf{Remark:} The bound holds for attacks on transitions (SA-rectangular), on observations, or combined, as long as $\eta_\mathcal B$ correctly measures the total-variation diameter of the perturbed distributions. For non-SA-rectangular transition sets, the same order-of-magnitude bound remains an approximation; correlations between state-action perturbations may amplify the effective variation of $Q$, so $\delta$ can underestimate local sensitivity in “dangerous zones.”
        \end{proof}
        
\medskip
    
    \subsection{Approximated Reward-Preserving Robust Deep RL Algorithms}
    \label{DeepPreserve}
    
    This section presents the complete robust training procedures proposed in this work, for both dynamics and observation attacks, following all the main steps described in section \ref{sec:deep_approx} of the paper, and used for our experiments. 
    
        \begin{algorithm}[H]
            \caption{$\alpha$-Reward-Preserving Training (on dynamics)}
            \label{algo:approx_dynamics}
            \begin{algorithmic}[1]
                \REQUIRE Environment with unknown dynamics $\Omega$; Maximal attack magnitude $\eta_{\cal B}$; Reward preservation rate $\alpha$; Protagonist agent $\pi_{\theta}$; discount $\gamma$; Dynamic Q-network $Q_{\psi_\alpha}$; Static Q-network $Q_{\psi_c}$; Tail of the magnitude distribution $\epsilon$; Capacity of the replay buffer $c$; Number of cycles $nb\_cycles$; Batch size $bsize$; Number of steps $nsteps$; Number of Q updates per cycle $nQiter$; Attack direction crafter $\xi^A_{\tilde{\pi}}$; Polyak update parameters $\tau^{\pi}$ and $\tau^{Q}$; Learning rates $\beta^\pi$ and $\beta^Q$.
                
                \STATE $\tilde{\theta} \leftarrow copy(\theta)$ \hfill \textit{// Reference policy $\tilde{\pi}$}
                \STATE $\hat{\psi}_\alpha \leftarrow copy(\psi_\alpha)$ \hfill \textit{// Target Dynamic Q-network $Q^{\tilde{\pi}}_\alpha$}
                \STATE $\hat{\psi}_c \leftarrow copy(\psi_c)$ \hfill \textit{// Target Static Q-network $Q^{\tilde{\pi}}_c$}
                \STATE Initialize buffer $B \leftarrow \emptyset$
                
                \FOR{each $nb\_cycles$ cycles}
                  \STATE \textit{// Collect $nsteps$ transitions in $\Omega^{\xi_\alpha}$}
                  \STATE $s' \leftarrow$ first state from $\Omega$
                  \FOR{$nsteps$ environment interactions}
                    \STATE $s \leftarrow s'$
                    \STATE Sample $a \sim \pi_\theta(\cdot \mid s)$
                    \STATE \textit{// $\alpha$-reward-preserving set}
                    \STATE Define $\tilde{\mathcal B}^\eta_\alpha(s,a)$ using \eqref{xitilde} with $Q_{\psi_\alpha}$ and $Q_{\psi_c}$
                    \STATE $\eta^*(s,a) \leftarrow \arg\min\limits_{\eta \in \tilde{\mathcal B}^\eta_\alpha(s,a)} Q_\alpha^{\tilde{\pi}}((s,a),\eta)$
                    \STATE $\lambda \leftarrow -\log(\epsilon) / \eta^*(s,a)$
                    \STATE Sample $\eta \sim p^{\tilde{\pi}}_\alpha(\cdot \mid s,a) \propto \lambda e^{-\lambda \eta}$
                    \STATE Craft attack direction $A \leftarrow \xi_{\tilde{\pi}}^A(s,a)$ for $\tilde{\pi}$
                    \STATE $(s', r, done) \leftarrow \Omega^{\xi=(\eta,A)}(s,a)$ \hfill \textit{// Perform adversarial step}
                    \STATE $B \leftarrow B \cup \{(s,a,\eta,A,r,s', \pi_{\theta}(a\mid s))\}$
                  \ENDFOR
                
                  \STATE Improve $\pi_\theta$ on new transitions using any RL algorithm (e.g., SAC)
                
                  \FOR{$nQiter$ iterations}
                    \STATE Sample a batch $\{(s_i,a_i,\eta_i,A_i,r_i,s'_i,p_i)\}_{i=1}^{bsize}$ from $B$
                    \STATE Sample next actions $\{a'_i \sim \pi_{\tilde{\theta}}(\cdot \mid s_i)\}_{i=1}^{bsize}$
                    \STATE $\{w_i\}_{i=1}^{bsize} \leftarrow \{\dfrac{\pi_{\tilde{\theta}}(a_i|s_i)}{p_i}\}_{i=1}^{bsize}$ \hfill \textit{// Importance Sampling Weights}
                    \STATE $\{\delta^\alpha_i\}_{i=1}^{bsize} \leftarrow \{Q_{\psi_\alpha}((s_i,a_i),\eta_i) - r_i -\gamma  \mathbb{E}_{\eta'_i}\left[ Q_{\tilde{\psi}_\alpha}((s'_i,a'_i),\eta'_i)\right]\}_{i=1}^{bsize}$
                    \STATE $\{\delta^c_i\}_{i=1}^{bsize} \leftarrow \{Q_{\psi_c}((s_i,a_i),\eta_i) - r_i -\gamma   Q_{\tilde{\psi}_c}((s'_i,a'_i),\eta_i)\}_{i=1}^{bsize}$
                    \STATE $\psi_\alpha \leftarrow \psi_\alpha - \beta^Q  \sum_{i=1}^{bsize} \nabla_{\psi_\alpha} w_i (\delta_i^\alpha)^2$
                    \STATE $\psi_c \leftarrow \psi_c - \beta^Q  \sum_{i=1}^{bsize} \nabla_{\psi_c} w_i (\delta_i^c)^2$
                  \ENDFOR
                  \STATE $\tilde{\theta} \leftarrow (1 - \tau^{\pi})\,\tilde{\theta} + \tau^{\pi}\,\theta$ \hfill \textit{// Polyak update of $\tilde{\pi}$}
                  \STATE $\hat{\psi}_\alpha \leftarrow (1 - \tau^{Q})\,\hat{\psi}_\alpha + \tau^{Q}\,\psi_\alpha$ \hfill \textit{// Polyak update of $Q^{\tilde{\pi}}_\alpha$}
                  \STATE $\hat{\psi}_c \leftarrow (1 - \tau^{Q})\,\hat{\psi}_c + \tau^{Q}\,\psi_c$ \hfill \textit{// Polyak update of $Q^{\tilde{\pi}}_c$}
                \ENDFOR
                
                \STATE \textbf{return} $Q$
            \end{algorithmic}
        \end{algorithm}

        \begin{algorithm}[H]
            \caption{$\alpha$-Reward-Preserving Training (on observations)}
            \label{algo:approx_observations}
            \begin{algorithmic}[1]
                \REQUIRE Environment with unknown dynamics $\Omega$; Maximal attack magnitude $\eta_{\cal B}$; Reward preservation rate $\alpha$; Protagonist agent $\pi_{\theta}$; discount $\gamma$; Dynamic Q-network $Q_{\psi_\alpha}$; Static Q-network $Q_{\psi_c}$; Tail of the magnitude distribution $\epsilon$; Capacity of the replay buffer $c$; Number of cycles $nb\_cycles$; Batch size $bsize$; Number of steps $nsteps$; Number of Q updates per cycle $nQiter$; Attack direction crafter $\xi^A_{\tilde{\pi}}$; Polyak update parameters $\tau^{\pi}$ and $\tau^{Q}$; Learning rates $\beta^\pi$ and $\beta^Q$.
                
                \STATE $\tilde{\theta} \leftarrow copy(\theta)$ \hfill \textit{// Reference policy $\tilde{\pi}$}
                \STATE $\hat{\psi}_\alpha \leftarrow copy(\psi_\alpha)$ \hfill \textit{// Target Dynamic Q-network $Q^{\tilde{\pi}}_\alpha$}
                \STATE $\hat{\psi}_c \leftarrow copy(\psi_c)$ \hfill \textit{// Target Static Q-network $Q^{\tilde{\pi}}_c$}
                \STATE Initialize buffer $B \leftarrow \emptyset$
                
                \FOR{each $nb\_cycles$ cycles}
                  \STATE \textit{// Collect $nsteps$ transitions in $\Omega^{\xi_\alpha}$}
                  \STATE $s' \leftarrow$ first state from $\Omega$
                  \FOR{$nsteps$ environment interactions}
                    \STATE $s \leftarrow s'$
                    \STATE \textit{// $\alpha$-reward-preserving set}
                    \STATE Define $\tilde{\mathcal B}^\eta_\alpha(s)$ using \eqref{xitilde} with $Q_{\psi_\alpha}$ and $Q_{\psi_c}$
                    \STATE $\eta^*(s) \leftarrow \arg\min\limits_{\eta \in \tilde{\mathcal B}^\eta_\alpha(s)} Q_\alpha^{\tilde{\pi}}((s,a),\eta)$
                    \STATE $\lambda \leftarrow -\log(\epsilon) / \eta^*(s)$
                    \STATE Sample $\eta \sim p^{\tilde{\pi}}_\alpha(\cdot \mid s, a) \propto \lambda e^{-\lambda \eta}$
                    \STATE Craft attack direction $A \leftarrow \xi_{\tilde{\pi}}^A(s,a)$ for $\tilde{\pi}$
                    \STATE Sample $a \sim \pi_\theta(\cdot \mid \phi(s) + \eta A)$
                    \STATE $s', r, done \leftarrow \Omega^{\xi=(\eta,A)}(s)$ \hfill \textit{// Perform adversarial step}
                    \STATE $B \leftarrow B \cup \{(s,a,\eta,A,r,s', \pi_{\theta}(a\mid \phi(s)+\eta A))\}$
                  \ENDFOR
                
                  \STATE Improve $\pi_\theta$ on new transitions using any RL algorithm (e.g., SAC)
                
                  \FOR{$nQiter$ iterations}
                    \STATE Sample a batch $\{(s_i,a_i,\eta_i,A_i,r_i,s'_i,p_i)\}_{i=1}^{bsize}$ from $B$
    
                    \STATE $\{w_i\}_{i=1}^{bsize} \leftarrow \{\dfrac{\pi_{\tilde{\theta}}(a_i|\phi(s_i)+\eta_i A_i)}{p_i}\}_{i=1}^{bsize}$ \hfill \textit{// Importance Sampling Weights}
                    \STATE $\{\delta^\alpha_i\}_{i=1}^{bsize} \leftarrow \{Q_{\psi_\alpha}(s_i,\eta_i) - r_i -\gamma  \mathbb{E}_{\eta'_i}\left[ Q_{\tilde{\psi}_\alpha}(s'_i,\eta'_i)\right]\}_{i=1}^{bsize}$
                    \STATE $\{\delta^c_i\}_{i=1}^{bsize} \leftarrow \{Q_{\psi_c}(s_i,\eta_i) - r_i -\gamma   Q_{\tilde{\psi}_c}(s'_i,\eta_i)\}_{i=1}^{bsize}$
                    \STATE $\psi_\alpha \leftarrow \psi_\alpha - \beta^Q  \sum_{i=1}^{bsize} \nabla_{\psi_\alpha} w_i (\delta_i^\alpha)^2$
                    \STATE $\psi_c \leftarrow \psi_c - \beta^Q  \sum_{i=1}^{bsize} \nabla_{\psi_c} w_i (\delta_i^c)^2$
                  \ENDFOR
                
                  \STATE $\tilde{\theta} \leftarrow (1 - \tau^{\pi})\,\tilde{\theta} + \tau^{\pi}\,\theta$ \hfill \textit{// Polyak update of $\tilde{\pi}$}
                  \STATE $\hat{\psi}_\alpha \leftarrow (1 - \tau^{Q})\,\hat{\psi}_\alpha + \tau^{Q}\,\psi_\alpha$ \hfill \textit{// Polyak update of $Q^{\tilde{\pi}}_\alpha$}
                  \STATE $\hat{\psi}_c \leftarrow (1 - \tau^{Q})\,\hat{\psi}_c + \tau^{Q}\,\psi_c$ \hfill \textit{// Polyak update of $Q^{\tilde{\pi}}_c$}
                \ENDFOR
                
                \STATE \textbf{return} $Q$
            \end{algorithmic}
        \end{algorithm}

\end{document}